\documentclass[letterpaper]{article} 
\usepackage{aaai23}  
\usepackage{times}  
\usepackage{helvet}  
\usepackage{courier}  
\usepackage[hyphens]{url}  
\usepackage{graphicx} 
\urlstyle{rm} 
\usepackage{natbib}  
\usepackage{caption} 
\frenchspacing  
\setlength{\pdfpagewidth}{8.5in}  
\setlength{\pdfpageheight}{11in}  
%
\usepackage{algorithm}
\usepackage{algorithmic}

%
\usepackage{newfloat}
\usepackage{listings}
\DeclareCaptionStyle{ruled}{labelfont=normalfont,labelsep=colon,strut=off} 
\lstset{%
	basicstyle={\footnotesize\ttfamily},
	numbers=left,numberstyle=\footnotesize,xleftmargin=2em,
	aboveskip=0pt,belowskip=0pt,%
	showstringspaces=false,tabsize=2,breaklines=true}
\floatstyle{ruled}
\newfloat{listing}{tb}{lst}{}
\floatname{listing}{Listing}
%
\pdfinfo{
/TemplateVersion (2023.1)
}

\usepackage{amsmath, amssymb, amsthm}
\usepackage{enumerate}
\usepackage{subfig}
\usepackage{booktabs, multirow}
\newcommand{\argmin}{\operatornamewithlimits{argmin}}
\newcommand{\argmax}{\operatornamewithlimits{argmax}}
\newcommand{\diag}{\operatorname{diag}}
\newcommand{\KL}{\operatorname{KL}}
\newcommand{\tr}{\operatorname{tr}}
\newtheorem{theorem}{Theorem}
\newtheorem{lemma}{Lemma}
\newtheorem{remark}{Remark}
\newtheorem{assumption}{Assumption}
\graphicspath{{figures/}}
\usepackage{soul}
\usepackage[colorlinks,citecolor=green,linkcolor=red]{hyperref}

\setcounter{secnumdepth}{2} 

%


\title{Scalable Bayesian Meta-Learning through Generalized Implicit Gradients}
\author{
    Yilang Zhang, 
    Bingcong Li,
    Shijian Gao,
    Georgios B. Giannakis
}
\affiliations{
    Dept. of ECE, University of Minnesota, Minneapolis, MN, USA


    \{zhan7453,lixx5599,gao00379,georgios\}@umn.edu
%
}

\begin{document}

\maketitle

\begin{abstract}
Meta-learning owns unique effectiveness and swiftness in tackling emerging tasks with limited data. Its broad applicability is revealed by viewing it as a bi-level optimization problem. The resultant algorithmic viewpoint however, faces scalability issues when the inner-level optimization relies on gradient-based iterations. Implicit differentiation has been considered to alleviate this challenge, but it is restricted to an isotropic Gaussian prior, and only favors \emph{deterministic} meta-learning approaches. This work markedly mitigates the scalability bottleneck by cross-fertilizing the benefits of implicit differentiation to \emph{probabilistic} Bayesian meta-learning. The novel implicit Bayesian meta-learning (iBaML) method not only broadens the scope of learnable priors, but also quantifies the associated uncertainty. Furthermore, the ultimate complexity is well controlled regardless of the inner-level optimization trajectory. Analytical error bounds are established to demonstrate the precision and efficiency of the generalized implicit gradient over the explicit one. Extensive numerical tests are also carried out to empirically validate the performance of the proposed method.
\end{abstract}

\section{Introduction}
Over the past decade, deep learning (DL) has garnered huge attention from theory, 
algorithms, and application viewpoints. The underlying success of DL is mainly 
attributed to the massive datasets, with which large-scale and highly expressive 
models can be trained. On the other hand, the stimulus of DL, namely data, can 
be scarce. Nevertheless, in several real-world tasks, such as object recognition and concept comprehension, humans can perform exceptionally well even with very few data samples. This prompts the natural question: \textit{How can we endow DL with human's unique intelligence?} By doing so, 
DL's data reliance can be alleviated and the subsequent model training can be 
streamlined. Several trials have been emerging in those ``stimulus-lacking'' 
domains, including speech recognition~\cite{app-speech}, medical 
imaging~\cite{app-med}, and robot manipulation~\cite{app-robot}. 

A systematic framework has been explored in recent years to address the aforementioned question, under the terms \textit{learning-to-learn} or \textit{meta-learning}~\cite{lifelong-learning}. In brief, meta-learning extracts task-invariant prior information from a given family of correlated (and thus informative)  tasks. Domain-generic knowledge can therein be acquired as an inductive bias and transferred to new tasks outside the set of given ones~\cite{learning-to-learn, LLAMA}, making it feasible to learn unknown models/tasks even with minimal training samples. One representative example is that of an edge extractor, which can act as a common prior owing to its presence across natural images. Thus, using it can prune degrees of freedom from a number of image classification models. The prior extraction in conventional meta-learning is more of a hand-crafted art; see e.g.,~\cite{convention-1, convention-2, convention-3}. This rather ``cumbersome art'' has been gradually replaced by data-driven approaches. For parametric models of the task-learning process~\cite{memory-aug, neural-attentive}, the task-invariant ``sub-model'' can then be shared across different tasks with prior information embedded in the model weights. One typical model is that of recurrent neural networks (RNNs), where task-learning is captured by recurrent cells. However, the resultant black-box learning setup faces interpretability challenges.

As an alternative to model-committed approaches, model-agnostic meta-learning (MAML) transforms task-learning to optimizing the task-specific model parameters, while the prior amounts to initial parameters per task-level optimization, that are shared across tasks and can be learned through differentiable meta-level optimization~\cite{MAML}. Building upon MAML,  optimization-based meta-learning has been advocated to ameliorate its performance; see e.g.~\cite{metaSGD, R2D2, WarpGrad, sharp-MAML}. In addition, performance analyses have been reported to better understand the behavior of these optimization-based algorithms~\cite{bilevel-programming, convergence-MAML, global-convergence, is-Bayesian-better}. 

Interestingly, the learned initialization can be approximately viewed as the mean of an implicit Gaussian prior over the task-specific parameters~\cite{LLAMA}. Inspired by this interpretation, Bayesian methods have been advocated for meta-learning to further allow for uncertainty quantification in the model parameters. Different from its deterministic counterpart, Bayesian meta-learning seeks a prior distribution over the model parameters that best explains the data. Exact Bayesian inference however, is barely tractable as the posterior is often non-Gaussian, which prompts pursuing approximate inference methods; see e.g.,~\cite{BMAML, LLAMA, PLATIPUS, ABML}. 

MAML and its variants have appealing empirical performance, but optimizing the meta-learning loss with backpropagation is challenging due to the high-order derivatives involved. This incurs complexity that grows linearly with the number of task-level optimization steps, which renders the corresponding algorithms barely scalable. For this reason, scalability of meta-learning algorithms is of paramount importance. One remedy is to simply ignore the high-order derivatives, and rely on first-order updates only~\cite{MAML,Reptile}. Alternatively, the so-termed implicit (i)MAML relies on implicit differentiation to eliminate the explicit backpropagation. However, the proximal regularization term in iMAML is confined to be a simple isotropic Gaussian prior, which limits model expressiveness~\cite{iMAML}. 

In this paper, we develop a novel implicit Bayesian meta-learning (iBaML) approach that offers the desirable scalability, expressiveness, and performance quantification, and thus broadens the scope and appeal of meta-learning to real application domains. The contribution is threefold. 
\begin{enumerate}[i)]
\item iBaML enjoys complexity that is invariant to the number $K$ of gradient steps in task-level optimization. This fundamentally breaks the complexity-accuracy tradeoff, and makes Bayesian meta-learning affordable with more sophisticated task-level optimization algorithms. 
\item Rather than an isotropic Gaussian distribution, iBaML allows for learning more expressive priors. As a Bayesian approach, iBaML can quantify uncertainty of the estimated model parameters. 
\item Through both analytical and numerical performance studies, iBaML showcases its complexity and accuracy merits over the state-of-the-art Bayesian meta-learning methods. In a large $K$ regime, the time and space complexity can be reduced even by an order of magnitude. 
\end{enumerate}

\section{Preliminaries and problem statement}
This section outlines the meta-learning formulation in the context of supervised few-shot learning, and touches upon the associated scalability issues.

\subsection{Meta-learning setups}

Suppose we are given datasets $\mathcal{D}_t := \{ (\mathbf{x}_t^n, y_t^n) \}_{n=1}^{N_t}$, each of cardinality $|\mathcal{D}_t| = N_t$ corresponding to a task indexed by $t \in  \{ 1 ,\ldots, T \}$, where $\mathbf{x}_t^n$ is an input vector, and $y_t^n \in \mathbb{R}$ denotes its label. Set $\mathcal{D}_t$ is disjointly partitioned into a training set $\mathcal{D}_t^{\mathrm{tr}}$ and a validation set $\mathcal{D}_t^{\mathrm{val}}$, with $| \mathcal{D}_t^{\mathrm{tr}} | = N_t^{\mathrm{tr}}$ and $| \mathcal{D}_t^{\mathrm{val}} | = N_t^{\mathrm{val}}$ for $\forall t$. Typically, $N_t$ is limited, and often much smaller than what is required by supervised DL tasks. However, it is worth stressing that the number of tasks $T$ can be considerably large. Thus, $\sum_{t=1}^T N_t$ can be sufficiently large for learning a prior parameter vector shared by all tasks; e.g., using deep neural networks. 

A key attribute of meta-learning is to estimate such a task-invariant prior information parameterized by the meta-parameter $\boldsymbol{\theta}$ based on training data \emph{across} tasks. Subsequently, $\boldsymbol{\theta}$ and $\mathcal{D}_t^{\mathrm{tr}}$ are used to perform task- or \emph{inner-level optimization} to obtain the task-specific parameter $\boldsymbol{\theta}_t \in \mathbb{R}^d$. The estimate of $\boldsymbol{\theta}_t$ is then evaluated on $\mathcal{D}_t^{\mathrm{val}}$ (and potentially also $\mathcal{D}_t^{\mathrm{tr}}$) to produce a validation loss. Upon minimizing this loss summed over all the training tasks w.r.t. $\boldsymbol{\theta}$, this meta- or \emph{outer-level optimization} yields the task-invariant estimate of $\boldsymbol{\theta}$. Note that the dimension of $\boldsymbol{\theta}_t$ is not necessarily identical to that of $\boldsymbol{\theta}$; see e.g.~\cite{metaSGD, R2D2, MetaOptNet}. As we will see shortly, this nested structure can be formulated as a bi-level optimization problem. This formulation readily suggests application of meta-learning to settings such as hyperparameter tuning that also relies on a similar bi-level optimization~\cite{bilevel-programming}.

This bi-level optimization is outlined next for both deterministic and probabilistic Bayesian meta-learning variants. 

\paragraph{Optimization-based meta-learning.}
For each task $t$, let $\check{\mathcal{L}}_t^{\mathrm{tr}}(\boldsymbol{\theta}_t)$ and $\check{\mathcal{L}}_t^{\mathrm{val}} (\boldsymbol{\theta}_t)$ denote the losses over $\mathcal{D}^{\mathrm{tr}}_t$ and $\mathcal{D}^{\mathrm{val}}_t$, respectively. Further, let $\hat{\boldsymbol{\theta}}$ be the meta-parameter estimate, and $\mathcal{R} (\hat{\boldsymbol{\theta}}, \boldsymbol{\theta}_t)$ the regularizer of the learning cost per task $t$. Optimization-based meta-learning boils down to 
\begin{align}
\label{eq:opt-based-global-min}
\hat{\boldsymbol{\theta}} = 
&\argmin_{\boldsymbol{\theta}} ~\sum_{t=1}^T \check{\mathcal{L}}_t^{\mathrm{val}} (\hat{\boldsymbol{\theta}}_t (\boldsymbol{\theta})) \\
&\hspace*{-0.4cm}\mathrm{s.to}~~ \hat{\boldsymbol{\theta}}_t (\boldsymbol{\theta}) = \argmin_{\boldsymbol{\theta}_t} \check{\mathcal{L}}_t^{\mathrm{tr}} (\boldsymbol{\theta}_t) + \mathcal{R} (\boldsymbol{\theta}, \boldsymbol{\theta}_t), ~t=1,\ldots,T.\nonumber
\end{align}
The regularizer $\mathcal{R}$ can be either implicit (as in iMAML) or explicit (as in MAML).
Further, the task-invariant meta-parameter is calibrated by $\mathcal{R}$ in order to cope with overfitting. Indeed, an over-parameterized neural network could easily overfit $\mathcal{D}^{\mathrm{tr}}_t$ to produce a tiny $\check{\mathcal{L}}_t^{\mathrm{tr}}$ yet a large $\check{\mathcal{L}}_t^{\mathrm{val}}$. 

As reaching global minima can be infeasible especially with highly nonconvex neural networks, a practical alternative is an estimator $\hat{\boldsymbol{\theta}}_t$ produced by a function $\hat{\mathcal{A}}_t (\boldsymbol{\theta})$ representing an optimization algorithm, such as gradient descent (GD), with a prefixed number $K$ of iterations. Thus, a tractable version of \eqref{eq:opt-based-global-min} is 
\begin{align}
\label{eq:opt-based-subopt}
\hat{\boldsymbol{\theta}} = 
&\argmin_{\boldsymbol{\theta}} ~\sum_{t=1}^T \check{\mathcal{L}}_t^{\mathrm{val}} (\hat{\boldsymbol{\theta}}_t (\boldsymbol{\theta})) \\
&\hspace*{-0.4cm}\mathrm{s.to}~~ \hat{\boldsymbol{\theta}}_t (\boldsymbol{\theta}) = \hat{\mathcal{A}}_t (\boldsymbol{\theta}), ~~t=1,\ldots,T \nonumber
\end{align}
As an example, $\hat{\mathcal{A}}_t$ can be an one-step gradient descent initialized by $\hat{\boldsymbol{\theta}}$ with implicit priors ($\mathcal{R} (\hat{\boldsymbol{\theta}},\boldsymbol{\theta}_t) = 0$)~\cite{MAML, LLAMA}, which yields the per task parameter estimate
\begin{equation}
\label{eq:ones-step-GD}
	\hat{\boldsymbol{\theta}}_t = \hat{\mathcal{A}}_t (\boldsymbol{\theta})  = \boldsymbol{\theta} - \alpha \nabla \check{\mathcal{L}}_t^{\mathrm{tr}} (\boldsymbol{\theta}), ~~t=1,\ldots,T
\end{equation}
where $\alpha$ is the learning rate of GD, and we use the compact gradient notation 
$\nabla \check{\mathcal{L}}_t^{\mathrm{tr}} (\boldsymbol{\theta})
:= 
\nabla_{\boldsymbol{\theta}_t} \check{\mathcal{L}}_t^{\mathrm{tr}} (\boldsymbol{\theta}_t)
\big|_{\boldsymbol{\theta}_t = \boldsymbol{\theta}}$ hereafter. For later use, we also define $\mathcal{A}_t^{*}$ the (unknown) oracle function that generates the global optimum $\boldsymbol{\theta}_t^*$. 

\paragraph{Bayesian meta-learning.} The probabilistic approach to meta-learning takes a Bayesian view of the (now random) vector $\boldsymbol{\theta}_t$ per task $t$. The task-invariant vector $\boldsymbol{\theta}$ is still deterministic, and parameterizes the prior probability density function (pdf) $p(\boldsymbol{\theta}_t; \boldsymbol{\theta})$.
Task-specific learning seeks the posterior pdf $p(\boldsymbol{\theta}_t | \mathbf{y}_t^{\mathrm{tr}}; \mathbf{X}_t^{\mathrm{tr}}, \boldsymbol{\theta} )$, where $\mathbf{X}_t^{\mathrm{tr}} := [\mathbf{x}_t^1, \ldots, \mathbf{x}_t^{N_t^{\mathrm{tr}}}]$ and $\mathbf{y}_t^{\mathrm{tr}} := [y_t^1, \ldots, y_t^{N_t^{\mathrm{tr}}}]^\top$ ($^\top$ denotes transposition), while the objective per task $t$ is to maximize the conditional likelihood $p(\mathbf{y}_t^{\mathrm{val}} | \mathbf{y}_t^{\mathrm{tr}} ; \mathbf{X}_t^{\mathrm{val}}, \mathbf{X}_t^{\mathrm{tr}}, \boldsymbol{\theta})  = \int p(\mathbf{y}_t^{\mathrm{val}} | \boldsymbol{\theta}_t; \mathbf{X}_t^{\mathrm{val}}) p(\boldsymbol{\theta}_t | \mathbf{y}_t^{\mathrm{tr}}; \mathbf{X}_t^{\mathrm{tr}}, \boldsymbol{\theta}) d\boldsymbol{\theta}_t$. Along similar lines followed by its deterministic optimization-based counterpart, Bayesian meta-learning amounts to
\begin{align}
\label{eq:Bayesian-global-min}
	& \hat{\boldsymbol{\theta}}  =\argmax_{\boldsymbol{\theta}} ~\prod_{t=1}^T \int p(\mathbf{y}_t^{\mathrm{val}} | \boldsymbol{\theta}_t; \mathbf{X}_t^{\mathrm{val}}) p(\boldsymbol{\theta}_t | \mathbf{y}_t^{\mathrm{tr}}; \mathbf{X}_t^{\mathrm{tr}}, \boldsymbol{\theta} ) d\boldsymbol{\theta}_t \nonumber \\
	&\mathrm{s.to}~~~ p(\boldsymbol{\theta}_t | \mathbf{y}_t^{\mathrm{tr}}; \mathbf{X}_t^{\mathrm{tr}}, \boldsymbol{\theta} ) \propto p(\mathbf{y}_t^{\mathrm{tr}} | \boldsymbol{\theta}_t; \mathbf{X}_t^{\mathrm{tr}}) p(\boldsymbol{\theta}_t; \boldsymbol{\theta}), ~\forall t 
\end{align}
where we used that datasets are independent across tasks, and Bayes' rule in the second line. Through the posterior $p(\boldsymbol{\theta}_t | \mathbf{y}_t^{\mathrm{tr}}; \mathbf{X}_t^{\mathrm{tr}}, \boldsymbol{\theta} )$, Bayesian meta-learning quantifies the uncertainty of task-specific parameter estimate $\hat{\boldsymbol{\theta}}_t$, thus assessing model robustness. When the posterior of $\boldsymbol{\theta}_t$ is replaced by its maximum a posteriori point estimator $\hat{\boldsymbol{\theta}}_t^{\rm map}$, meaning $p(\boldsymbol{\theta}_t | \mathbf{y}_t^{\mathrm{tr}}; \mathbf{X}_t^{\mathrm{tr}}, \boldsymbol{\theta} ) = \delta_D [\boldsymbol{\theta}_t - \hat{\boldsymbol{\theta}}_t^{\rm map}]$ with $\delta_D$ denoting Dirac's delta, it turns out that~\eqref{eq:Bayesian-global-min} reduces to~\eqref{eq:opt-based-global-min}.

Unfortunately, the posterior in~\eqref{eq:Bayesian-global-min} can be intractable with nonlinear models due to the difficulty of finding analytical solutions. To overcome this, we can resort to the widely adopted approximate variational inference (VI); see e.g.~\cite{PLATIPUS, ABML, VAMPIRE}. VI searches over a family of tractable distributions for a surrogate that best matches the true posterior $p(\boldsymbol{\theta}_t | \mathbf{y}_t^{\mathrm{tr}}; \mathbf{X}_t^{\mathrm{tr}}, \boldsymbol{\theta} )$. This can be accomplished by minimizing the KL-divergence between the surrogate pdf $q(\boldsymbol{\theta}_t ; \mathbf{v}_t)$ and the true one, where $\mathbf{v}_t$ determines the variational distribution. Considering that the dimension of $\boldsymbol{\theta}_t$ can be fairly high, both the prior and surrogate posterior are often set to be Gaussian ($\cal N$) with diagonal covariance matrices. Specifically, we select the prior as $p(\boldsymbol{\theta}_t; \boldsymbol{\theta}) = \mathcal{N} (\mathbf{m}, \mathbf{D})$ with covariance $\mathbf{D} = \diag(\mathbf{d})$ and $\boldsymbol{\theta} := [\mathbf{m}^\top,\mathbf{d}^\top]^\top \in \mathbb{R}^d \times \mathbb{R}_{>0}^d$, and the surrogate posterior as $q(\boldsymbol{\theta}_t ; \mathbf{v}_t) = \mathcal{N} (\mathbf{m}_t, \mathbf{D}_t)$ with $\mathbf{D}_t = \diag(\mathbf{d}_t)$ and $\mathbf{v}_t := [\mathbf{m}_t^\top, \mathbf{d}_t^\top]^\top \in \mathbb{R}^d \times \mathbb{R}_{>0}^d$. 

To ensure tractable numerical integration over $q(\boldsymbol{\theta}_t ; \mathbf{v}_t)$, the meta-learning loss is often relaxed to an upper bound of $\sum_{t=1}^T -\log p(\mathbf{y}_t^{\mathrm{val}} | \mathbf{y}_t^{\mathrm{tr}} ; \mathbf{X}_t^{\mathrm{val}}, \mathbf{X}_t^{\mathrm{tr}}, \boldsymbol{\theta} )$. Common choices include applying Jensen’s inequality~\cite{VAMPIRE} or an extra VI~\cite{PLATIPUS, ABML} on~\eqref{eq:Bayesian-global-min}. For notational convenience, here we will denote this upper bound by $\mathcal{L}_t^{\mathrm{val}}(\mathbf{v}_t, \boldsymbol{\theta} )$. With VI and a relaxed (upper bound) objective, \eqref{eq:Bayesian-global-min} becomes 
\begin{align}
\label{eq:Bayesian-meta-VI}
& \hat{\boldsymbol{\theta}} = \argmin_{\boldsymbol{\theta}} ~ \sum_{t=1}^T \mathcal{L}_t^{\mathrm{val}} (\mathbf{v}_t^* (\boldsymbol{\theta}), \boldsymbol{\theta} ) \\
	&\mathrm{s.to}~ \mathbf{v}_t^* (\boldsymbol{\theta}) = 
\argmin_{\mathbf{v}_t} \KL \big( q(\boldsymbol{\theta}_t ; \mathbf{v}_t) \big\| p(\boldsymbol{\theta}_t | \mathbf{y}_t^{\mathrm{tr}}; \mathbf{X}_t^{\mathrm{tr}}, \boldsymbol{\theta} ) \big) ~\forall t, \nonumber
\end{align}
where $\mathcal{L}_t^{\mathrm{val}}$ depends on $\boldsymbol{\theta}$ in two ways: i) via the intermediate variable $\mathbf{v}_t^*$; and, ii) by acting directly on $\mathcal{L}_t^{\mathrm{val}}$. Note that \eqref{eq:Bayesian-meta-VI} is general enough to cover the case where $\mathcal{L}_t^{\mathrm{val}}$ is constructed using both $\mathcal{D}_t^{\mathrm{val}}$ and $\mathcal{D}_t^{\mathrm{tr}}$; see e.g.,~\cite{ABML}. 
Similar to optimization-based meta-learning, the difficulty in reaching global optima prompts one to substitute $\mathbf{v}_t^*$ with a sub-optimum $\hat{\mathbf{v}}_t$ obtained through an algorithm $\hat{\mathcal{A}}_t (\boldsymbol{\theta})$; i.e.,
\begin{align}
\label{eq:Bayesian-subopt}
& \hat{\boldsymbol{\theta}} = \argmin_{\boldsymbol{\theta}} ~ \sum_{t=1}^T \mathcal{L}_t^{\mathrm{val}} (\hat{\mathbf{v}}_t (\boldsymbol{\theta}), \boldsymbol{\theta} ) \nonumber \\
&\mathrm{s.to}~~~ \hat{\mathbf{v}}_t (\boldsymbol{\theta}) = \hat{\mathcal{A}}_t (\boldsymbol{\theta}), ~~~~t=1,\ldots,T.
\end{align}

\subsection{Scalability issues in meta-learning}
Delay and memory resources required for solving \eqref{eq:opt-based-subopt} and \eqref{eq:Bayesian-subopt} are arguably the major challenges that meta-learning faces. 
Here we will elaborate on these challenges in the optimization-based setup, but the same argument carries over to Bayesian meta-learning too. 

Consider minimizing the meta-learning loss in~\eqref{eq:opt-based-subopt} using gradient-based iteration such as Adam~\cite{Adam}. In the $(r+1)$-st iteration, 
gradients must be computed for a batch $\mathcal{B}^r \subset \{ 1, \ldots, T \}$ of tasks. Letting $\hat{\boldsymbol{\theta}}_t^r := \hat{\mathcal{A}}_t ( \hat{\boldsymbol{\theta}}^r )$, where $\hat{\boldsymbol{\theta}}^r$ denotes the meta-parameter in the $r$-th iteration, the chain rule yields the so-termed meta-gradient
\begin{equation}
\nabla_{\boldsymbol{\theta}} \check{\mathcal{L}}_t^{\mathrm{val}} (\hat{\boldsymbol{\theta}}_t^r (\boldsymbol{\theta})) \Big|_{\boldsymbol{\theta} = \hat{\boldsymbol{\theta}}^r} 
	= \nabla \hat{\mathcal{A}}_t (\hat{\boldsymbol{\theta}}^r) \nabla \check{\mathcal{L}}_t^{\mathrm{val}} (\hat{\boldsymbol{\theta}}_t^{r}),~~t \in \mathcal{B}^r
\end{equation}
where $\nabla \hat{\mathcal{A}}_t (\hat{\boldsymbol{\theta}}^r)$ contains high-order derivatives. When $\hat{\mathcal{A}}_t$ is chosen as the one-step GD (cf. \eqref{eq:ones-step-GD}), the meta-gradient is
\begin{equation}
	\nabla \hat{\mathcal{A}}_t (\hat{\boldsymbol{\theta}}^r) = \mathbf{I}_d - \alpha \nabla^2 \check{\mathcal{L}}_t^{\mathrm{tr}} (\hat{\boldsymbol{\theta}}^{r}),~~~ t \in \mathcal{B}^r. 
\end{equation}
Fortunately, in this case the meta-gradient can still be computed through the Hessian-vector product (HVP), which incurs spatio-temporal complexity $\mathcal{O}(d)$. 

In general, $\hat{\mathcal{A}}_t$ is a $K$-step GD for some $K > 1$, which gives rise to  high-order derivatives $\{ \nabla^{k} \check{\mathcal{L}}_t^{\mathrm{tr}} (\hat{\boldsymbol{\theta}}^r) \}_{k=2}^{K+1}$ in the meta-gradient. The most efficient computation of the meta-gradient calls for recursive application of HVP $K$ times, what incurs an overall complexity of $\mathcal{O}(Kd)$ in time, and $\mathcal{O}(Kd)$ in space requirements. Empirical wisdom however, favors a large $K$ because it leads to improved accuracy in approximating the true meta-gradient $\nabla_{\boldsymbol{\theta}} \check{\mathcal{L}}_t^{\mathrm{val}} (\mathcal{A}_t^{*} (\boldsymbol{\theta})) \big|_{\boldsymbol{\theta} = \hat{\boldsymbol{\theta}}^r}$. Hence, the linear increase of complexity with $K$ will impede the scaling of optimization-based meta-learning algorithms. 

When computing the meta-gradient, it should be underscored that the forward implementation of the $K$-step GD function has complexity $\mathcal{O}(Kd)$. However, the constant hidden in the ${\cal O}$ is much smaller compared to the HVP computation in the backward propagation. Typically, the constant is $1/5$ in terms of time and $1/2$ in terms of space; see ~\cite{HVP-complexity, iMAML}. For this reason, we will focus on more efficient means of obtaining the meta-gradient function $\nabla_{\boldsymbol{\theta}} \mathcal{L}_t^{\mathrm{val}} (\hat{\mathcal{A}}_t (\boldsymbol{\theta}))$ for Bayesian meta-learning. It is also worth stressing that our results in the next section will hold for an arbitrary vector $\boldsymbol{\theta} \in \mathbb{R}^d \times \mathbb{R}_{>0}^d$ instead of solely the variable $\hat{\boldsymbol{\theta}}^r$ of the $r$-th iteration. Thus, we will use the general vector $\boldsymbol{\theta}$ when introducing our approach, while we will take its value at the point $\boldsymbol{\theta} = \hat{\boldsymbol{\theta}}^r$ when presenting our meta-learning algorithm.

\section{Implicit Bayesian meta-learning}
In this section, we will first introduce the proposed implicit Bayesian meta-learning (iBaML) method, which is built on top of implicit differentiation. Then, we will provide theoretical analysis to bound and compare the errors of explicit and implicit differentiation. 

\subsection{Implicit Bayesian meta-gradients}

We start with decomposing the meta-gradient in Bayesian meta-learning \eqref{eq:Bayesian-subopt} (henceforth referred to as Bayesian meta-gradient) using the chain rule
\begin{align}
\label{eq:Bayesian-meta-grad}
	\nabla_{\boldsymbol{\theta}} \mathcal{L}_t^{\mathrm{val}} (\hat{\mathbf{v}}_t (\boldsymbol{\theta}), \boldsymbol{\theta}) 
	=&~ \nabla \hat{\mathcal{A}}_t (\boldsymbol{\theta}) 
	\nabla_1 \mathcal{L}_t^{\mathrm{val}}(\hat{\mathbf{v}}_t, \boldsymbol{\theta}) \nonumber \\
	&+ \nabla_2 \mathcal{L}_t^{\mathrm{val}}(\hat{\mathbf{v}}_t, \boldsymbol{\theta}), ~~t=1,\ldots,T
\end{align}
where $\nabla_1$ and $\nabla_2$ denote the partial derivatives of a function w.r.t. its first and second arguments, respectively. The computational burden in \eqref{eq:Bayesian-meta-grad} comes from the high-order derivatives present in the Jacobian $\nabla \hat{\mathcal{A}}_t (\boldsymbol{\theta}) $.

The key idea behind implicit differentiation is to express $\nabla \hat{\mathcal{A}}_t (\boldsymbol{\theta})$ as a function of itself, so that it can be numerically obtained without using high-order derivatives. The following lemma formalizes how the implicit Jacobian is obtained in our setup. All proofs can be found in the Appendix.
\begin{lemma}
\label{lemma:implicit_Jacobi}
Consider the Bayesian meta-learning problem in~\eqref{eq:Bayesian-meta-VI}, and let $\bar{\mathbf{v}}_t := [\bar{\mathbf{m}}_t^\top, \bar{\mathbf{d}}_t^\top]^\top$ be a local minimum of the task-level KL-divergence generated by $\bar{\mathcal{A}}_t (\boldsymbol{\theta})$. Also, let $\mathcal{L}_t^{\mathrm{tr}}(\mathbf{v}_t) := \mathbb{E}_{q(\boldsymbol{\theta}_t; \mathbf{v}_t)} [ -\log p(\mathbf{y}_t^{\mathrm{tr}} | \boldsymbol{\theta}_t; \mathbf{X}_t^{\mathrm{tr}}) ]$ denote the expected negative log-likelihood (nll) on $\mathcal{D}_t^{\mathrm{tr}}$. If $\mathbf{H}_t (\bar{\mathbf{v}}_t) := \nabla^2 \mathcal{L}_t^{\mathrm{tr}} (\bar{\mathbf{v}}_t) + \left[ \begin{matrix}
	\mathbf{D}^{-1} & \mathbf{0}_d \\
	\mathbf{0}_d & \frac{1}{2} \big( \mathbf{D}^{-1} + 2 \diag \big( \nabla_{\bar{\mathbf{d}}_t} \mathcal{L}_t^{\mathrm{tr}} (\bar{\mathbf{v}}_t) \big) \big)^2
	\end{matrix} \right]$ is invertible, then it holds for $\forall t \in \{ 1, \ldots, T \}$ that
\begin{align}
	&\nabla \bar{\mathcal{A}}_t (\boldsymbol{\theta}) = \nonumber \\
	&\left[ \begin{matrix}
	\mathbf{D}^{-1} & \mathbf{0}_d \\
	-\diag \big( \nabla_{\bar{\mathbf{m}}_t} \mathcal{L}_t^{\mathrm{tr}}(\bar{\mathbf{v}}_t) \big) \mathbf{D}^{-1} & \frac{1}{2}\mathbf{D}^{-2}
	\end{matrix} \right]
	\mathbf{H}_t^{-1} (\bar{\mathbf{v}}_t).
\end{align}
\end{lemma}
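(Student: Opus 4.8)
The plan is to apply the implicit function theorem to the first-order optimality (stationarity) condition of the inner-level KL minimization in \eqref{eq:Bayesian-meta-VI}. First I would write out the inner objective explicitly: since both $q(\boldsymbol{\theta}_t;\mathbf{v}_t) = \mathcal{N}(\mathbf{m}_t,\mathbf{D}_t)$ and $p(\boldsymbol{\theta}_t;\boldsymbol{\theta}) = \mathcal{N}(\mathbf{m},\mathbf{D})$ are diagonal Gaussians, the KL-divergence $\KL(q\|p)$ decomposes into the expected negative log-likelihood term $\mathcal{L}_t^{\mathrm{tr}}(\mathbf{v}_t)$ plus the Gaussian--Gaussian KL term, which admits the closed form $\frac{1}{2}\sum_i \big[ \frac{d_{t,i} + (m_{t,i}-m_i)^2}{d_i} - 1 - \log\frac{d_{t,i}}{d_i}\big]$ (up to the $\log p(\mathbf{y}_t^{\mathrm{tr}})$ normalizer term that does not depend on $\mathbf{v}_t$). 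Call the total inner objective $\mathcal{G}_t(\mathbf{v}_t,\boldsymbol{\theta})$.

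Next I would compute the stationarity condition $\nabla_{\mathbf{v}_t}\mathcal{G}_t(\bar{\mathbf{v}}_t,\boldsymbol{\theta}) = \mathbf{0}$ at the local minimum $\bar{\mathbf{v}}_t = \bar{\mathcal{A}}_t(\boldsymbol{\theta})$, splitting into the $\mathbf{m}_t$-block and the $\mathbf{d}_t$-block. The $\mathbf{m}_t$-block gives $\nabla_{\bar{\mathbf{m}}_t}\mathcal{L}_t^{\mathrm{tr}}(\bar{\mathbf{v}}_t) + \mathbf{D}^{-1}(\bar{\mathbf{m}}_t - \mathbf{m}) = \mathbf{0}$; the $\mathbf{d}_t$-block gives $\nabla_{\bar{\mathbf{d}}_t}\mathcal{L}_t^{\mathrm{tr}}(\bar{\mathbf{v}}_t) + \frac{1}{2}(\mathbf{D}^{-1}\mathbf{1} - \bar{\mathbf{d}}_t^{\odot -1}) = \mathbf{0}$ (elementwise). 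Then I differentiate this stationarity system with respect to $\boldsymbol{\theta} = [\mathbf{m}^\top,\mathbf{d}^\top]^\top$. By the implicit function theorem, provided the Jacobian of the system in $\mathbf{v}_t$ is invertible, $\nabla\bar{\mathcal{A}}_t(\boldsymbol{\theta}) = -\big(\nabla_{\boldsymbol{\theta}}\nabla_{\mathbf{v}_t}\mathcal{G}_t\big)\big(\nabla_{\mathbf{v}_t}^2\mathcal{G}_t\big)^{-1}$ evaluated at $\bar{\mathbf{v}}_t$. The inner-variable Jacobian is $\nabla^2\mathcal{L}_t^{\mathrm{tr}}(\bar{\mathbf{v}}_t)$ plus the Hessian of the Gaussian KL term; a direct computation shows the latter is exactly the block-diagonal matrix appearing inside $\mathbf{H}_t(\bar{\mathbf{v}}_t)$ — the $\mathbf{m}_t\mathbf{m}_t$ block is $\mathbf{D}^{-1}$, the cross block vanishes, and the $\mathbf{d}_t\mathbf{d}_t$ block is $\frac{1}{2}\diag(\bar{\mathbf{d}}_t^{\odot -2})$, which I then rewrite using the $\mathbf{d}_t$-stationarity relation $\bar{\mathbf{d}}_t^{\odot -1} = \mathbf{D}^{-1}\mathbf{1} + 2\nabla_{\bar{\mathbf{d}}_t}\mathcal{L}_t^{\mathrm{tr}}(\bar{\mathbf{v}}_t)$ to obtain the stated $\frac{1}{2}\big(\mathbf{D}^{-1} + 2\diag(\nabla_{\bar{\mathbf{d}}_t}\mathcal{L}_t^{\mathrm{tr}}(\bar{\mathbf{v}}_t))\big)^2$ form; this yields $\mathbf{H}_t(\bar{\mathbf{v}}_t)$.

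Finally, for the mixed partial $-\nabla_{\boldsymbol{\theta}}\nabla_{\mathbf{v}_t}\mathcal{G}_t$: since $\mathcal{L}_t^{\mathrm{tr}}$ does not depend on $\boldsymbol{\theta}$, only the Gaussian KL term contributes. Differentiating the $\mathbf{m}_t$-stationarity $\mathbf{D}^{-1}(\bar{\mathbf{m}}_t-\mathbf{m})$ w.r.t. $\mathbf{m}$ gives $-\mathbf{D}^{-1}$ and w.r.t. $\mathbf{d}$ gives a term proportional to $\diag(\bar{\mathbf{m}}_t-\mathbf{m})$, which by the $\mathbf{m}_t$-stationarity relation equals $-\diag(\nabla_{\bar{\mathbf{m}}_t}\mathcal{L}_t^{\mathrm{tr}}(\bar{\mathbf{v}}_t))\mathbf{D}$; after the sign flip and rearrangement this produces the $(2,1)$ block $-\diag(\nabla_{\bar{\mathbf{m}}_t}\mathcal{L}_t^{\mathrm{tr}}(\bar{\mathbf{v}}_t))\mathbf{D}^{-1}$. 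Differentiating the $\mathbf{d}_t$-stationarity w.r.t. $\mathbf{m}$ gives zero and w.r.t. $\mathbf{d}$ gives $\frac{1}{2}\mathbf{D}^{-2}$, yielding the $(2,2)$ block and a vanishing $(1,2)$ block, so the left matrix factor matches the claim. Multiplying by $\mathbf{H}_t^{-1}(\bar{\mathbf{v}}_t)$ completes the proof. The main obstacle I anticipate is bookkeeping: carefully tracking which Hessian entries carry the elementwise inverse/square operations on $\bar{\mathbf{d}}_t$ versus $\mathbf{d}$, getting the signs from the implicit function theorem right, and cleanly invoking the $\mathbf{d}_t$-stationarity identity to convert $\diag(\bar{\mathbf{d}}_t^{\odot -2})$ into the form written in terms of $\mathbf{D}^{-1}$ and $\nabla_{\bar{\mathbf{d}}_t}\mathcal{L}_t^{\mathrm{tr}}$ — none of it is deep, but it is error-prone, and I would double-check the closed-form KL Hessian against a coordinate-wise scalar computation.
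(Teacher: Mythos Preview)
Your proposal is correct and follows essentially the same approach as the paper: both derive the stationarity conditions from the closed-form Gaussian--Gaussian KL, implicitly differentiate them with respect to $\boldsymbol{\theta}$, and use the $\mathbf{m}_t$- and $\mathbf{d}_t$-stationarity relations to rewrite $\bar{\mathbf{m}}_t-\mathbf{m}$ and $\bar{\mathbf{D}}_t^{-2}$ in terms of $\nabla\mathcal{L}_t^{\mathrm{tr}}$. The only cosmetic difference is that you invoke the implicit function theorem formula $-(\nabla_{\boldsymbol{\theta}}\nabla_{\mathbf{v}_t}\mathcal{G}_t)(\nabla_{\mathbf{v}_t}^2\mathcal{G}_t)^{-1}$ directly, whereas the paper differentiates the rearranged fixed-point equations $\bar{\mathbf{m}}_t=\mathbf{m}-\mathbf{D}\nabla_{\bar{\mathbf{m}}_t}\mathcal{L}_t^{\mathrm{tr}}$ and $\bar{\mathbf{d}}_t=(\mathbf{d}^{-1}+2\nabla_{\bar{\mathbf{d}}_t}\mathcal{L}_t^{\mathrm{tr}})^{-1}$ and then solves the resulting block-linear system by matrix algebra; the two computations are equivalent and your route is arguably a bit more economical.
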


Two remarks are now in order regarding the technical assumption, and connections with iMAML. For notational brevity, define the block matrix 
\begin{equation}
\label{eq:def-G}
\mathbf{G}_t(\bar{\mathbf{v}}_t) := \left[ \begin{matrix}
		\mathbf{D}^{-1} & \mathbf{0}_d \\
		-\diag \big( \nabla_{\bar{\mathbf{m}}_t} \mathcal{L}_t^{\mathrm{tr}}(\bar{\mathbf{v}}_t) \big) \mathbf{D}^{-1} & \frac{1}{2}\mathbf{D}^{-2}
	\end{matrix} \right].
\end{equation}

\begin{remark}
\normalfont The invertibility of $\mathbf{H}_t (\bar{\mathbf{v}}_t)$ in Lemma~\ref{lemma:implicit_Jacobi} is assumed to ensure uniqueness of $\nabla \bar{\mathcal{A}}_t (\boldsymbol{\theta})$. Without this assumption, it turns out that $\bar{\mathbf{v}}_t$ can be a singular point, belonging to a subspace where any point is also a local minimum. The Bayesian meta-gradients~\eqref{eq:Bayesian-meta-grad} of the points in this subspace form a set
\begin{align}
	\bar{\mathcal{G}}_{t} = 
	\Big\{ 
	& \mathbf{G}_t(\bar{\mathbf{v}}_t) \big( \mathbf{H}_t^{\dagger} (\bar{\mathbf{v}}_t) \nabla_1 \mathcal{L}_t^{\mathrm{val}}(\bar{\mathbf{v}}_t, \boldsymbol{\theta})
	+ \mathbf{u} \big) \nonumber \\
	& + \nabla_2 \mathcal{L}_t^{\mathrm{val}}(\bar{\mathbf{v}}_t, \boldsymbol{\theta}) ~\big|~ \forall \mathbf{u} \in \mathrm{Null} \big( \mathbf{H}_t (\bar{\mathbf{v}}_t) \big)
	\Big\}
\end{align}
where $^{\dagger}$ represents pseudo-inverse, and $\mathrm{Null}(\cdot)$ stands for the null space. Upon replacing $\mathbf{H}_t^{-1} (\bar{\mathbf{v}}_t)$ with $\mathbf{H}_t^{\dagger} (\bar{\mathbf{v}}_t)$, one can generalize Lemma~\ref{lemma:implicit_Jacobi}, and forgo the invertibility assumption. 
\end{remark}

\begin{remark}
\normalfont To recognize how Lemma~\ref{lemma:implicit_Jacobi} links iBaML with iMAML~\cite{iMAML}, consider the special case where the covariance matrices of the prior and local minimum are fixed as $\mathbf{D} \equiv \lambda^{-1} \mathbf{I}_d$ and $\bar{\mathbf{D}}_t \equiv \mathbf{0}_d$ for some constant $\lambda$. Since $\mathbf{d} = [\lambda^{-1}, \ldots, \lambda^{-1}] \in \mathbb{R}^d$ is a constant vector, Lemma~\ref{lemma:implicit_Jacobi} boils down to
\begin{align}
	\nabla_{\mathbf{m}} \bar{\mathcal{A}}_t (\boldsymbol{\theta}) 
	&= \mathbf{D}^{-1} \big( \nabla_{\mathbf{m}}^2 \mathcal{L}_t^{\mathrm{tr}} (\bar{\mathbf{v}}_t) + \mathbf{D}^{-1} \big)^{-1} \nonumber \\
	&= \big( \lambda^{-1} \nabla_{\mathbf{m}}^2 \mathcal{L}_t^{\mathrm{tr}} (\bar{\mathbf{v}}_t) + \mathbf{I}_d \big)^{-1}
\end{align}
which coincides with Lemma 1 of~\cite{iMAML}. Hence, iBaML subsumes iMAML whose expressiveness is confined because $\mathbf{d}$ is fixed, while iBaML entails a  learnable covariance matrix in the prior $p(\boldsymbol{\theta}_t; \boldsymbol{\theta})$. In addition, the uncertainty of iMAML's training posterior $p(\boldsymbol{\theta}_t | \mathbf{y}_t^{\mathrm{tr}}; \mathbf{X}_t^{\mathrm{tr}}, \boldsymbol{\theta})$ can be more challenging to quantify than that in iBaML.
\end{remark}

\begin{algorithm}[t]
	\caption{Implicit Bayesian meta-learning (iBaML)}
	\label{alg:iBaML}
	\begin{algorithmic}[1]
	\STATE \textbf{Inputs:} tasks $\{ 1, \ldots, T \}$ with their $\mathcal{D}_t^{\mathrm{tr}}$ and $\mathcal{D}_t^{\mathrm{val}}$, and meta-learning rate $\beta$. \\
	\STATE \textbf{Initialization:} initialize $\hat{\boldsymbol{\theta}}^0$ randomly, and iteration counter $r = 0$.
	\REPEAT
		\STATE Sample a batch $\mathcal{B}^r \subset \{ 1, \dots, T\}$ of tasks;
		\FOR{$t \in \mathcal{B}^r$}
			\STATE Compute task-level sub-optimum $\hat{\mathbf{v}}_t^r = \hat{\mathcal{A}}_t (\hat{\boldsymbol{\theta}}^r)$ using e.g. $K$-step GD;
			\STATE Approximate $\hat{\mathbf{u}}_t^r \approx \mathbf{H}_t^{-1} (\hat{\mathbf{v}}_t^r) \nabla_1 \mathcal{L}_t^{\mathrm{val}}(\hat{\mathbf{v}}_t^r, \hat{\boldsymbol{\theta}}^r)$ with $L$-step CG;
			\STATE Compute meta-level gradient $\hat{\mathbf{g}}_t^r = \mathbf{G}_t (\hat{\mathbf{v}}_t^r) \hat{\mathbf{u}}_t^r + \nabla_2 \mathcal{L}_t^{\mathrm{val}}(\hat{\mathbf{v}}_t^r, \hat{\boldsymbol{\theta}}^r)$ using~\eqref{eq:implicit-grad};
		\ENDFOR 
		\STATE Update $\hat{\boldsymbol{\theta}}^{r+1} = \hat{\boldsymbol{\theta}}^{r} - \beta \frac{1}{|\mathcal{B}^r|} \sum_{t \in \mathcal{B}^r} \hat{\mathbf{g}}_t^r$; 
		\STATE $r = r + 1$;
	\UNTIL{convergence}
	\STATE \textbf{Output:} $\hat{\boldsymbol{\theta}}^{r}$.
	\end{algorithmic}
\end{algorithm}

An immediate consequence of Lemma~\ref{lemma:implicit_Jacobi} is the so-called generalized implicit gradients. Suppose that $\hat{\mathcal{A}}_t$ involves a $K$ sufficiently large for the sub-optimal point $\hat{\mathbf{v}}_t$ to be close to a local optimum $\bar{\mathbf{v}}_t$. The Bayesian meta-gradient~\eqref{eq:Bayesian-meta-grad} can then be approximated through
\begin{align}
\label{eq:approx-Bayesian-meta-grad}
	&\nabla_{\boldsymbol{\theta}} \mathcal{L}_t^{\mathrm{val}} (\hat{\mathbf{v}}_t (\boldsymbol{\theta}), \boldsymbol{\theta}) \\
	&\approx \mathbf{G}_t (\hat{\mathbf{v}}_t) \mathbf{H}_t^{-1} (\hat{\mathbf{v}}_t)
	\nabla_1 \mathcal{L}_t^{\mathrm{val}}(\hat{\mathbf{v}}_t, \boldsymbol{\theta}) + \nabla_2 \mathcal{L}_t^{\mathrm{val}}(\hat{\mathbf{v}}_t, \boldsymbol{\theta}), ~\forall t. \nonumber
\end{align}
The approximate implicit gradient in \eqref{eq:approx-Bayesian-meta-grad} is computationally expensive due to the matrix inversion $\mathbf{H}_t^{-1} (\hat{\mathbf{v}}_t)$, which incurs complexity $\mathcal{O}(d^3)$. To relieve the computational burden, a key observation is that $\mathbf{H}_t^{-1} (\hat{\mathbf{v}}_t) \nabla_1 \mathcal{L}_t^{\mathrm{val}}(\hat{\mathbf{v}}_t, \boldsymbol{\theta})$ is the solution of the optimization problem
\begin{equation}
\label{eq:CG-opt-prob}
	\argmin_{\mathbf{u}} \frac{1}{2} \mathbf{u}^\top \mathbf{H}_t (\hat{\mathbf{v}}_t) \mathbf{u} - \mathbf{u}^\top \nabla_1 \mathcal{L}_t^{\mathrm{val}}(\hat{\mathbf{v}}_t, \boldsymbol{\theta}).
\end{equation}
Given that the square matrix $\mathbf{H}_t (\hat{\mathbf{v}}_t)$ is by definition symmetric, problem~\eqref{eq:CG-opt-prob} can be efficiently solved using the conjugate gradient (CG) iteration. Specifically, the complexity of CG is dominated by the matrix-vector product $\mathbf{H}_t (\hat{\mathbf{v}}_t) \mathbf{p}$ (for some vector $\mathbf{p} \in \mathbb{R}^{2d}$), given by
\begin{align}\label{eq:CG-Ht_P}
	\mathbf{H}_t (\hat{\mathbf{v}}_t) \mathbf{p} 
	&= \nabla^2 \mathcal{L}_t^{\mathrm{tr}} (\hat{\mathbf{v}}_t) \mathbf{p}  \\
	&~+ \left[ \begin{matrix}
		\mathbf{D}^{-1} & \mathbf{0}_d \\
		\mathbf{0}_d & \frac{1}{2} \big( \mathbf{D}^{-1} + 2 \diag \big( \nabla_{\hat{\mathbf{d}}_t} \mathcal{L}_t^{\mathrm{tr}} (\hat{\mathbf{v}}_t) \big) \big)^2 
	\end{matrix} \right] \mathbf{p}. \nonumber 
\end{align}
The first term on the right-hand side of (\ref{eq:CG-Ht_P}) is an HVP, and the second is the multiplication of a diagonal matrix with a vector. Note that with the diagonal matrix, the latter term boils down to a dot product, implying that the complexity of each CG iteration is as low as $\mathcal{O}(d)$. In practice, a small number of CG iterations suffices to produce an accurate estimate of $\mathbf{H}_t^{-1} (\hat{\mathbf{v}}_t) \nabla_1 \mathcal{L}_t^{\mathrm{val}}(\hat{\mathbf{v}}_t, \boldsymbol{\theta})$ thanks to its fast  convergence rate~\cite{CG-convergence-1,CG-convergence-2}. In order to control the total complexity of iBaML, we set the maximum number of CG iterations to a constant $L$. 

Having obtained an approximation of the matrix-inverse-vector product $\mathbf{H}_t^{-1} (\hat{\mathbf{v}}_t) \nabla_1 \mathcal{L}_t^{\mathrm{val}}(\hat{\mathbf{v}}_t, \boldsymbol{\theta})$, we proceed to estimate the Bayesian meta-gradient. Let $\hat{\mathbf{u}}_t := [ \hat{\mathbf{u}}_{t, \mathbf{m}}^\top, \hat{\mathbf{u}}_{t, \mathbf{d}}^\top ]^\top$ be the output of the CG method with subvectors $\hat{\mathbf{u}}_{t, \mathbf{m}},~\hat{\mathbf{u}}_{t, \mathbf{d}} \in \mathbb{R}^d$. Then, it follows from~\eqref{eq:approx-Bayesian-meta-grad} that
\begin{align}
\label{eq:implicit-grad}
	&\nabla_{\boldsymbol{\theta}} \mathcal{L}_t^{\mathrm{val}} (\hat{\mathbf{v}}_t (\boldsymbol{\theta}), \boldsymbol{\theta}) \nonumber \\
	&\approx \mathbf{G}_t (\hat{\mathbf{v}}_t) \hat{\mathbf{u}}_t 
	+ \nabla_2 \mathcal{L}_t^{\mathrm{val}}(\hat{\mathbf{v}}_t, \boldsymbol{\theta}) \nonumber \\
	&=\left[ \begin{matrix}
		\mathbf{D}^{-1} \hat{\mathbf{u}}_{t,\mathbf{m}} \\
		-\diag \big( \nabla_{\hat{\mathbf{m}}_t} \mathcal{L}_t^{\mathrm{tr}} (\hat{\mathbf{v}}_t) \big) \mathbf{D}^{-1} \hat{\mathbf{u}}_{t,\mathbf{m}} + \frac{1}{2} \mathbf{D}^{-2} \hat{\mathbf{u}}_{t, \mathbf{d}}
	\end{matrix} \right] \nonumber \\
	&~~~~~~+ \nabla_2 \mathcal{L}_t^{\mathrm{val}}(\hat{\mathbf{v}}_t, \boldsymbol{\theta}) \nonumber := \hat{\mathbf{g}}_t, ~~~~t=1, \dots ,T 
\end{align}
where we also used the definition~\eqref{eq:def-G}. Again, the diagonal-matrix-vector products in~\eqref{eq:implicit-grad} can be efficiently computed through dot products, which incur complexity $\mathcal{O}(d)$. The step-by-step pseudocode of the iBaML is listed under Algorithm~\ref{alg:iBaML}. 

In a nutshell, the implicit Bayesian meta-gradient computation consumes  $\mathcal{O}(Ld)$ time, regardless of the optimization algorithm $\hat{\mathcal{A}}_t$. One can even employ more complicated algorithms such as second-order matrix-free optimization~\cite{K-FAC, GGN}. In addition, as the time complexity does not depend on $K$, one can increase $K$ to reduce the approximation error in \eqref{eq:approx-Bayesian-meta-grad}. The space complexity of iBaML is only $\mathcal{O} (d)$ thanks to the iterative implementation of CG steps. These considerations explain how iBaML addresses the scalability issue of explicit backpropagation. 

\subsection{Theoretical analysis}
This section deals with performance analysis of both explicit and implicit gradients in Bayesian meta-learning to further understand their differences. Similar to~\cite{iMAML}, our results will rely on the following assumptions.

\begin{assumption}
\label{as:local-min}
Vector	$\bar{\mathbf{v}}_t = \bar{\mathcal{A}}_t (\boldsymbol{\theta})$ is a local minimum of the KL-divergence in~\eqref{eq:Bayesian-meta-VI}.
\end{assumption}

\begin{assumption}
\label{as:meta-loss}
	The meta-loss function $\mathcal{L}_t^{\mathrm{val}} (\mathbf{v}_t, \boldsymbol{\theta})$ is $A_t$-Lipschitz and $B_t$-smooth w.r.t. $\mathbf{v}_t$ while its partial gradient $\nabla_2 \mathcal{L}_t^{\mathrm{val}} (\mathbf{v}_t, \boldsymbol{\theta})$ is $C_t$-Lipschitz w.r.t. $\mathbf{v}_t$. 
\end{assumption}

\begin{assumption}
\label{as:task-nll}
	The expected nll function $\mathcal{L}_t^{\mathrm{tr}}(\mathbf{v}_t)$ is $D_t$-smooth, and has a Hessian that is $E_t$-Lipschitz.
\end{assumption}

\begin{assumption}
\label{as:invertible}
Matrices $\mathbf{H}_t (\hat{\mathbf{v}}_t)$ and $\mathbf{H}_t (\bar{\mathbf{v}}_t)$ are both non-singular; that is, their smallest singular value $\sigma_t := \min \big\{ \sigma_{\min} \big( \mathbf{H}_t (\hat{\mathbf{v}}_t) \big), \sigma_{\min} \big( \mathbf{H}_t (\bar{\mathbf{v}}_t) \big) \big\} > 0$. 
\end{assumption}

\begin{assumption}
\label{as:bounded-var}
    Prior variances are positive and bounded, meaning $0 < D_{\min} \le [\mathbf{d}]_i \le D_{\max}, ~i = 1, \ldots, d$. 
\end{assumption}
Based on these assumptions, we can establish the following result. 
\begin{theorem}[Explicit Bayesian meta-gradient error bound]
\label{theor:explicit}
Consider the Bayesian meta-learning problem~\eqref{eq:Bayesian-subopt}. Let $\epsilon_t := \| \hat{\mathbf{v}}_t - \bar{\mathbf{v}}_t \|_2$ be the task-level optimization error, and $\delta_t := \| \nabla \hat{\mathcal{A}}_t (\boldsymbol{\theta}) - \mathbf{G}_t (\hat{\mathbf{v}}_t) \mathbf{H}_t^{-1} (\hat{\mathbf{v}}_t) \|_2$ the error in the Jacobian. Upon defining $\rho_t := \max \big\{ \| \nabla_{\bar{\mathbf{v}}_t} \mathcal{L}_t^{\mathrm{tr}} (\bar{\mathbf{v}}_t) \|_{\infty}, \| \nabla_{\hat{\mathbf{v}}_t} \mathcal{L}_t^{\mathrm{tr}} (\hat{\mathbf{v}}_t) \|_{\infty} \big\}$, and with Assumptions~\ref{as:local-min}-\ref{as:bounded-var} in effect, it holds for $t \in \{ 1,\ldots, T \}$ that
\begin{align}
	&\big\| \nabla_{\boldsymbol{\theta}} \mathcal{L}_t^{\mathrm{val}} \big( \hat{\mathbf{v}}_t (\boldsymbol{\theta}), \boldsymbol{\theta} \big) 
	-\nabla_{\boldsymbol{\theta}} \mathcal{L}_t^{\mathrm{val}} \big( \bar{\mathbf{v}}_t (\boldsymbol{\theta}), \boldsymbol{\theta} \big) \big\|_2 \nonumber \\
	&~~~~~~~~~~~~~~~~~~~~~~~~~~~~~~~~~~~~~~~~~~~~~~~~~~~~~~~~~~~\le F_t \epsilon_t + A_t \delta_t
\end{align}
where $F_t$ is a constant dependent on $\rho_t$. 
\end{theorem}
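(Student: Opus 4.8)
The plan is to expand both meta-gradients via the chain rule~\eqref{eq:Bayesian-meta-grad}, use Lemma~\ref{lemma:implicit_Jacobi} to write the exact Jacobian at the local optimum as $\nabla\bar{\mathcal{A}}_t(\boldsymbol{\theta}) = \mathbf{G}_t(\bar{\mathbf{v}}_t)\mathbf{H}_t^{-1}(\bar{\mathbf{v}}_t)$, and then split the difference with the triangle inequality into three pieces: a Jacobian-mismatch term $\big[\nabla\hat{\mathcal{A}}_t(\boldsymbol{\theta}) - \mathbf{G}_t(\hat{\mathbf{v}}_t)\mathbf{H}_t^{-1}(\hat{\mathbf{v}}_t)\big]\nabla_1\mathcal{L}_t^{\mathrm{val}}(\hat{\mathbf{v}}_t,\boldsymbol{\theta})$; a perturbation term $\mathbf{G}_t(\hat{\mathbf{v}}_t)\mathbf{H}_t^{-1}(\hat{\mathbf{v}}_t)\nabla_1\mathcal{L}_t^{\mathrm{val}}(\hat{\mathbf{v}}_t,\boldsymbol{\theta}) - \mathbf{G}_t(\bar{\mathbf{v}}_t)\mathbf{H}_t^{-1}(\bar{\mathbf{v}}_t)\nabla_1\mathcal{L}_t^{\mathrm{val}}(\bar{\mathbf{v}}_t,\boldsymbol{\theta})$; and the direct term $\nabla_2\mathcal{L}_t^{\mathrm{val}}(\hat{\mathbf{v}}_t,\boldsymbol{\theta}) - \nabla_2\mathcal{L}_t^{\mathrm{val}}(\bar{\mathbf{v}}_t,\boldsymbol{\theta})$.

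The first piece is at most $\delta_t\,\|\nabla_1\mathcal{L}_t^{\mathrm{val}}(\hat{\mathbf{v}}_t,\boldsymbol{\theta})\|_2 \le A_t\delta_t$ by the $A_t$-Lipschitz part of Assumption~\ref{as:meta-loss} and the definition of $\delta_t$; the third piece is at most $C_t\epsilon_t$ directly from $C_t$-Lipschitz continuity of $\nabla_2\mathcal{L}_t^{\mathrm{val}}$ in $\mathbf{v}_t$. Hence only the perturbation term requires work. For it, I would set $\boldsymbol{\Phi}_t(\mathbf{v}) := \mathbf{G}_t(\mathbf{v})\mathbf{H}_t^{-1}(\mathbf{v})\nabla_1\mathcal{L}_t^{\mathrm{val}}(\mathbf{v},\boldsymbol{\theta})$ and bound $\|\boldsymbol{\Phi}_t(\hat{\mathbf{v}}_t) - \boldsymbol{\Phi}_t(\bar{\mathbf{v}}_t)\|_2$ by the product-perturbation estimate $\|A_1A_2A_3 - A_1'A_2'A_3'\|_2 \le \|A_1 - A_1'\|_2\|A_2\|_2\|A_3\|_2 + \|A_1'\|_2\|A_2 - A_2'\|_2\|A_3\|_2 + \|A_1'\|_2\|A_2'\|_2\|A_3 - A_3'\|_2$ together with submultiplicativity of $\|\cdot\|_2$. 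The factor $\nabla_1\mathcal{L}_t^{\mathrm{val}}$ is handled directly: $\|\nabla_1\mathcal{L}_t^{\mathrm{val}}(\mathbf{v},\boldsymbol{\theta})\|_2 \le A_t$ and $\|\nabla_1\mathcal{L}_t^{\mathrm{val}}(\hat{\mathbf{v}}_t,\boldsymbol{\theta}) - \nabla_1\mathcal{L}_t^{\mathrm{val}}(\bar{\mathbf{v}}_t,\boldsymbol{\theta})\|_2 \le B_t\epsilon_t$ by Assumption~\ref{as:meta-loss}. For $\mathbf{G}_t$, boundedness and Lipschitzness in $\mathbf{v}$ follow from Assumption~\ref{as:bounded-var} ($\|\mathbf{D}^{-1}\|_2 \le D_{\min}^{-1}$, $\|\mathbf{D}^{-2}\|_2 \le D_{\min}^{-2}$), the bound $\|\diag(\nabla_{\bar{\mathbf{m}}_t}\mathcal{L}_t^{\mathrm{tr}})\|_2 \le \rho_t$ on the relevant block, and the $D_t$-Lipschitzness of $\nabla\mathcal{L}_t^{\mathrm{tr}}$ (Assumption~\ref{as:task-nll}) controlling the only $\mathbf{v}$-dependent block $\diag(\nabla_{\mathbf{m}_t}\mathcal{L}_t^{\mathrm{tr}}(\mathbf{v}))\mathbf{D}^{-1}$. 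For $\mathbf{H}_t^{-1}$, Assumption~\ref{as:invertible} gives $\|\mathbf{H}_t^{-1}(\hat{\mathbf{v}}_t)\|_2,\|\mathbf{H}_t^{-1}(\bar{\mathbf{v}}_t)\|_2 \le \sigma_t^{-1}$, and the resolvent identity $\mathbf{H}_t^{-1}(\hat{\mathbf{v}}_t) - \mathbf{H}_t^{-1}(\bar{\mathbf{v}}_t) = \mathbf{H}_t^{-1}(\hat{\mathbf{v}}_t)\big(\mathbf{H}_t(\bar{\mathbf{v}}_t) - \mathbf{H}_t(\hat{\mathbf{v}}_t)\big)\mathbf{H}_t^{-1}(\bar{\mathbf{v}}_t)$ yields $\|\mathbf{H}_t^{-1}(\hat{\mathbf{v}}_t) - \mathbf{H}_t^{-1}(\bar{\mathbf{v}}_t)\|_2 \le \sigma_t^{-2}\|\mathbf{H}_t(\hat{\mathbf{v}}_t) - \mathbf{H}_t(\bar{\mathbf{v}}_t)\|_2$, needing invertibility only at the two endpoints. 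The Lipschitz estimate of $\mathbf{v}\mapsto\mathbf{H}_t(\mathbf{v})$ then comes from the $E_t$-Lipschitzness of $\nabla^2\mathcal{L}_t^{\mathrm{tr}}$ and from expanding the diagonal block $\tfrac12\big(\mathbf{D}^{-1} + 2\diag(\nabla_{\mathbf{d}_t}\mathcal{L}_t^{\mathrm{tr}}(\mathbf{v}))\big)^2$ into a constant part, a part linear in $\nabla_{\mathbf{d}_t}\mathcal{L}_t^{\mathrm{tr}}$ (Lipschitz constant $\propto D_t D_{\min}^{-1}$), and a quadratic part $\diag(\nabla_{\mathbf{d}_t}\mathcal{L}_t^{\mathrm{tr}}(\mathbf{v}))^2$ whose Lipschitz constant, on the pair $\{\hat{\mathbf{v}}_t,\bar{\mathbf{v}}_t\}$, is $\propto\rho_t D_t$ since $\|\nabla_{\mathbf{d}_t}\mathcal{L}_t^{\mathrm{tr}}\|_\infty \le \rho_t$ there. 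Assembling the three-term expansion gives $\|\boldsymbol{\Phi}_t(\hat{\mathbf{v}}_t) - \boldsymbol{\Phi}_t(\bar{\mathbf{v}}_t)\|_2 \le L_t\epsilon_t$ with $L_t$ polynomial in $D_{\min}^{-1}$, $\sigma_t^{-1}$, $A_t$, $B_t$, $D_t$, $E_t$ and $\rho_t$, so the theorem follows with $F_t := C_t + L_t$, which indeed depends on $\rho_t$.

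The main obstacle is the perturbation term, and specifically the Lipschitz control of $\mathbf{H}_t^{-1}$: unlike $\mathbf{G}_t$ and $\nabla_1\mathcal{L}_t^{\mathrm{val}}$, whose norms and Lipschitz constants are handed to us directly, the inverse Hessian requires the resolvent identity plus a Lipschitz bound on $\mathbf{H}_t$, and the squared block $\tfrac12\big(\mathbf{D}^{-1}+2\diag(\nabla_{\mathbf{d}_t}\mathcal{L}_t^{\mathrm{tr}}(\mathbf{v}))\big)^2$ is only \emph{locally} Lipschitz, so the uniform bounds needed along $\{\hat{\mathbf{v}}_t,\bar{\mathbf{v}}_t\}$ must be supplied by $\rho_t$ (bounding $\|\nabla_{\mathbf{m}_t}\mathcal{L}_t^{\mathrm{tr}}\|_\infty$ and $\|\nabla_{\mathbf{d}_t}\mathcal{L}_t^{\mathrm{tr}}\|_\infty$) and by Assumption~\ref{as:bounded-var} (bounding $\|\mathbf{D}^{-1}\|_2$); it is precisely this that forces $F_t$ to carry a dependence on $\rho_t$. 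Everything else is routine bookkeeping with the triangle inequality and submultiplicativity of the spectral norm.
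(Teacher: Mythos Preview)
Your proposal is correct and follows the same high-level skeleton as the paper: apply the chain rule~\eqref{eq:Bayesian-meta-grad}, invoke Lemma~\ref{lemma:implicit_Jacobi} at $\bar{\mathbf{v}}_t$, and split with the triangle inequality so that the $\nabla_2$-piece contributes $C_t\epsilon_t$ and the Jacobian-mismatch piece contributes $A_t\delta_t$. The substantive difference is in how the perturbation term is handled. You treat $\mathbf{G}_t(\mathbf{v})\mathbf{H}_t^{-1}(\mathbf{v})$ as a product of two factors, bound $\|\mathbf{G}_t(\hat{\mathbf{v}}_t)-\mathbf{G}_t(\bar{\mathbf{v}}_t)\|_2$ directly, and control $\|\mathbf{H}_t^{-1}(\hat{\mathbf{v}}_t)-\mathbf{H}_t^{-1}(\bar{\mathbf{v}}_t)\|_2$ via the resolvent identity together with a Lipschitz bound on $\mathbf{H}_t(\cdot)$. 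The paper instead first passes to the \emph{inverses}: it writes $(\nabla\bar{\mathcal{A}}_t)^{-1}$ and $(\mathbf{G}_t(\hat{\mathbf{v}}_t)\mathbf{H}_t^{-1}(\hat{\mathbf{v}}_t))^{-1}$ explicitly as $\nabla^2\mathcal{L}_t^{\mathrm{tr}}\,\mathbf{P}+\mathbf{Q}\mathbf{R}$ for suitable block-diagonal/triangular matrices, bounds the difference $\Delta$ of these inverses term by term, and only then recovers $\|\mathbf{G}_t(\hat{\mathbf{v}}_t)\mathbf{H}_t^{-1}(\hat{\mathbf{v}}_t)-\nabla\bar{\mathcal{A}}_t\|_2$ through $\|\nabla\bar{\mathcal{A}}_t\,\Delta\,\mathbf{G}_t(\hat{\mathbf{v}}_t)\mathbf{H}_t^{-1}(\hat{\mathbf{v}}_t)\|_2$. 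Both routes land on a constant that is polynomial in $\rho_t$, $D_{\min}^{-1}$, $\sigma_t^{-1}$, $A_t,B_t,C_t,D_t,E_t$; your forward product-perturbation is a bit more direct and avoids computing the explicit inverses, while the paper's route makes the structure of the inverse (and hence the exact dependence on $D_{\max}$ and $\rho_t$) more visible. Either way the $\rho_t$-dependence enters for exactly the reason you identify: the squared diagonal block in $\mathbf{H}_t$ is only locally Lipschitz, and $\rho_t$ supplies the needed uniform bound on $\|\nabla\mathcal{L}_t^{\mathrm{tr}}\|_\infty$ at the two endpoints.
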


Theorem~\ref{theor:explicit} asserts that the $\ell_2$ error of the explicit Bayesian meta-gradient relative to the true depends on the task-level optimization error as well as the error in the Jacobian, where the former captures the Euclidean distance of the local minimum $\bar{\mathbf{v}}_t$ and its approximation $\hat{\mathbf{v}}_t$, while the latter characterizes how the sub-optimal function $\hat{\mathcal{A}}_t$ influences the Jacobian. Both errors can be reduced by increasing $K$ in the task-level optimization, at the cost of time and space complexity for backpropagating $\nabla \hat{\mathcal{A}}_t (\boldsymbol{\theta})$. Ideally, one can have $\delta_t = 0$ when $\hat{\mathbf{v}}_t$ is a local optimum, and $\epsilon_t = 0$ when choosing $\bar{\mathbf{v}}_t = \hat{\mathbf{v}}_t$.

Next, we derive an error bound for implicit differentiation.

\begin{theorem}[Implicit Bayesian meta-gradient error bound]
\label{theor:implicit}
Consider the Bayesian meta-learning problem~\eqref{eq:Bayesian-subopt}. Let $\epsilon_t := \| \hat{\mathbf{v}}_t - \bar{\mathbf{v}}_t \|_2$ be the task-level optimization error, and $\delta_t' := \| \hat{\mathbf{u}}_t - \mathbf{H}_t^{-1} (\hat{\mathbf{v}}_t) \nabla_1 \mathcal{L}_t^{\mathrm{val}} (\hat{\mathbf{v}}_t, \boldsymbol{\theta}) \|$  the CG error. Upon defining $\rho_t := \max \big\{ \| \nabla_{\bar{\mathbf{v}}_t} \mathcal{L}_t^{\mathrm{tr}} (\bar{\mathbf{v}}_t) \|_{\infty}, \| \nabla_{\hat{\mathbf{v}}_t} \mathcal{L}_t^{\mathrm{tr}} (\hat{\mathbf{v}}_t) \|_{\infty} \big\}$, and with Assumptions~\ref{as:local-min}-\ref{as:bounded-var} in effect, it holds for $t \in \{ 1,\ldots, T \}$ that
\begin{equation}
	\big\| \hat{\mathbf{g}}_t -\nabla_{\boldsymbol{\theta}} \mathcal{L}_t^{\mathrm{val}} \big( \bar{\mathbf{v}}_t (\boldsymbol{\theta}), \boldsymbol{\theta} \big) \big\|_2 \le F_t' \epsilon_t + G_t' \delta_t',
\end{equation}
where $F_t'$ and $G_t'$ are constants dependent on $\rho_t$. 
\end{theorem}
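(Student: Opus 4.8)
The plan is to bound the error between the implicit estimate $\hat{\mathbf{g}}_t = \mathbf{G}_t(\hat{\mathbf{v}}_t)\hat{\mathbf{u}}_t + \nabla_2 \mathcal{L}_t^{\mathrm{val}}(\hat{\mathbf{v}}_t, \boldsymbol{\theta})$ and the true meta-gradient $\nabla_{\boldsymbol{\theta}}\mathcal{L}_t^{\mathrm{val}}(\bar{\mathbf{v}}_t(\boldsymbol{\theta}), \boldsymbol{\theta}) = \mathbf{G}_t(\bar{\mathbf{v}}_t)\mathbf{H}_t^{-1}(\bar{\mathbf{v}}_t)\nabla_1\mathcal{L}_t^{\mathrm{val}}(\bar{\mathbf{v}}_t, \boldsymbol{\theta}) + \nabla_2\mathcal{L}_t^{\mathrm{val}}(\bar{\mathbf{v}}_t, \boldsymbol{\theta})$ (using Lemma~\ref{lemma:implicit_Jacobi} and Assumption~\ref{as:local-min} to express the true meta-gradient in closed form). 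First I would insert the intermediate quantity $\mathbf{g}_t^{\mathrm{exact}} := \mathbf{G}_t(\hat{\mathbf{v}}_t)\mathbf{H}_t^{-1}(\hat{\mathbf{v}}_t)\nabla_1\mathcal{L}_t^{\mathrm{val}}(\hat{\mathbf{v}}_t, \boldsymbol{\theta}) + \nabla_2\mathcal{L}_t^{\mathrm{val}}(\hat{\mathbf{v}}_t, \boldsymbol{\theta})$ and split via the triangle inequality into (i) $\|\hat{\mathbf{g}}_t - \mathbf{g}_t^{\mathrm{exact}}\|_2$, the CG-truncation error, and (ii) $\|\mathbf{g}_t^{\mathrm{exact}} - \nabla_{\boldsymbol{\theta}}\mathcal{L}_t^{\mathrm{val}}(\bar{\mathbf{v}}_t(\boldsymbol{\theta}), \boldsymbol{\theta})\|_2$, the linearization/optimization error.

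For term (i), the difference is exactly $\mathbf{G}_t(\hat{\mathbf{v}}_t)\big(\hat{\mathbf{u}}_t - \mathbf{H}_t^{-1}(\hat{\mathbf{v}}_t)\nabla_1\mathcal{L}_t^{\mathrm{val}}(\hat{\mathbf{v}}_t, \boldsymbol{\theta})\big)$, so its norm is at most $\|\mathbf{G}_t(\hat{\mathbf{v}}_t)\|_2 \cdot \delta_t'$. I would bound $\|\mathbf{G}_t(\hat{\mathbf{v}}_t)\|_2$ using Assumption~\ref{as:bounded-var} (which controls $\|\mathbf{D}^{-1}\|_2 \le 1/D_{\min}$ and $\|\mathbf{D}^{-2}\|_2 \le 1/D_{\min}^2$) together with $\rho_t$ (which bounds $\|\diag(\nabla_{\hat{\mathbf{m}}_t}\mathcal{L}_t^{\mathrm{tr}})\|_2 \le \rho_t$ in the off-diagonal block); this yields a constant dependent on $\rho_t$, contributing to $G_t'$. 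For term (ii), I would peel it apart further by first replacing $\mathbf{H}_t^{-1}(\hat{\mathbf{v}}_t)$-based quantities at $\hat{\mathbf{v}}_t$ with those at $\bar{\mathbf{v}}_t$: write the difference as a sum of three pieces, $\big(\mathbf{G}_t(\hat{\mathbf{v}}_t) - \mathbf{G}_t(\bar{\mathbf{v}}_t)\big)\mathbf{H}_t^{-1}(\hat{\mathbf{v}}_t)\nabla_1\mathcal{L}_t^{\mathrm{val}}(\hat{\mathbf{v}}_t)$, then $\mathbf{G}_t(\bar{\mathbf{v}}_t)\big(\mathbf{H}_t^{-1}(\hat{\mathbf{v}}_t) - \mathbf{H}_t^{-1}(\bar{\mathbf{v}}_t)\big)\nabla_1\mathcal{L}_t^{\mathrm{val}}(\hat{\mathbf{v}}_t)$, then $\mathbf{G}_t(\bar{\mathbf{v}}_t)\mathbf{H}_t^{-1}(\bar{\mathbf{v}}_t)\big(\nabla_1\mathcal{L}_t^{\mathrm{val}}(\hat{\mathbf{v}}_t) - \nabla_1\mathcal{L}_t^{\mathrm{val}}(\bar{\mathbf{v}}_t)\big)$, plus the leftover $\nabla_2\mathcal{L}_t^{\mathrm{val}}(\hat{\mathbf{v}}_t) - \nabla_2\mathcal{L}_t^{\mathrm{val}}(\bar{\mathbf{v}}_t)$. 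Each factor is then bounded by a Lipschitz constant times $\epsilon_t = \|\hat{\mathbf{v}}_t - \bar{\mathbf{v}}_t\|_2$: the $\mathbf{G}_t$ difference via smoothness of $\mathcal{L}_t^{\mathrm{tr}}$ (Assumption~\ref{as:task-nll}, $D_t$) and boundedness of $\mathbf{D}$; the $\mathbf{H}_t^{-1}$ difference via the resolvent identity $\mathbf{H}_t^{-1}(\hat{\mathbf{v}}_t) - \mathbf{H}_t^{-1}(\bar{\mathbf{v}}_t) = \mathbf{H}_t^{-1}(\hat{\mathbf{v}}_t)\big(\mathbf{H}_t(\bar{\mathbf{v}}_t) - \mathbf{H}_t(\hat{\mathbf{v}}_t)\big)\mathbf{H}_t^{-1}(\bar{\mathbf{v}}_t)$, with $\|\mathbf{H}_t^{-1}\|_2 \le 1/\sigma_t$ (Assumption~\ref{as:invertible}) and $\|\mathbf{H}_t(\bar{\mathbf{v}}_t) - \mathbf{H}_t(\hat{\mathbf{v}}_t)\|_2 \lesssim \epsilon_t$ from $E_t$-Lipschitz Hessian plus the smoothness of the diagonal correction block; and the $\nabla_1$, $\nabla_2$ differences directly via Assumption~\ref{as:meta-loss} ($B_t$, $C_t$). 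Collecting all $\epsilon_t$-proportional terms gives $F_t'$, a constant depending on $\rho_t$ (through the $\|\mathbf{G}_t\|_2$ and $\|\nabla_1\mathcal{L}_t^{\mathrm{val}}\| \le A_t$ factors and the bound on the diagonal block of $\mathbf{H}_t$).

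The main obstacle I anticipate is term (ii) — specifically, carefully bounding the variation of the correction block inside $\mathbf{H}_t(\mathbf{v}_t)$, namely $\frac{1}{2}\big(\mathbf{D}^{-1} + 2\diag(\nabla_{\mathbf{d}_t}\mathcal{L}_t^{\mathrm{tr}}(\mathbf{v}_t))\big)^2$, as a function of $\mathbf{v}_t$. This is a squared affine function of $\nabla_{\mathbf{d}_t}\mathcal{L}_t^{\mathrm{tr}}$, so its Lipschitz modulus picks up a factor proportional to $\|\mathbf{D}^{-1} + 2\diag(\nabla_{\mathbf{d}_t}\mathcal{L}_t^{\mathrm{tr}})\|_2$, which is where $\rho_t$ and $D_{\min}$ enter; one must check that this remains finite under the stated assumptions (it does, since $\rho_t$ is a finite quantity by construction and $D_{\min} > 0$). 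A secondary bookkeeping hazard is that $\mathbf{G}_t$ and $\mathbf{H}_t$ act on $\mathbb{R}^{2d}$ with block structure, so care is needed in tracking which Lipschitz constant applies to which block; but none of this is conceptually deep, and the final constants $F_t'$ and $G_t'$ can be written explicitly in terms of $A_t, B_t, C_t, D_t, E_t, \sigma_t, D_{\min}, D_{\max}, \rho_t$. The proof for the explicit bound (Theorem~\ref{theor:explicit}) uses the same decomposition skeleton, with the single difference that the Jacobian error $\delta_t$ replaces the role played by the CG error $\delta_t'$, multiplied by the Lipschitz constant $A_t$ of $\mathcal{L}_t^{\mathrm{val}}$ rather than by $\|\mathbf{G}_t\|_2$.
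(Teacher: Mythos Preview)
Your proposal is correct and follows essentially the same skeleton as the paper's proof: insert the intermediate point $\mathbf{G}_t(\hat{\mathbf{v}}_t)\mathbf{H}_t^{-1}(\hat{\mathbf{v}}_t)\nabla_1\mathcal{L}_t^{\mathrm{val}}(\hat{\mathbf{v}}_t,\boldsymbol{\theta})$, isolate the CG error as $\|\mathbf{G}_t(\hat{\mathbf{v}}_t)\|_2\,\delta_t'$, bound $\|\mathbf{G}_t(\hat{\mathbf{v}}_t)\|_2$ via $\rho_t$ and $D_{\min}$, and control the remaining $\hat{\mathbf{v}}_t\to\bar{\mathbf{v}}_t$ perturbation by Lipschitz estimates from Assumptions~\ref{as:meta-loss}--\ref{as:bounded-var}.

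The one organizational difference worth noting is in how the Jacobian perturbation $\mathbf{G}_t(\hat{\mathbf{v}}_t)\mathbf{H}_t^{-1}(\hat{\mathbf{v}}_t)-\mathbf{G}_t(\bar{\mathbf{v}}_t)\mathbf{H}_t^{-1}(\bar{\mathbf{v}}_t)$ is handled. You split it directly into a $\mathbf{G}_t$-difference plus an $\mathbf{H}_t^{-1}$-difference, then apply the resolvent identity to the latter. The paper instead recycles its Theorem~\ref{theor:explicit} machinery: it writes $\Delta := (\nabla\bar{\mathcal{A}}_t)^{-1}-(\mathbf{G}_t(\hat{\mathbf{v}}_t)\mathbf{H}_t^{-1}(\hat{\mathbf{v}}_t))^{-1}$, bounds $\|\Delta\|_2\le F_t^{\Delta}\epsilon_t$ via a four-term block-matrix decomposition (this is where the squared-diagonal term you flagged is dealt with), and then uses $A-B=A\,(B^{-1}-A^{-1})\,B$ on the product as a whole. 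Both routes require the same Lipschitz ingredients and produce constants of the same shape; the paper's version has the mild economy of reusing the $F_t^{\Delta}$ bound already established for Theorem~\ref{theor:explicit}, while yours is a bit more transparent since it never passes through the inverse of $\mathbf{G}_t\mathbf{H}_t^{-1}$. The obstacle you anticipated (Lipschitz control of $\tfrac{1}{2}(\mathbf{D}^{-1}+2\diag\nabla_{\mathbf{d}_t}\mathcal{L}_t^{\mathrm{tr}})^2$, picking up a $(1+2D_{\max}\rho_t)$-type factor) is exactly the computation the paper carries out in its bound on $\|(\bar{\mathbf{Q}}_t-\hat{\mathbf{Q}}_t)\hat{\mathbf{R}}_t\|_2$.
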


While the bound on implicit meta-gradient also depends on the task-level optimization error, the difference with Theorem~\ref{theor:explicit} is highlighted in the CG error. The fast convergence of CG leads to a tolerable $\delta_t'$ even with a small $L$. As a result, one can opt for a large $K$ to reduce task-level optimization error $\epsilon_t$, and a small $L$ to obtain a satisfactory approximation of the meta-gradient. 

 It is worth stressing that $\bar{\mathbf{v}}_t$ in Theorems~\ref{theor:explicit} and~\ref{theor:implicit} can denote \emph{any} local optimum. It further follows by definition that both $\delta_t$ and $\delta_t'$ do not rely on the choice of local optima, yet $\epsilon_t$ does. One final remark is now in order. 

\begin{remark}
\normalfont Theorems~\ref{theor:explicit} and~\ref{theor:implicit} can be further simplified under the additional assumption that $\mathcal{L}_t^{\mathrm{tr}} (\mathbf{v}_t)$ is $H_t$-Lipschitz. In such a case, we have $\rho_t \le H_t$, and thus the scalars $F_t$, $F_t'$ and $G_t'$ boil down to task-specific constants. 
\end{remark}


\section{Numerical tests}
Here we test and showcase on synthetic and real data the analytical novelties of this contribution. Our implementation relies on the PyTorch~\cite{pytorch}, and codes are available at \url{https://github.com/zhangyilang/iBaML}. 

\subsection{Synthetic data}
Here we experiment on the errors between explicit and implicit gradients
over a synthetic dataset. The data are generated using the Bayesian linear regression model
\begin{equation}
	y_t^n = \langle \boldsymbol{\theta}_t, \mathbf{x}_t^n \rangle + e_t^n,~\forall n,~~~t=1,\ldots,T
\end{equation}
where $\{ \boldsymbol{\theta}_t \}_{t=1}^T$ are i.i.d. samples drawn from a distribution $p(\boldsymbol{\theta}_t; \hat{\boldsymbol{\theta}})$ that is unknown during meta-training, and $e_t^n$ is the additive white Gaussian noise (AWGN) with known variance $\sigma^2$. Although the current training posterior $p(\boldsymbol{\theta}_t | y_t^{\mathrm{tr}}; \mathbf{X}_t^{\mathrm{tr}}, \boldsymbol{\theta} )$ becomes tractable, we still focus on the VI approximation for uniformity. Within this rudimentary linear case, it can be readily verified that the task-level optimum $\mathbf{v}_t^* := [\mathbf{m}_t^{*\top}, \mathbf{d}_t^{*\top}]^\top$ of~\eqref{eq:Bayesian-meta-VI} is given by
\begin{subequations}
\begin{align}
\label{eq:linear-opt-mean}
&	\mathbf{m}_t^* = \Big( \frac{1}{\sigma^2} \mathbf{X}_t^{\mathrm{tr}} (\mathbf{X}_t^{\mathrm{tr}} )^\top + \mathbf{D}^{-1} \Big)^{-1} \big( \mathbf{D}^{-1} \mathbf{m} + \frac{1}{\sigma^2} \mathbf{X}_t^{\mathrm{tr}} \mathbf{y}_t^{\mathrm{tr}} \big) \\
\label{eq:linear-opt-var}
&	\mathbf{d}_t^* = \Big( \frac{1}{2\sigma^2} \diag \big( \mathbf{X}_t^{\mathrm{tr}} (\mathbf{X}_t^{\mathrm{tr}})^\top \big) + \mathbf{d}^{-1} \Big)^{-1},~~ t=1\ldots,T
\end{align}
\end{subequations}
where $\diag(\mathbf{M})$ is a vector collecting the diagonal entries of matrix $\mathbf{M}$. The true posterior in the linear case is $p(\boldsymbol{\theta}_t | \mathbf{y}_t^{\mathrm{tr}}; \mathbf{X}_t^{\mathrm{tr}}, \boldsymbol{\theta} ) = \mathcal{N} (\mathbf{m}_t^*, \big( \frac{1}{2\sigma^2} ( \mathbf{X}_t^{\mathrm{tr}} (\mathbf{X}_t^{\mathrm{tr}})^\top ) + \mathbf{d}^{-1} \big)^{-1})$, implying that the posterior covariance matrix is essentially approximated by its diagonal counterpart $\mathbf{D}_t^*$ in VI. Lemma~\ref{lemma:implicit_Jacobi} and~\eqref{eq:Bayesian-meta-grad} imply that the oracle meta-gradient is
\begin{align}
	&\nabla_{\boldsymbol{\theta}} \mathcal{L}_t^{\mathrm{val}} (\mathbf{v}_t^* (\boldsymbol{\theta}), \boldsymbol{\theta}) \\
	& = \mathbf{G}_t (\mathbf{v}_t^*) \mathbf{H}_t^{-1} (\mathbf{v}_t^*) \nabla_1 \mathcal{L}_t^{\mathrm{val}}(\mathbf{v}_t^*, \boldsymbol{\theta}) 
	+ \nabla_2 \mathcal{L}_t^{\mathrm{val}}(\mathbf{v}^*_t, \boldsymbol{\theta}),~\forall t. \nonumber
\end{align}

As a benchmark meta-learning algorithm, we selected the amortized Bayesian meta-learning (ABML) in~\cite{ABML}. The metric used for performance assessment is the normalized root-mean-square error (NRMSE) between the true meta-gradient $\nabla_{\boldsymbol{\theta}} \mathcal{L}_t^{\mathrm{val}} (\mathbf{v}_t^* (\boldsymbol{\theta}), \boldsymbol{\theta})$, and the estimated meta-gradients $\nabla_{\boldsymbol{\theta}} \mathcal{L}_t^{\mathrm{val}} (\hat{\mathbf{v}}_t (\boldsymbol{\theta}), \boldsymbol{\theta})$ and $\hat{\mathbf{g}}_t$; see also the Appendix for additional details on the numerical test.

Figure~\ref{fig:linear_err} depicts the NRMSE as a function of $K$ for the first iteration of ABML, that is at the point $\boldsymbol{\theta} = \hat{\boldsymbol{\theta}}^0$. For explicit and implicit gradients, the NRMSE decreases as $K$ increases, while the former outperforms the latter for $K \le 5$, and the vice-versa for $K>5$. These observations confirm our analytical results. Intuitively, factors $F_t \epsilon_t$ and $F_t' \epsilon_t$ caused by imprecise task-level optimization dominate the upper bounds for small $K$, thus resulting in large NRMSE. Besides, implicit gradients are more sensitive to task-level optimization errors. One conjecture is that iBaML is developed based on Lemma~\ref{lemma:implicit_Jacobi}, where the matrix inversion can be sensitive to $\bar{\mathbf{v}}_t$'s variation. Despite that the conditioning number $\kappa$ of $\mathbf{X}_t^{\mathrm{tr}}$ takes on a large value purposely so that $\epsilon_t$ decreases slowly with $K$, a small $K$ suffices to capture accurately implicit gradients. The main reason is that the CG error $\delta_t'$ can become sufficiently small even with only $L = 2$ steps, while $\delta_t$ remains large because GD converges slowly. 

\begin{figure}[t]
	\centering
	\includegraphics[width=.9\columnwidth]{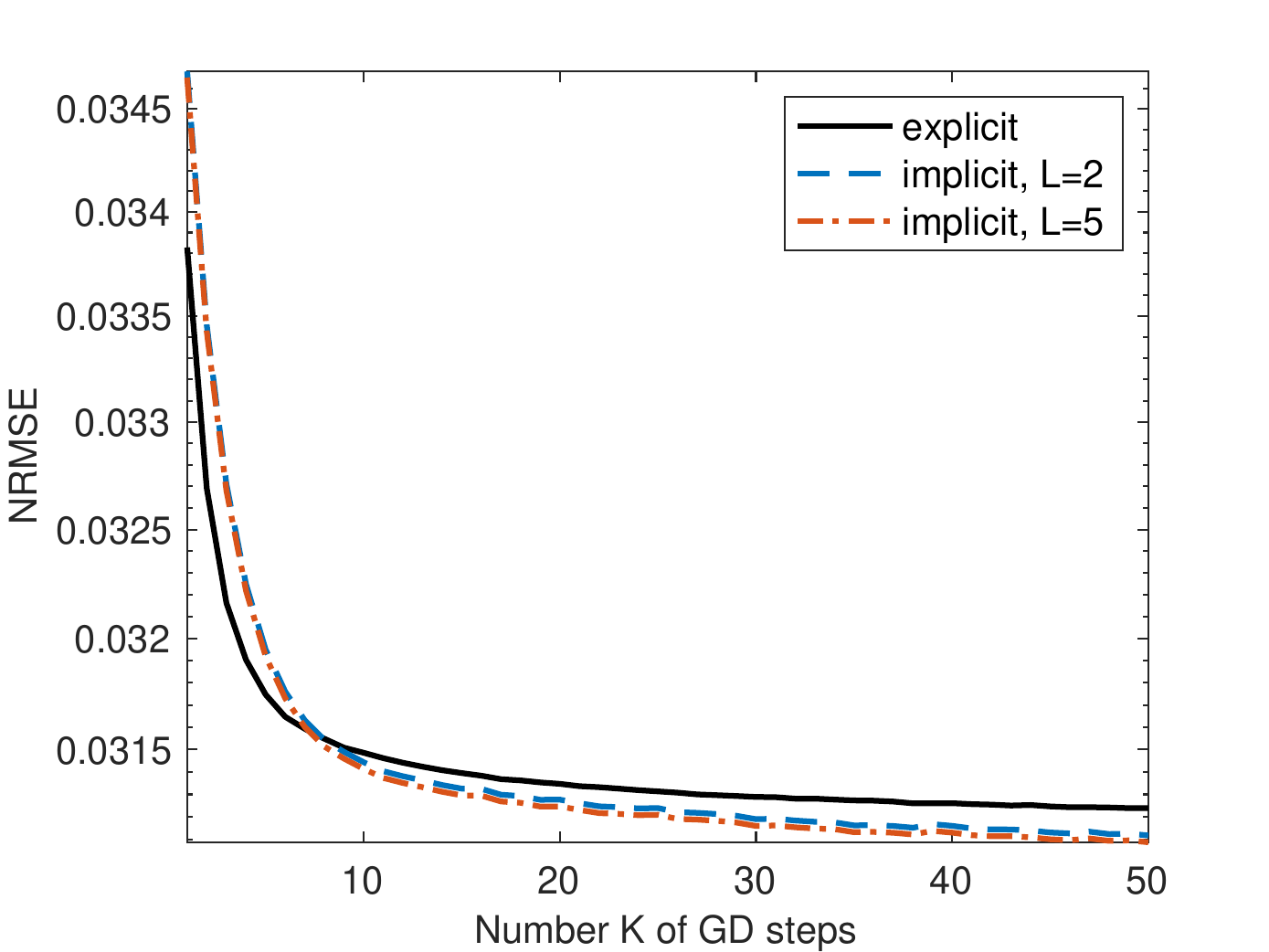}
    \vspace{-5pt}
	\caption{Gradient error comparison on synthetic dataset.}
	\label{fig:linear_err}
    \vspace{-5pt}
\end{figure}

\subsection{Real data}

\begin{figure*}[t]
\vspace{-5pt}
	\centering
	\subfloat[Time complexity]{
		\includegraphics[width=0.45\textwidth]{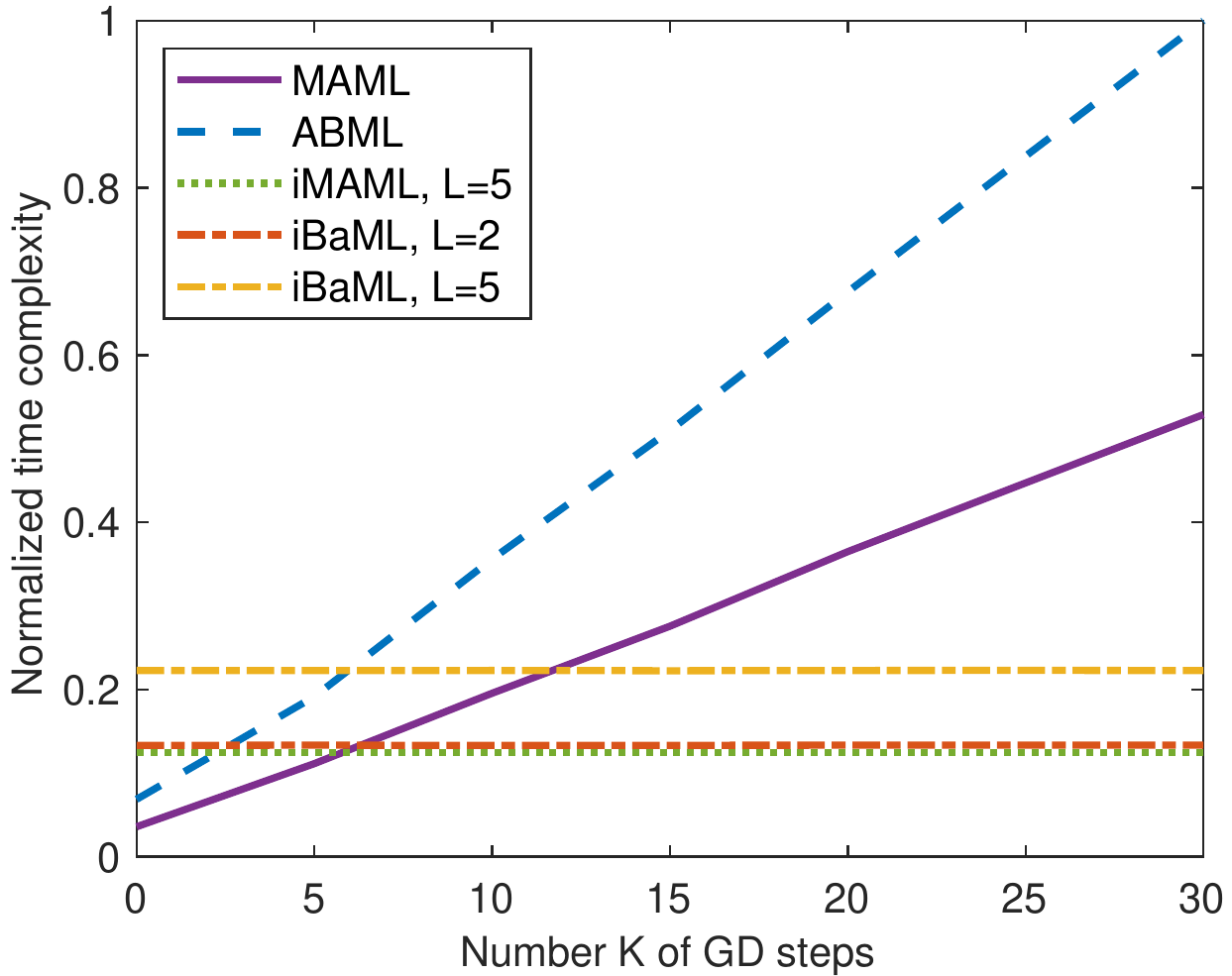}
		}
	\subfloat[Space complexity]{
		\includegraphics[width=0.45\textwidth]{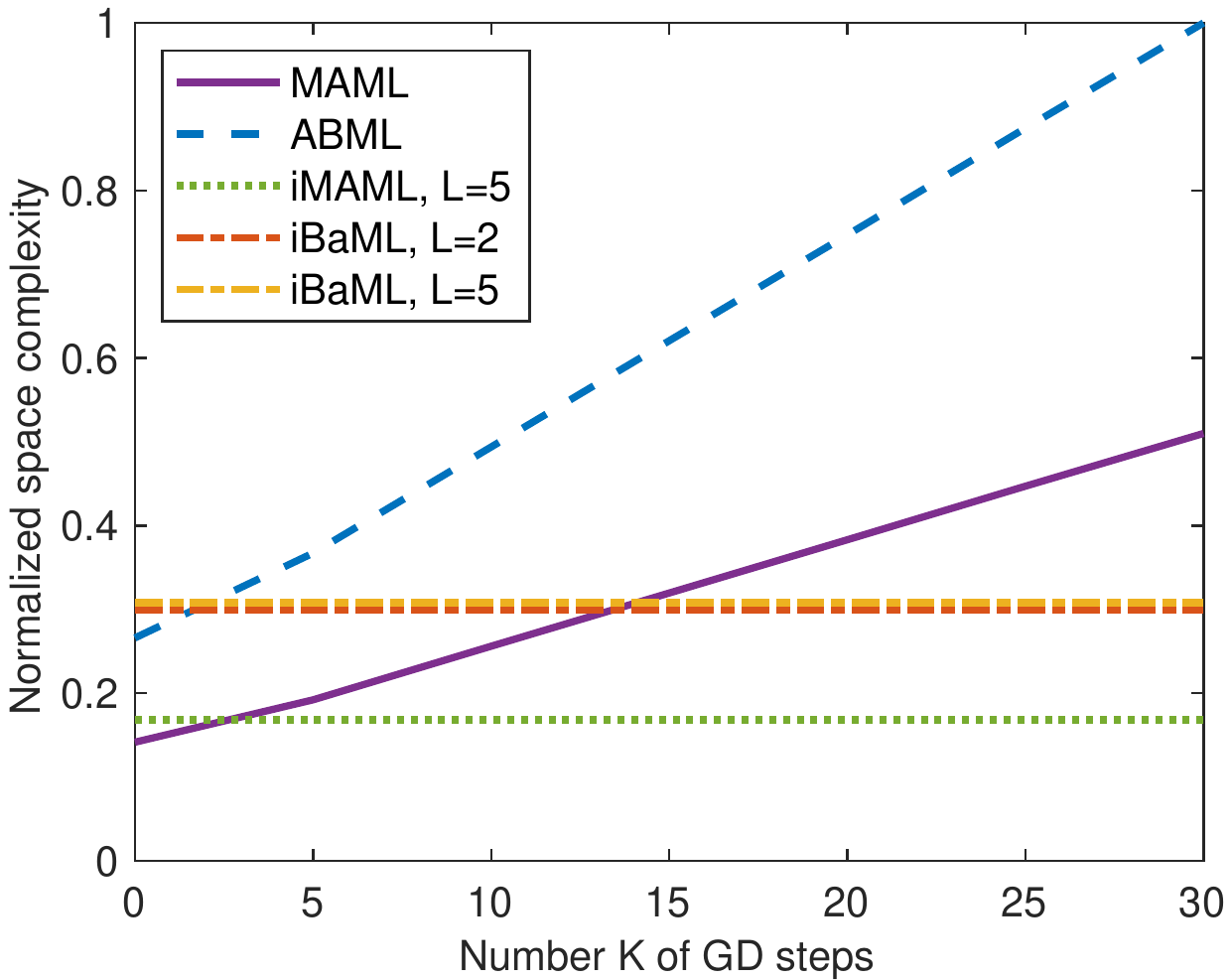}
		}
    \vspace{-5pt}
    \caption{Time and space complexity comparisons for meta-gradients computation on $5$-class $1$-shot \textit{mini}ImageNet dataset.}
	\label{fig:complexity}
\end{figure*}
Next, we conduct tests to assess the performance of iBaML on real datasets. We consider one of the most widely used few-shot dataset for classification  \textit{mini}ImageNet~\cite{MatchingNets}. This dataset consists of natural images categorized in $100$ classes, with $600$ samples per class. All images are cropped to have size of $84 \times 84$. We adopt the dataset splitting suggested by~\cite{opt-as}, where $64$, $16$ and $20$ disjoint classes are used for meta-training, meta-validation and meta-testing, respectively. The setups of the numerical test follow from the standard $W$-class $S^{\mathrm{tr}}$-shot few-shot learning protocol in~\cite{MatchingNets}. In particular, each task has $W$ randomly selected classes, and each class contains $S^{\mathrm{tr}}$ training images and $S^{\mathrm{val}}$ validation images. In other words, we have $N^{\mathrm{tr}} = S^{\mathrm{tr}} W$ and $N^{\mathrm{val}} = S^{\mathrm{val}} W$. We further adopt the typical choices with $W = 5$, $S^{\mathrm{tr}} \in \{ 1, 5 \}$, and $S^{\mathrm{val}} = 15$. It should be noted that the training and validation sets are also known as support and query sets in the context of few-shot learning.

We first empirically compare the computational complexity (time and space) for explicit versus implicit gradients on the $5$-class $1$-shot \textit{mini}ImageNet dataset. Here we are only interested in backward complexity, so the delay and memory requirements for forward pass of $\hat{\mathcal{A}}_t$ is excluded. Figure~\ref{fig:complexity}(a) plots the time complexity of explicit and implicit gradients against $K$. It is observed that the time complexity of explicit gradient grows linearly with $K$, while the implicit one increases only with $L$ but not $K$. Moreover, the explicit and implicit gradients have comparable time complexity when $K = L$. As far as space complexity, Figure~\ref{fig:complexity}(b) illustrates that memory usage with explicit gradients is proportional to $K$. In contrast, the memory used in the implicit gradient algorithms is nearly invariant across $K$ values. Such a memory-saving property is important when meta-learning is employed with models of growing degrees of freedom. Furthermore, one may also notice from both figures that MAML and iMAML incur about $50\%$ time/space complexities of ABML and iBaML. This is because non-Bayesian approaches only optimize the mean vector of the Gaussian prior, whose dimension is $d$, while the probabilistic methods cope with both the mean and diagonal covariance matrix of the pdf with corresponding dimension $2d$. This increase in dimensionality doubles the space-time complexity in gradient computations. 

\begin{table}[t]
\centering
\begin{tabular}{lcc}
\toprule
Method & nll & accuracy \\
\midrule
MAML, $K=5$ & $0.967 \pm 0.017$ & $63.1 \pm 0.92\%$ \\
ABML, $K=5$		& $0.957 \pm 0.016$ & $62.8 \pm 0.74 \%$ \\
iBaML, $K=5$	    & $0.965 \pm 0.018$ & $63.2 \pm 0.74 \%$ \\
iBaML, $K=10$	& $0.947 \pm 0.017$ & $64.0 \pm 0.75 \%$ \\
iBaML, $K=15$	& $0.943 \pm 0.017$ & $64.0 \pm 0.74 \%$ \\
\bottomrule
\end{tabular}
\caption{Test negative log-likelihood (nll) and accuracy comparison on $5$-class $5$-shot \textit{mini}ImageNet dataset. The $\pm$ sign indicates the $95\%$ confidence interval.}
\label{tab:nll-acc}
\vspace{-10pt}
\end{table}

Next, we demonstrate the effectiveness of iBaML in reducing the Bayesian meta-learning loss. The test is conducted on the $5$-class $5$-shot \textit{mini}ImageNet. The model is a standard $4$-layer $32$-channel convolutional neural network, and the chosen baseline algorithms are MAML~\cite{MAML} and ABML~\cite{ABML}; see also the Appendix for alternative setups. Due to the large number of training tasks, it is impractical to compute the exact meta-training loss. As an alternative, we adopt the `test nll' (averaged over $1,000$ test tasks) as our metric, and also report their corresponding accuracy. For fairness, we set $L = 5$ when implementing the implicit gradients so that the time complexity is similar to explicit one with $K = 5$. The results are listed in Table~\ref{tab:nll-acc}. It is observed that both nll and accuracy improve with $K$, implying that the meta-learning loss can be effectively reduced by trading a small error in gradient estimation. 

\begin{figure}[t]
\vspace{-5pt}
	\centering
	\includegraphics[width=.8\columnwidth]{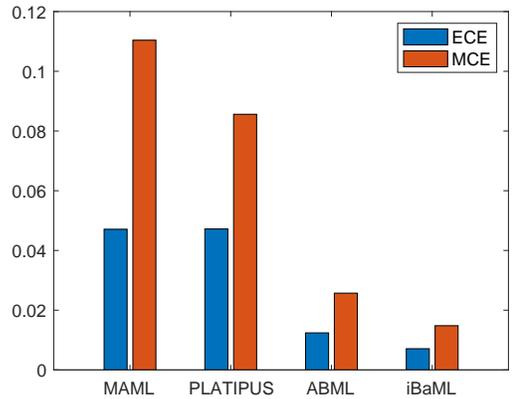}
	\vspace{-5pt}
    \caption{Calibration errors on 5-class 1-shot \textit{mini}ImageNet.}
	\label{fig:ece-mce}
\vspace{-5pt}
\end{figure}


To quantify the uncertainties embedded in state-of-the-art meta-learning methods, Figure~\ref{fig:ece-mce} plots the expected/maximum calibration errors (ECE/MCE)~\cite{MCE-ECE}. It can be seen that iBaML is once again the most competitive among tested approaches.

\section{Conclusions}
This paper develops a novel so-termed iBaML approach to enhance the scalablity of Bayesian meta-learning. At the core of iBaML is an estimate of meta-gradients using implicit differentiation. Analysis reveals that the estimation error is upper bounded by task-level optimization and CG errors, and these two can be significantly reduced with only a slight increase in time complexity. In addition, the required computational complexity is invariant to the task-level optimization trajectory, what allows iBaML to deal with complicated task-level optimization. Besides analytical performance, extensive numerical tests on synthetic and real datasets are also conducted and demonstrate the appealing merits of iBaML over competing alternatives.

\section*{Acknowledgments}
This work was supported in part by NSF grants 2220292, 2212318, 2126052, and 2128593.

\bibliography{ref}

\clearpage
\onecolumn
\setcounter{lemma}{0}
\setcounter{theorem}{0}

\section*{Appendix}
\subsection*{A.1 Proof of Lemma~\ref{lemma:implicit_Jacobi}}
\begin{lemma}[Restated]
Consider the Bayesian meta-learning problem~\eqref{eq:Bayesian-meta-VI}. Let $\bar{\mathbf{v}}_t := [\bar{\mathbf{m}}_t^\top, \bar{\mathbf{d}}_t^\top]^\top$ be a local minimum of the task-level KL-divergence generated by $\bar{\mathcal{A}}_t (\boldsymbol{\theta})$; and, $\mathcal{L}_t^{\mathrm{tr}}(\mathbf{v}_t) := \mathbb{E}_{q(\boldsymbol{\theta}_t; \mathbf{v}_t)} [ -\log p(\mathbf{y}_t^{\mathrm{tr}} | \boldsymbol{\theta}_t; \mathbf{X}_t^{\mathrm{tr}}) ]$ the expected negative log-likelihood (nll) on $\mathcal{D}_t^{\mathrm{tr}}$. If $\mathbf{H}_t (\bar{\mathbf{v}}_t) := \nabla^2 \mathcal{L}_t^{\mathrm{tr}} (\bar{\mathbf{v}}_t) + 
	\left[ \begin{matrix}
		\mathbf{D}^{-1} & \mathbf{0}_d \\
		\mathbf{0}_d & \frac{1}{2} \big( \mathbf{D}^{-1} + 2 \diag \big( \nabla_{\bar{\mathbf{d}}_t} \mathcal{L}_t^{\mathrm{tr}} (\bar{\mathbf{v}}_t) \big) \big)^2
	\end{matrix} \right]$ is invertible, it then holds for $t \in \{ 1, \ldots, T \}$ that
\begin{equation}
	\nabla \bar{\mathcal{A}}_t (\boldsymbol{\theta}) = \left[ \begin{matrix}
		\mathbf{D}^{-1} & \mathbf{0}_d \\
		-\diag(\nabla_{\bar{\mathbf{m}}_t} \mathcal{L}_t^{\mathrm{tr}}(\bar{\mathbf{v}}_t) ) \mathbf{D}^{-1} & \frac{1}{2}\mathbf{D}^{-2}
	\end{matrix} \right]
	\mathbf{H}_t^{-1} (\bar{\mathbf{v}}_t).
\end{equation}
\end{lemma}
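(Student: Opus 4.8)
The plan is to recognize the task-level KL objective as the sum of a $\boldsymbol{\theta}$-free expected negative log-likelihood and a Gaussian-to-Gaussian KL term that carries all of the $\boldsymbol{\theta}$-dependence, and then to differentiate its first-order optimality condition implicitly. Write $J_t(\mathbf{v}_t,\boldsymbol{\theta}) := \KL\big( q(\boldsymbol{\theta}_t;\mathbf{v}_t) \,\big\|\, p(\boldsymbol{\theta}_t \,|\, \mathbf{y}_t^{\mathrm{tr}};\mathbf{X}_t^{\mathrm{tr}},\boldsymbol{\theta}) \big)$. Substituting Bayes' rule for the posterior and rearranging gives $J_t(\mathbf{v}_t,\boldsymbol{\theta}) = \mathcal{L}_t^{\mathrm{tr}}(\mathbf{v}_t) + \KL\big( \mathcal{N}(\mathbf{m}_t,\mathbf{D}_t) \,\big\|\, \mathcal{N}(\mathbf{m},\mathbf{D}) \big) + c_t(\boldsymbol{\theta})$, where $c_t(\boldsymbol{\theta})$ is constant in $\mathbf{v}_t$ and hence irrelevant below. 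Since the prior and surrogate posterior are diagonal Gaussians, the middle term equals $\tfrac12\sum_{i=1}^d\big( [\mathbf{d}_t]_i/[\mathbf{d}]_i + ([\mathbf{m}]_i-[\mathbf{m}_t]_i)^2/[\mathbf{d}]_i - 1 + \log[\mathbf{d}]_i - \log[\mathbf{d}_t]_i \big)$, so $J_t$ is smooth in $(\mathbf{v}_t,\boldsymbol{\theta})$ and $\mathcal{L}_t^{\mathrm{tr}}$ enters only through $\nabla\mathcal{L}_t^{\mathrm{tr}}$ and $\nabla^2\mathcal{L}_t^{\mathrm{tr}}$; no explicit form of the expectation is needed.

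Next I would invoke the implicit function theorem on the stationarity condition. Because $\bar{\mathbf{v}}_t = \bar{\mathcal{A}}_t(\boldsymbol{\theta})$ is a local minimum of $J_t(\cdot,\boldsymbol{\theta})$ for $\boldsymbol{\theta}$ in a neighborhood, $\nabla_{\mathbf{v}_t} J_t(\bar{\mathcal{A}}_t(\boldsymbol{\theta}),\boldsymbol{\theta}) \equiv \mathbf{0}$; differentiating this identity in $\boldsymbol{\theta}$ and applying the chain rule --- with the paper's layout $[\nabla\bar{\mathcal{A}}_t]_{ij} = \partial[\bar{\mathbf{v}}_t]_j/\partial[\boldsymbol{\theta}]_i$ and using that $\nabla^2_{\mathbf{v}_t}J_t$ is symmetric --- yields $\nabla\bar{\mathcal{A}}_t(\boldsymbol{\theta})\,\nabla^2_{\mathbf{v}_t}J_t(\bar{\mathbf{v}}_t,\boldsymbol{\theta}) = -\,\nabla^2_{\boldsymbol{\theta},\mathbf{v}_t}J_t(\bar{\mathbf{v}}_t,\boldsymbol{\theta})$. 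I would then identify $\nabla^2_{\mathbf{v}_t}J_t(\bar{\mathbf{v}}_t,\boldsymbol{\theta}) = \mathbf{H}_t(\bar{\mathbf{v}}_t)$, so that the invertibility hypothesis of the lemma is precisely what lets us solve $\nabla\bar{\mathcal{A}}_t(\boldsymbol{\theta}) = -\,\nabla^2_{\boldsymbol{\theta},\mathbf{v}_t}J_t(\bar{\mathbf{v}}_t,\boldsymbol{\theta})\,\mathbf{H}_t^{-1}(\bar{\mathbf{v}}_t)$; it then remains only to check $-\,\nabla^2_{\boldsymbol{\theta},\mathbf{v}_t}J_t(\bar{\mathbf{v}}_t,\boldsymbol{\theta}) = \mathbf{G}_t(\bar{\mathbf{v}}_t)$.

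Both identities reduce to a block-by-block differentiation of the Gaussian-KL term, the only $\boldsymbol{\theta}$-dependent piece. Twice-differentiating it in $\mathbf{v}_t$ adds a block-diagonal matrix with $\mathbf{m}_t$-block $\mathbf{D}^{-1}$, $\mathbf{d}_t$-block $\tfrac12\mathbf{D}_t^{-2}$, and zero off-diagonal blocks; hence $\nabla^2_{\mathbf{v}_t}J_t(\bar{\mathbf{v}}_t,\boldsymbol{\theta})$ equals $\nabla^2\mathcal{L}_t^{\mathrm{tr}}(\bar{\mathbf{v}}_t)$ plus that matrix evaluated at $\bar{\mathbf{v}}_t$, and the $\mathbf{d}_t$-block of stationarity, $\nabla_{\bar{\mathbf{d}}_t}\mathcal{L}_t^{\mathrm{tr}}(\bar{\mathbf{v}}_t) = \tfrac12(\bar{\mathbf{d}}_t^{-1} - \mathbf{d}^{-1})$, that is $\bar{\mathbf{d}}_t^{-1} = \mathbf{d}^{-1} + 2\nabla_{\bar{\mathbf{d}}_t}\mathcal{L}_t^{\mathrm{tr}}(\bar{\mathbf{v}}_t)$, rewrites $\tfrac12\bar{\mathbf{D}}_t^{-2}$ as the stated $\tfrac12\big( \mathbf{D}^{-1} + 2\diag(\nabla_{\bar{\mathbf{d}}_t}\mathcal{L}_t^{\mathrm{tr}}(\bar{\mathbf{v}}_t)) \big)^2$, so this Hessian is exactly $\mathbf{H}_t(\bar{\mathbf{v}}_t)$. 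For the mixed matrix, differentiating the KL term once in $\mathbf{v}_t$ and once in $\boldsymbol{\theta}$ produces $(\mathbf{m},\mathbf{m}_t)$-block $-\mathbf{D}^{-1}$, vanishing $(\mathbf{m},\mathbf{d}_t)$-block, $(\mathbf{d},\mathbf{d}_t)$-block $-\tfrac12\mathbf{D}^{-2}$, and $(\mathbf{d},\mathbf{m}_t)$-block $-\mathbf{D}^{-2}\diag(\bar{\mathbf{m}}_t - \mathbf{m})$; the $\mathbf{m}_t$-block of stationarity, $\mathbf{D}^{-1}(\bar{\mathbf{m}}_t - \mathbf{m}) = -\nabla_{\bar{\mathbf{m}}_t}\mathcal{L}_t^{\mathrm{tr}}(\bar{\mathbf{v}}_t)$, turns the last block into $\diag(\nabla_{\bar{\mathbf{m}}_t}\mathcal{L}_t^{\mathrm{tr}}(\bar{\mathbf{v}}_t))\,\mathbf{D}^{-1}$, and negating the whole matrix recovers $\mathbf{G}_t(\bar{\mathbf{v}}_t)$, which finishes the proof. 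I expect the main obstacle to be bookkeeping rather than conceptual: keeping the transposed-Jacobian layout consistent throughout so the chain rule outputs $\mathbf{G}_t\mathbf{H}_t^{-1}$ and not its transpose, and --- the one genuinely non-mechanical step --- recognizing that the peculiar-looking entries $\tfrac12\big(\mathbf{D}^{-1} + 2\diag(\nabla_{\bar{\mathbf{d}}_t}\mathcal{L}_t^{\mathrm{tr}})\big)^2$ and $-\diag(\nabla_{\bar{\mathbf{m}}_t}\mathcal{L}_t^{\mathrm{tr}})\mathbf{D}^{-1}$ are nothing but $\tfrac12\bar{\mathbf{D}}_t^{-2}$ and $-\mathbf{D}^{-2}\diag(\bar{\mathbf{m}}_t - \mathbf{m})$ after back-substituting the first-order conditions --- which is exactly what makes the final Jacobian depend on $\boldsymbol{\theta}$ only through $\mathbf{D}$.
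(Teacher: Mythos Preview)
Your proposal is correct and follows essentially the same route as the paper: decompose the task-level KL into $\mathcal{L}_t^{\mathrm{tr}}$ plus the closed-form Gaussian--Gaussian KL, implicitly differentiate the first-order stationarity condition, and use the stationarity equations themselves to rewrite $\tfrac12\bar{\mathbf{D}}_t^{-2}$ and $\mathbf{D}^{-2}\diag(\bar{\mathbf{m}}_t-\mathbf{m})$ in terms of $\nabla\mathcal{L}_t^{\mathrm{tr}}(\bar{\mathbf{v}}_t)$. The only organizational difference is that the paper first casts stationarity as the fixed-point equations $\bar{\mathbf{m}}_t = \mathbf{m} - \mathbf{D}\nabla_{\bar{\mathbf{m}}_t}\mathcal{L}_t^{\mathrm{tr}}$ and $\bar{\mathbf{d}}_t = (\mathbf{d}^{-1} + 2\nabla_{\bar{\mathbf{d}}_t}\mathcal{L}_t^{\mathrm{tr}})^{-1}$, differentiates each in $\mathbf{m}$ and $\mathbf{d}$ to obtain four coupled block equations for $\nabla\bar{\mathcal{A}}_t$, and then solves the resulting linear matrix system; you instead compute $\nabla^2_{\mathbf{v}_t}J_t$ and $\nabla^2_{\boldsymbol{\theta},\mathbf{v}_t}J_t$ directly and plug into the implicit-function formula, which is a bit more streamlined but mathematically identical.
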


\begin{proof}
We first write out the evidence lower bound (ELBO) of the VI in~\eqref{eq:Bayesian-global-min}. 
\begin{align}
	&\KL \big( q(\boldsymbol{\theta}_t ; \mathbf{v}_t) \big\| p(\boldsymbol{\theta}_t | \mathbf{y}_t^{\mathrm{tr}}; \mathbf{X}_t^{\mathrm{tr}}, \boldsymbol{\theta} ) \big) \nonumber 
	= \int q(\boldsymbol{\theta}_t ; \mathbf{v}_t) \log \frac{q(\boldsymbol{\theta}_t ; \mathbf{v}_t)}{p(\boldsymbol{\theta}_t | \mathbf{y}_t^{\mathrm{tr}}; \mathbf{X}_t^{\mathrm{tr}}, \boldsymbol{\theta} )} \nonumber 
	= \int q(\boldsymbol{\theta}_t ; \mathbf{v}_t) \log \frac{q(\boldsymbol{\theta}_t ; \mathbf{v}_t) p(\mathbf{y}_t^{\mathrm{tr}}; \mathbf{X}_t^{\mathrm{tr}}, \boldsymbol{\theta})} {p(\mathbf{y}_t^{\mathrm{tr}}, \boldsymbol{\theta}_t; \mathbf{X}_t^{\mathrm{tr}}, \boldsymbol{\theta} )} \nonumber \\
	&= \int q(\boldsymbol{\theta}_t ; \mathbf{v}_t) \log \frac{q(\boldsymbol{\theta}_t ; \mathbf{v}_t) p(\mathbf{y}_t^{\mathrm{tr}}; \mathbf{X}_t^{\mathrm{tr}}, \boldsymbol{\theta})} {p(\mathbf{y}_t^{\mathrm{tr}} | \boldsymbol{\theta}_t; \mathbf{X}_t^{\mathrm{tr}} ) p(\boldsymbol{\theta}_t; \boldsymbol{\theta})} \nonumber 
	= \mathbb{E}_{q(\boldsymbol{\theta}_t ; \mathbf{v}_t)} [ -\log p(\mathbf{y}_t^{\mathrm{tr}} | \boldsymbol{\theta}_t; \mathbf{X}_t^{\mathrm{tr}}) ] + \mathbb{E}_{q(\boldsymbol{\theta}_t ; \mathbf{v}_t)} \Big[ \log \frac{q(\boldsymbol{\theta}_t ; \mathbf{v}_t)} {p(\boldsymbol{\theta}_t; \boldsymbol{\theta})} \Big] \\
	&+ \mathbb{E}_{q(\boldsymbol{\theta}_t ; \mathbf{v}_t)} [\log p(\mathbf{y}_t^{\mathrm{tr}}; \mathbf{X}_t^{\mathrm{tr}}, \boldsymbol{\theta} )] \nonumber 
	= \mathcal{L}_t^{\mathrm{tr}} (\mathbf{v}_t) + \KL \big( q(\boldsymbol{\theta}_t ; \mathbf{v}_t) \big\| p(\boldsymbol{\theta}_t; \boldsymbol{\theta} ) \big) + \log p(\mathbf{y}_t^{\mathrm{tr}}; \mathbf{X}_t^{\mathrm{tr}}, \boldsymbol{\theta}) \nonumber = -\mathrm{ELBO} + \log p(\mathbf{y}_t^{\mathrm{tr}}; \mathbf{X}_t^{\mathrm{tr}}, \boldsymbol{\theta})
\end{align} 
where $\mathrm{ELBO} := -\mathcal{L}_t^{\mathrm{tr}} (\mathbf{v}_t) - \KL \big( q(\boldsymbol{\theta}_t ; \mathbf{v}_t) \big\| p(\boldsymbol{\theta}_t; \boldsymbol{\theta} ) \big)$. Minimizing the KL divergence 
amounts to maximizing the ELBO. 

From the definitions $\boldsymbol{\theta} := [\mathbf{m}^\top, \mathbf{d}^\top]^\top$ and $\bar{\mathbf{v}}_t := \bar{\mathcal{A}}_t (\boldsymbol{\theta}) = [\bar{\mathbf{m}}_t^\top, \bar{\mathbf{d}}_t^\top]^\top$, we can write the desired gradient as a block matrix
\begin{equation}
	\nabla \bar{\mathcal{A}}_t (\boldsymbol{\theta}) = 
	\left[ \begin{matrix}
		\nabla_{\mathbf{m}} \bar{\mathbf{m}}_t & \nabla_{\mathbf{m}} \bar{\mathbf{d}}_t \\
		\nabla_{\mathbf{d}} \bar{\mathbf{m}}_t & \nabla_{\mathbf{d}} \bar{\mathbf{d}}_t
	\end{matrix} \right]
\end{equation}
where with a slight abuse in notation $\nabla_{\mathbf{m}} \bar{\mathcal{A}}_t (\boldsymbol{\theta}) = [\nabla_{\mathbf{m}} \bar{\mathbf{m}}_t, \nabla_{\mathbf{m}} \bar{\mathbf{d}}_t]$ and $\nabla_{\mathbf{d}} \bar{\mathcal{A}}_t (\boldsymbol{\theta}) = [\nabla_{\mathbf{d}} \bar{\mathbf{m}}_t, \nabla_{\mathbf{d}} \bar{\mathbf{d}}_t]$ denote partial gradients. The next step is to express $\nabla_{\mathbf{m}} \bar{\mathcal{A}}_t (\boldsymbol{\theta})$ as a function of itself to leverage the implicit differentiation. 

Since $\bar{\mathbf{\mathbf{v}}}_t$ is a local minimum of $\KL \big( q(\boldsymbol{\theta}_t ; \mathbf{v}_t) \big\| p(\boldsymbol{\theta}_t ; \mathbf{y}_t^{\mathrm{tr}}; \mathbf{X}_t^{\mathrm{tr}}, \boldsymbol{\theta} ) \big)$, it maximizes the ELBO. The first-order necessary condition for optimality thus yields
\begin{equation}
\label{eq:first-order-cond}
	-\nabla \mathcal{L}_t^{\mathrm{tr}} (\bar{\mathbf{v}}_t) - \nabla_{\bar{\mathbf{v}}_t} \KL \big( q(\boldsymbol{\theta}_t; \bar{\mathbf{v}}_t) \big\| p(\boldsymbol{\theta}_t; \boldsymbol{\theta}) \big) = \mathbf{0}.
\end{equation}
Upon defining $\bar{\mathbf{D}}_t := \diag(\bar{\mathbf{d}}_t)$, the KL-divergence of Gaussian distributions can be written as 
\begin{align}
\label{eq:KL-Gaussian}
	\KL \big( q(\boldsymbol{\theta}_t; \bar{\mathbf{v}}_t) \big\| p(\boldsymbol{\theta}_t; \boldsymbol{\theta}) \big) 
	& = \frac{1}{2} \Big( \tr( \mathbf{D}^{-1} \bar{\mathbf{D}}_t ) - n + (\mathbf{m} - \bar{\mathbf{m}}_t)^\top \mathbf{D}^{-1} (\mathbf{m} - \bar{\mathbf{m}}_t) + \log \frac{| \mathbf{D}|}{| \bar{\mathbf{D}}_t |} \Big) \nonumber \\
	&= \frac{1}{2} \sum_{i=1}^d  \Big(\frac{[\bar{\mathbf{d}}_t]_i}{[\mathbf{d}]_i} - 1 + \frac{([\mathbf{m}]_i - [\bar{\mathbf{m}}_t]_i)^2}{[\mathbf{d}]_i} + \log [\mathbf{d}]_i - \log [\bar{\mathbf{d}}_t]_i \Big),
\end{align}
and after plugging~\eqref{eq:KL-Gaussian} into~\eqref{eq:first-order-cond} and rearranging terms, we arrive at 
\begin{equation}
\label{eq:post-mean}
	\bar{\mathbf{m}}_t = \mathbf{m} - \mathbf{D} \nabla_{\bar{\mathbf{m}}_t} \mathcal{L}_t^{\mathrm{tr}} (\bar{\mathbf{v}}_t)
\end{equation} 
and
\begin{equation}
\label{eq:post-var}
	\bar{\mathbf{d}}_t = \Big( \mathbf{d}^{-1} + 2\nabla_{\bar{\mathbf{d}}_t} \mathcal{L}_t^{\mathrm{tr}} (\bar{\mathbf{v}}_t) \Big)^{-1}
\end{equation}
where we used $\mathbf{v}^{-1}$ to represent the element-wise inverse of a general vector $\mathbf{v}$. 

Then, taking gradient w.r.t. $\boldsymbol{\theta} = [ \mathbf{m}^\top, \mathbf{d}^\top ]^\top$ on both sides of~\eqref{eq:post-mean}, and employing the chain rule results in
\begin{equation}
\label{eq:implicit-relation-1}
	\nabla_{\mathbf{m}} \bar{\mathbf{m}}_t = \mathbf{I}_d - \big( \nabla_{\mathbf{m}} \bar{\mathbf{m}}_t \nabla_{\bar{\mathbf{m}}_t}^2 \mathcal{L}_t^{\mathrm{tr}} (\bar{\mathbf{v}}_t) + \nabla_{\mathbf{m}} \bar{\mathbf{d}}_t \nabla_{\bar{\mathbf{d}}_t} \nabla_{\bar{\mathbf{m}}_t} \mathcal{L}_t^{\mathrm{tr}} (\bar{\mathbf{v}}_t) \big) \mathbf{D}
\end{equation}
and 
\begin{equation}
	\nabla_{\mathbf{d}} \bar{\mathbf{m}}_t = -\diag \big( \nabla_{\bar{\mathbf{m}}_t} \mathcal{L}_t^{\mathrm{tr}} (\bar{\mathbf{v}}_t) \big) - \big( \nabla_{\mathbf{d}} \bar{\mathbf{m}}_t \nabla_{\bar{\mathbf{m}}_t}^2 \mathcal{L}_t^{\mathrm{tr}} (\bar{\mathbf{v}}_t) + \nabla_{\mathbf{d}} \bar{\mathbf{d}}_t \nabla_{\bar{\mathbf{d}}_t} \nabla_{\bar{\mathbf{m}}_t} \mathcal{L}_t^{\mathrm{tr}} (\bar{\mathbf{v}}_t) \big) \mathbf{D}.
\end{equation}
Applying the same operation to~\eqref{eq:post-var}, yields 
\begin{equation}
	\nabla_{\mathbf{m}} \bar{\mathbf{d}}_t = -2 \big( \nabla_{\mathbf{m}} \bar{\mathbf{d}}_t \nabla_{\bar{\mathbf{d}}_t}^2 \mathcal{L}_t^{\mathrm{tr}} (\bar{\mathbf{v}}_t) + \nabla_{\mathbf{m}} \bar{\mathbf{m}}_t \nabla_{\bar{\mathbf{m}}_t} \nabla_{\bar{\mathbf{d}}_t} \mathcal{L}_t^{\mathrm{tr}} (\bar{\mathbf{v}}_t) \big) \bar{\mathbf{D}}_t^2
\end{equation}
and 
\begin{equation}
\label{eq:implicit-relation-4}
	\nabla_{\mathbf{d}} \bar{\mathbf{d}}_t = -\big( -\mathbf{D}^{-2} + 2\nabla_{\mathbf{d}} \bar{\mathbf{d}}_t \nabla_{\bar{\mathbf{d}}_t}^2 \mathcal{L}_t^{\mathrm{tr}} (\bar{\mathbf{v}}_t) + 2\nabla_{\mathbf{d}} \bar{\mathbf{m}}_t \nabla_{\bar{\mathbf{m}}_t} \nabla_{\bar{\mathbf{d}}_t} \mathcal{L}_t^{\mathrm{tr}} (\bar{\mathbf{v}}_t) \big) \bar{\mathbf{D}}_t^2.
\end{equation}
So far, we have written the four blocks of $\nabla \bar{\mathcal{A}}_t (\boldsymbol{\theta})$ as a function of themselves through implicit differentiation. Hence, the last step is to solve for these four blocks from the linear equations~\eqref{eq:implicit-relation-1}-\eqref{eq:implicit-relation-4}. 

Directly solving this linear system of equations will produce complicated results. The trick here is to reformulate them into a compact matrix form:
\begin{align}
	&\nabla \bar{\mathcal{A}}_t (\boldsymbol{\theta}) \nonumber \\
	&= \left(
	\left[ \begin{matrix}
		\mathbf{I}_d & \mathbf{0}_d \\
		-\diag \big( \nabla_{\bar{\mathbf{m}}_t} \mathcal{L}_t^{\mathrm{tr}} (\bar{\mathbf{v}}_t) \big) & -\mathbf{D}^{-2}
	\end{matrix} \right]
	- \left[ \begin{matrix}
		\nabla_{\mathbf{m}} \bar{\mathbf{m}}_t & \nabla_{\mathbf{m}} \bar{\mathbf{d}}_t \\
		\nabla_{\mathbf{d}} \bar{\mathbf{m}}_t & \nabla_{\mathbf{d}} \bar{\mathbf{d}}_t
	\end{matrix} \right]
	\left[ \begin{matrix}
		\nabla_{\bar{\mathbf{m}}_t}^2 \mathcal{L}_t^{\mathrm{tr}} (\bar{\mathbf{v}}_t)  & \nabla_{\bar{\mathbf{m}}_t} \nabla_{\bar{\mathbf{d}}_t} \mathcal{L}_t^{\mathrm{tr}} (\bar{\mathbf{v}}_t) \\
		\nabla_{\bar{\mathbf{d}}_t} \nabla_{\bar{\mathbf{m}}_t} \mathcal{L}_t^{\mathrm{tr}} (\bar{\mathbf{v}}_t) & \nabla_{\bar{\mathbf{d}}_t}^2 \mathcal{L}_t^{\mathrm{tr}} (\bar{\mathbf{v}}_t)
	\end{matrix} \right]
	\left[ \begin{matrix}
		\mathbf{D} & \mathbf{0}_d \\
		\mathbf{0}_d & -2\mathbf{I}_d
	\end{matrix} \right]
	\right) \nonumber \\
	&~~~~~ \times \left[ \begin{matrix}
		\mathbf{I}_d & \mathbf{0}_d \\
		\mathbf{0}_d & -\bar{\mathbf{D}}_t^2
	\end{matrix} \right] \nonumber \\
	&= \left(
	\left[ \begin{matrix}
		\mathbf{I}_d & \mathbf{0}_d \\
		-\diag \big( \nabla_{\bar{\mathbf{m}}_t} \mathcal{L}_t^{\mathrm{tr}} (\bar{\mathbf{v}}_t) \big) & -\mathbf{D}^{-2}
	\end{matrix} \right]
	- \nabla \bar{\mathcal{A}}_t (\boldsymbol{\theta})
	\nabla^2 \mathcal{L}_t^{\mathrm{tr}} (\bar{\mathbf{v}}_t) 
	\left[ \begin{matrix}
		\mathbf{D} & \mathbf{0}_d \\
		\mathbf{0}_d & -2\mathbf{I}_d
	\end{matrix} \right]
	\right)
	\left[ \begin{matrix}
		\mathbf{I}_d & \mathbf{0}_d \\
		\mathbf{0}_d & -\bar{\mathbf{D}}_t^2
	\end{matrix} \right] \nonumber \\
	&= \left(
	\left[ \begin{matrix}
		\mathbf{I}_d & \mathbf{0}_d \\
		-\diag \big( \nabla_{\bar{\mathbf{m}}_t} \mathcal{L}_t^{\mathrm{tr}} (\bar{\mathbf{v}}_t) \big) & \mathbf{D}^{-2}
	\end{matrix} \right]
	- \nabla \bar{\mathcal{A}}_t (\boldsymbol{\theta})
	\nabla^2 \mathcal{L}_t^{\mathrm{tr}} (\bar{\mathbf{v}}_t) 
	\left[ \begin{matrix}
		\mathbf{D} & \mathbf{0}_d \\
		\mathbf{0}_d & 2\mathbf{I}_d
	\end{matrix} \right]
	\right)
	\left[ \begin{matrix}
		\mathbf{I}_d & \mathbf{0}_d \\
		\mathbf{0}_d & \bar{\mathbf{D}}_t^2
	\end{matrix} \right].
\end{align}
Now, the matrix equation can be readily solved to obtain
\begin{align}
\label{eq:solve-compact-eq}
	\nabla \bar{\mathcal{A}}_t (\boldsymbol{\theta})
	&= 
	\left[ \begin{matrix}
		\mathbf{I}_d & \mathbf{0}_d \\
		-\diag \big( \nabla_{\bar{\mathbf{m}}_t} \mathcal{L}_t^{\mathrm{tr}}(\bar{\mathbf{v}}_t) \big) & \mathbf{D}^{-2}
	\end{matrix} \right]
	\left( 
	\nabla^2 \mathcal{L}_t^{\mathrm{tr}} (\bar{\mathbf{v}}_t) 
	\left[ \begin{matrix}
		\mathbf{D} & \mathbf{0}_d \\
		\mathbf{0}_d & 2\mathbf{I}_d
	\end{matrix} \right]
	+ \left[ \begin{matrix}
		\mathbf{I}_d & \mathbf{0}_d \\
		\mathbf{0}_d & \bar{\mathbf{D}}_t^{-2}
	\end{matrix} \right]
	\right)^{-1} \nonumber \\
	&= \left[ \begin{matrix}
		\mathbf{I}_d & \mathbf{0}_d \\
		-\diag \big( \nabla_{\bar{\mathbf{m}}_t} \mathcal{L}_t^{\mathrm{tr}}(\bar{\mathbf{v}}_t) \big) & \mathbf{D}^{-2}
	\end{matrix} \right]
	\left( \Big(
	\nabla^2 \mathcal{L}_t^{\mathrm{tr}} (\bar{\mathbf{v}}_t)
	+ \left[ \begin{matrix}
		\mathbf{D}^{-1} & \mathbf{0}_d \\
		\mathbf{0}_d & \frac{1}{2} \bar{\mathbf{D}}_t^{-2}
	\end{matrix} \right] \Big)
	\left[ \begin{matrix}
		\mathbf{D} & \mathbf{0}_d \\
		\mathbf{0}_d & 2\mathbf{I}_d
	\end{matrix} \right]
	\right)^{-1} \nonumber \\
	&= \left[ \begin{matrix}
		\mathbf{D}^{-1} & \mathbf{0}_d \\
		-\diag \big( \nabla_{\bar{\mathbf{m}}_t} \mathcal{L}_t^{\mathrm{tr}}(\bar{\mathbf{v}}_t) \big) \mathbf{D}^{-1} & \frac{1}{2}\mathbf{D}^{-2}
	\end{matrix} \right]
	\left(
	\nabla^2 \mathcal{L}_t^{\mathrm{tr}} (\bar{\mathbf{v}}_t) 
	+ \left[ \begin{matrix}
		\mathbf{D}^{-1} & \mathbf{0}_d \\
		\mathbf{0}_d & \frac{1}{2} \bar{\mathbf{D}}_t^{-2}
	\end{matrix} \right]
	\right)^{-1} \nonumber \\
	&= \left[ \begin{matrix}
		\mathbf{D}^{-1} & \mathbf{0}_d \\
		-\diag \big( \nabla_{\bar{\mathbf{m}}_t} \mathcal{L}_t^{\mathrm{tr}}(\bar{\mathbf{v}}_t) \big) \mathbf{D}^{-1} & \frac{1}{2}\mathbf{D}^{-2}
	\end{matrix} \right]
	\left(
	\nabla^2 \mathcal{L}_t^{\mathrm{tr}} (\bar{\mathbf{v}}_t) 
	+ \left[ \begin{matrix}
		\mathbf{D}^{-1} & \mathbf{0}_d \\
		\mathbf{0}_d & \frac{1}{2} \big( \mathbf{D}^{-1} + 2 \diag \big( \nabla_{\bar{\mathbf{d}}_t} \mathcal{L}_t^{\mathrm{tr}} (\bar{\mathbf{v}}_t) \big) \big)^2
	\end{matrix} \right]
	\right)^{-1} \nonumber \\
	&= \left[ \begin{matrix}
		\mathbf{D}^{-1} & \mathbf{0}_d \\
		-\diag(\nabla_{\bar{\mathbf{m}}_t} \mathcal{L}_t^{\mathrm{tr}}(\bar{\mathbf{v}}_t) ) \mathbf{D}^{-1} & \frac{1}{2}\mathbf{D}^{-2}
	\end{matrix} \right]
	\mathbf{H}_t^{-1} (\bar{\mathbf{v}}_t)
\end{align}
where the fourth equality comes from~\eqref{eq:post-var}.

\end{proof}

\subsection*{A.2 Proof of Theorem~\ref{theor:explicit}}
\begin{theorem}[Explicit meta-gradient error bound, restated]
Consider the Bayesian meta-learning problem in~\eqref{eq:Bayesian-subopt}. Let $\epsilon_t := \| \hat{\mathbf{v}}_t - \bar{\mathbf{v}}_t \|_2$ be the task-level optimization error, and $\delta_t := \| \nabla \hat{\mathcal{A}}_t (\boldsymbol{\theta}) - \mathbf{G}_t (\hat{\mathbf{v}}_t) \mathbf{H}_t^{-1} (\hat{\mathbf{v}}_t) \|_2$ the error of the Jacobian. Upon defining $\rho_t := \max \big\{ \| \nabla_{\bar{\mathbf{v}}_t} \mathcal{L}_t^{\mathrm{tr}} (\bar{\mathbf{v}}_t) \|_{\infty}, \| \nabla_{\hat{\mathbf{v}}_t} \mathcal{L}_t^{\mathrm{tr}} (\hat{\mathbf{v}}_t) \|_{\infty} \big\}$, and with Assumptions~\ref{as:local-min}-\ref{as:bounded-var} in effect, it holds for $t \in \{ 1,\ldots, T \}$ that
\begin{equation}
	\big\| \nabla_{\boldsymbol{\theta}} \mathcal{L}_t^{\mathrm{val}} \big( \hat{\mathbf{v}}_t (\boldsymbol{\theta}), \boldsymbol{\theta} \big) -\nabla_{\boldsymbol{\theta}} \mathcal{L}_t^{\mathrm{val}} \big( \bar{\mathbf{v}}_t (\boldsymbol{\theta}), \boldsymbol{\theta} \big) \big\|_2 \le F_t \epsilon_t + A_t \delta_t,
\end{equation}
where the scalar $F_t$ depends on $\rho_t$. 
\end{theorem}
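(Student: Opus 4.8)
The plan is to start from the chain-rule decomposition~\eqref{eq:Bayesian-meta-grad} evaluated at the sub-optimal point $\hat{\mathbf{v}}_t$, namely $\nabla_{\boldsymbol{\theta}}\mathcal{L}_t^{\mathrm{val}}(\hat{\mathbf{v}}_t(\boldsymbol{\theta}),\boldsymbol{\theta}) = \nabla\hat{\mathcal{A}}_t(\boldsymbol{\theta})\nabla_1\mathcal{L}_t^{\mathrm{val}}(\hat{\mathbf{v}}_t,\boldsymbol{\theta}) + \nabla_2\mathcal{L}_t^{\mathrm{val}}(\hat{\mathbf{v}}_t,\boldsymbol{\theta})$, and from Lemma~\ref{lemma:implicit_Jacobi} at the local minimum $\bar{\mathbf{v}}_t$, where $\nabla\bar{\mathcal{A}}_t(\boldsymbol{\theta}) = \mathbf{G}_t(\bar{\mathbf{v}}_t)\mathbf{H}_t^{-1}(\bar{\mathbf{v}}_t)$ so that $\nabla_{\boldsymbol{\theta}}\mathcal{L}_t^{\mathrm{val}}(\bar{\mathbf{v}}_t(\boldsymbol{\theta}),\boldsymbol{\theta}) = \mathbf{G}_t(\bar{\mathbf{v}}_t)\mathbf{H}_t^{-1}(\bar{\mathbf{v}}_t)\nabla_1\mathcal{L}_t^{\mathrm{val}}(\bar{\mathbf{v}}_t,\boldsymbol{\theta}) + \nabla_2\mathcal{L}_t^{\mathrm{val}}(\bar{\mathbf{v}}_t,\boldsymbol{\theta})$. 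Subtracting and inserting the intermediate term $\mathbf{G}_t(\hat{\mathbf{v}}_t)\mathbf{H}_t^{-1}(\hat{\mathbf{v}}_t)\nabla_1\mathcal{L}_t^{\mathrm{val}}(\hat{\mathbf{v}}_t,\boldsymbol{\theta})$ splits the error into three pieces: (i) $\big(\nabla\hat{\mathcal{A}}_t(\boldsymbol{\theta}) - \mathbf{G}_t(\hat{\mathbf{v}}_t)\mathbf{H}_t^{-1}(\hat{\mathbf{v}}_t)\big)\nabla_1\mathcal{L}_t^{\mathrm{val}}(\hat{\mathbf{v}}_t,\boldsymbol{\theta})$, whose $\ell_2$ norm is at most $\delta_t\,\|\nabla_1\mathcal{L}_t^{\mathrm{val}}(\hat{\mathbf{v}}_t,\boldsymbol{\theta})\|_2 \le A_t\delta_t$ by the $A_t$-Lipschitz part of Assumption~\ref{as:meta-loss}; (ii) the difference of the two $\nabla_2$ terms, bounded by $C_t\epsilon_t$ by the $C_t$-Lipschitz part of Assumption~\ref{as:meta-loss}; and (iii) the ``implicit-gradient perturbation'' $\mathbf{G}_t(\hat{\mathbf{v}}_t)\mathbf{H}_t^{-1}(\hat{\mathbf{v}}_t)\nabla_1\mathcal{L}_t^{\mathrm{val}}(\hat{\mathbf{v}}_t,\boldsymbol{\theta}) - \mathbf{G}_t(\bar{\mathbf{v}}_t)\mathbf{H}_t^{-1}(\bar{\mathbf{v}}_t)\nabla_1\mathcal{L}_t^{\mathrm{val}}(\bar{\mathbf{v}}_t,\boldsymbol{\theta})$, which I will show is $O(\epsilon_t)$.

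For piece (iii) I use the standard three-term telescoping $(\hat{\mathbf{G}}-\bar{\mathbf{G}})\hat{\mathbf{H}}^{-1}\hat{\mathbf{g}} + \bar{\mathbf{G}}(\hat{\mathbf{H}}^{-1}-\bar{\mathbf{H}}^{-1})\hat{\mathbf{g}} + \bar{\mathbf{G}}\bar{\mathbf{H}}^{-1}(\hat{\mathbf{g}}-\bar{\mathbf{g}})$ with the self-evident shorthand $\hat{\mathbf{G}}=\mathbf{G}_t(\hat{\mathbf{v}}_t)$, $\hat{\mathbf{g}}=\nabla_1\mathcal{L}_t^{\mathrm{val}}(\hat{\mathbf{v}}_t,\boldsymbol{\theta})$, etc., and bound each factor: $\|\mathbf{H}_t^{-1}(\cdot)\|_2 \le \sigma_t^{-1}$ by Assumption~\ref{as:invertible}; $\|\hat{\mathbf{g}}\|_2 \le A_t$ and $\|\hat{\mathbf{g}}-\bar{\mathbf{g}}\|_2 \le B_t\epsilon_t$ by Assumption~\ref{as:meta-loss}; $\|\bar{\mathbf{G}}\|_2$ is bounded by a constant in $\rho_t$ and $D_{\min}$ directly from the block form~\eqref{eq:def-G} using $\|\diag(\mathbf{w})\|_2 = \|\mathbf{w}\|_\infty \le \rho_t$ and $\|\mathbf{D}^{-1}\|_2 \le D_{\min}^{-1}$ (Assumption~\ref{as:bounded-var}); $\|\hat{\mathbf{G}}-\bar{\mathbf{G}}\|_2 \le D_{\min}^{-1}\|\nabla_{\hat{\mathbf{m}}_t}\mathcal{L}_t^{\mathrm{tr}}(\hat{\mathbf{v}}_t) - \nabla_{\bar{\mathbf{m}}_t}\mathcal{L}_t^{\mathrm{tr}}(\bar{\mathbf{v}}_t)\|_2 \le (D_t/D_{\min})\epsilon_t$, because only the $\diag(\nabla_{\mathbf{m}_t}\mathcal{L}_t^{\mathrm{tr}})$ block of $\mathbf{G}_t$ varies with $\mathbf{v}_t$ and $\mathcal{L}_t^{\mathrm{tr}}$ is $D_t$-smooth (Assumption~\ref{as:task-nll}); and $\|\hat{\mathbf{H}}^{-1}-\bar{\mathbf{H}}^{-1}\|_2 \le \sigma_t^{-2}\|\hat{\mathbf{H}}-\bar{\mathbf{H}}\|_2$ via the resolvent identity $\hat{\mathbf{H}}^{-1}-\bar{\mathbf{H}}^{-1} = \hat{\mathbf{H}}^{-1}(\bar{\mathbf{H}}-\hat{\mathbf{H}})\bar{\mathbf{H}}^{-1}$ and Assumption~\ref{as:invertible}.

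The step requiring the most care is bounding the Hessian perturbation $\|\mathbf{H}_t(\hat{\mathbf{v}}_t)-\mathbf{H}_t(\bar{\mathbf{v}}_t)\|_2$. Writing $\mathbf{H}_t$ in the form of Lemma~\ref{lemma:implicit_Jacobi}, its dependence on $\mathbf{v}_t$ enters through $\nabla^2\mathcal{L}_t^{\mathrm{tr}}(\mathbf{v}_t)$, which is $E_t$-Lipschitz (Assumption~\ref{as:task-nll}), and through the bottom-right block $\tfrac12\big(\mathbf{D}^{-1}+2\diag(\nabla_{\mathbf{d}_t}\mathcal{L}_t^{\mathrm{tr}}(\mathbf{v}_t))\big)^2$. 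The latter is a difference of squared diagonal (hence commuting) matrices, so I will use $\mathbf{A}^2-\mathbf{B}^2=(\mathbf{A}-\mathbf{B})(\mathbf{A}+\mathbf{B})$, bound $\|\mathbf{A}-\mathbf{B}\|_2 \le 2D_t\epsilon_t$ by $D_t$-smoothness and $\|\mathbf{A}+\mathbf{B}\|_2 \le 2(D_{\min}^{-1}+2\rho_t)$ by Assumption~\ref{as:bounded-var} and the definition of $\rho_t$, yielding $\|\mathbf{H}_t(\hat{\mathbf{v}}_t)-\mathbf{H}_t(\bar{\mathbf{v}}_t)\|_2 \le \big(E_t + 2D_t(D_{\min}^{-1}+2\rho_t)\big)\epsilon_t$. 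Substituting all these bounds into the telescoping sum and adding the $C_t\epsilon_t$ contribution from piece (ii) gives $F_t\epsilon_t$ with an explicit constant $F_t=F_t(A_t,B_t,C_t,D_t,E_t,\sigma_t,D_{\min},\rho_t)$ that is polynomial in $\rho_t$; together with the $A_t\delta_t$ bound from piece (i), this is exactly the claimed inequality. The main obstacle is the block-matrix norm bookkeeping in pieces (iii) — in particular tracking how $\rho_t$ propagates through $\|\mathbf{G}_t(\bar{\mathbf{v}}_t)\|_2$ and the squared-diagonal term in $\mathbf{H}_t$ — while everything else reduces to routine triangle-inequality and Lipschitz/smoothness estimates.
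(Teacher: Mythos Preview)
Your proposal is correct and reaches the same conclusion, but the route differs from the paper's in how the Jacobian perturbation is handled. You insert the intermediate term $\mathbf{G}_t(\hat{\mathbf{v}}_t)\mathbf{H}_t^{-1}(\hat{\mathbf{v}}_t)\nabla_1\mathcal{L}_t^{\mathrm{val}}(\hat{\mathbf{v}}_t,\boldsymbol{\theta})$ at the outset, isolate the $A_t\delta_t$ piece immediately, and then control the remaining ``implicit-gradient perturbation'' via a three-factor telescoping on $\mathbf{G}_t$, $\mathbf{H}_t^{-1}$, and $\nabla_1\mathcal{L}_t^{\mathrm{val}}$, invoking the resolvent identity and a direct bound on $\|\mathbf{H}_t(\hat{\mathbf{v}}_t)-\mathbf{H}_t(\bar{\mathbf{v}}_t)\|_2$. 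The paper instead first reduces to bounding $\|\nabla\hat{\mathcal{A}}_t(\boldsymbol{\theta})-\nabla\bar{\mathcal{A}}_t(\boldsymbol{\theta})\|_2$ and $\|\nabla\bar{\mathcal{A}}_t(\boldsymbol{\theta})\|_2$, and then attacks the Jacobian difference by passing to the \emph{inverses}: it defines $\Delta = (\nabla\bar{\mathcal{A}}_t(\boldsymbol{\theta}))^{-1}-(\mathbf{G}_t(\hat{\mathbf{v}}_t)\mathbf{H}_t^{-1}(\hat{\mathbf{v}}_t))^{-1}$, rewrites each inverse in a product form $\nabla^2\mathcal{L}_t^{\mathrm{tr}}\,\mathbf{P}_t + \mathbf{Q}_t\mathbf{R}_t$ with explicit block matrices, and bounds $\|\Delta\|_2$ via a four-term splitting before converting back with $A^{-1}-B^{-1}=A^{-1}(B-A)B^{-1}$. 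Your argument is more elementary and avoids that algebraic factorization; the paper's route packages the Jacobian perturbation into a single constant $F_t^{\Delta}$ that it then reuses verbatim in the proof of Theorem~\ref{theor:implicit}, so its payoff is mainly organizational. Either way the dependence on $\rho_t$ enters through the same two places you identified --- the $\diag(\nabla_{\mathbf{m}_t}\mathcal{L}_t^{\mathrm{tr}})$ block of $\mathbf{G}_t$ and the squared-diagonal block of $\mathbf{H}_t$ --- and the resulting $F_t$ is polynomial in $\rho_t$ in both proofs.
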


\begin{proof}
First, it follows by definition~\eqref{eq:Bayesian-meta-grad} of Bayesian meta-gradient that 
\begin{align}
\label{eq:theor1-target}
	&\big\| 
	\nabla_{\boldsymbol{\theta}} \mathcal{L}_t^{\mathrm{val}} \big( \hat{\mathbf{v}}_t (\boldsymbol{\theta}), \boldsymbol{\theta} \big) 
	- \nabla_{\boldsymbol{\theta}} \mathcal{L}_t^{\mathrm{val}} \big( \bar{\mathbf{v}}_t (\boldsymbol{\theta}), \boldsymbol{\theta} \big) 
	\big\|_2 \nonumber \\
	&\le \big\| \nabla \hat{\mathcal{A}}_t (\boldsymbol{\theta}) \nabla_1 \mathcal{L}_t^{\mathrm{val}}(\hat{\mathbf{v}}_t, \boldsymbol{\theta}) - \nabla \bar{\mathcal{A}}_t (\boldsymbol{\theta}) \nabla_1 \mathcal{L}_t^{\mathrm{val}}(\bar{\mathbf{v}}_t, \boldsymbol{\theta}) \big\|_2
	+ \big\| \nabla_2 \mathcal{L}_t^{\mathrm{val}}(\hat{\mathbf{v}}_t, \boldsymbol{\theta}) - \nabla_2 \mathcal{L}_t^{\mathrm{val}}(\bar{\mathbf{v}}_t, \boldsymbol{\theta}) \big\|_2 \nonumber \\
	&\le \big\| \nabla \hat{\mathcal{A}}_t (\boldsymbol{\theta}) \nabla_1 \mathcal{L}_t^{\mathrm{val}}(\hat{\mathbf{v}}_t, \boldsymbol{\theta}) - \nabla \bar{\mathcal{A}}_t (\boldsymbol{\theta}) \nabla_1 \mathcal{L}_t^{\mathrm{val}}(\bar{\mathbf{v}}_t, \boldsymbol{\theta}) \big\|_2
	+ C_t \epsilon_t \nonumber \\
	&\le \big\| \nabla \hat{\mathcal{A}}_t (\boldsymbol{\theta}) \nabla_1 \mathcal{L}_t^{\mathrm{val}}(\hat{\mathbf{v}}_t, \boldsymbol{\theta}) - \nabla \bar{\mathcal{A}}_t (\boldsymbol{\theta}) \nabla_1 \mathcal{L}_t^{\mathrm{val}}(\hat{\mathbf{v}}_t, \boldsymbol{\theta}) \big\|_2 \nonumber \\
	&~~~~~+ \big\| \nabla \bar{\mathcal{A}}_t (\boldsymbol{\theta}) \nabla_1 \mathcal{L}_t^{\mathrm{val}}(\hat{\mathbf{v}}_t, \boldsymbol{\theta}) - \nabla \bar{\mathcal{A}}_t (\boldsymbol{\theta}) \nabla_1 \mathcal{L}_t^{\mathrm{val}}(\bar{\mathbf{v}}_t, \boldsymbol{\theta}) \big\|_2
	+ C_t \epsilon_t \nonumber \\
	&\le \big\| \nabla \hat{\mathcal{A}}_t (\boldsymbol{\theta}) - \nabla \bar{\mathcal{A}}_t (\boldsymbol{\theta}) \big\|_2 \big\| \nabla_1 \mathcal{L}_t^{\mathrm{val}} (\hat{\mathbf{v}}_t, \boldsymbol{\theta}) \big\|_2 
	+ \big\| \nabla \bar{\mathcal{A}}_t (\boldsymbol{\theta}) \|_2 \big\| \nabla_1 \mathcal{L}_t^{\mathrm{val}}(\hat{\mathbf{v}}_t, \boldsymbol{\theta}) - \nabla_1 \mathcal{L}_t^{\mathrm{val}}(\bar{\mathbf{v}}_t, \boldsymbol{\theta}) \big\|_2
	+ C_t \epsilon_t \nonumber \\
	&\le A_t \big\| \nabla \hat{\mathcal{A}}_t (\boldsymbol{\theta}) - \nabla \bar{\mathcal{A}}_t (\boldsymbol{\theta}) \big\|_2 
	+ B_t \epsilon_t \big\| \nabla \bar{\mathcal{A}}_t (\boldsymbol{\theta}) \big\|_2 + C_t \epsilon_t,
\end{align}
where  Assumption~\ref{as:meta-loss} was used in the second and last inequalities. What remains is to bound $\| \nabla \hat{\mathcal{A}}_t (\boldsymbol{\theta}) - \nabla \bar{\mathcal{A}}_t (\boldsymbol{\theta}) \|_2$ and $\| \nabla \bar{\mathcal{A}}_t (\boldsymbol{\theta}) \|_2$. 

Using Lemma~\ref{lemma:implicit_Jacobi} with Assumption~\ref{as:local-min}, we obtain
\begin{align}
	\nabla \bar{\mathcal{A}}_t (\boldsymbol{\theta}) 
	&= \left[ \begin{matrix}
		\mathbf{D}^{-1} & \mathbf{0}_d \\
		-\diag \big( \nabla_{\bar{\mathbf{m}}_t} \mathcal{L}_t^{\mathrm{tr}} (\bar{\mathbf{v}}_t) \big) \mathbf{D}^{-1} & \frac{1}{2}\mathbf{D}^{-2}
	\end{matrix} \right]
	\mathbf{H}_t^{-1} (\bar{\mathbf{v}}_t) \nonumber \\
	&= \left[ \begin{matrix}
		\mathbf{D} & \mathbf{0}_d \\
		2\mathbf{D}^2 \diag \big( \nabla_{\bar{\mathbf{m}}_t} \mathcal{L}_t^{\mathrm{tr}} (\bar{\mathbf{v}}_t) \big) & 2\mathbf{D}^2
	\end{matrix} \right]^{-1}
	\mathbf{H}_t^{-1} (\bar{\mathbf{v}}_t) \nonumber \\
	&=	\bigg( 
	\nabla^2 \mathcal{L}_t^{\mathrm{tr}} (\bar{\mathbf{v}}_t)
	\left[ \begin{matrix}
		\mathbf{D} & \mathbf{0}_d \\
		2\mathbf{D}^2 \diag \big( \nabla_{\bar{\mathbf{m}}_t} \mathcal{L}_t^{\mathrm{tr}} (\bar{\mathbf{v}}_t) \big) & 2\mathbf{D}^2
	\end{matrix} \right] \nonumber \\
	&~~~~~+ \left[ \begin{matrix}
		\mathbf{D}^{-1} & \mathbf{0}_d \\
		\mathbf{0}_d & \frac{1}{2} \big(\mathbf{D}^{-1} + 2 \diag \big( \nabla_{\bar{\mathbf{d}}_t} \mathcal{L}_t^{\mathrm{tr}} (\bar{\mathbf{v}}_t) \big) \big)^2
	\end{matrix} \right]
	\left[ \begin{matrix}
		\mathbf{D} & \mathbf{0}_d \\
		2\mathbf{D}^2 \diag \big( \nabla_{\bar{\mathbf{m}}_t} \mathcal{L}_t^{\mathrm{tr}} (\bar{\mathbf{v}}_t) \big) & 2\mathbf{D}^2
	\end{matrix} \right]
	\bigg)^{-1} \nonumber \\
	&= \bigg( 
	\nabla^2 \mathcal{L}_t^{\mathrm{tr}} (\bar{\mathbf{v}}_t)
	\left[ \begin{matrix}
		\mathbf{D} & \mathbf{0}_d \\
		\mathbf{0}_d & 2\mathbf{D}^2
	\end{matrix} \right]
	\left[ \begin{matrix}
		\mathbf{I}_d & \mathbf{0}_d \\
		\diag \big( \nabla_{\bar{\mathbf{m}}_t} \mathcal{L}_t^{\mathrm{tr}} (\bar{\mathbf{v}}_t) \big) & \mathbf{I}_d
	\end{matrix} \right] \nonumber \\
	&~~~~~+ \left[ \begin{matrix}
		\mathbf{I}_d & \mathbf{0}_d \\
		\mathbf{0}_d & \big(\mathbf{I}_d + 2 \diag \big( \nabla_{\bar{\mathbf{d}}_t} \mathcal{L}_t^{\mathrm{tr}} (\bar{\mathbf{v}}_t) \big) \mathbf{D} \big)^2
	\end{matrix} \right]
	\left[ \begin{matrix}
		\mathbf{I}_d & \mathbf{0}_d \\
		\diag \big( \nabla_{\bar{\mathbf{m}}_t} \mathcal{L}_t^{\mathrm{tr}} (\bar{\mathbf{v}}_t) \big) & \mathbf{I}_d
	\end{matrix} \right]
	\bigg)^{-1} \nonumber \\
	&:= \big( \nabla^2 \mathcal{L}_t^{\mathrm{tr}} (\bar{\mathbf{v}}_t) \bar{\mathbf{P}}_t + \bar{\mathbf{Q}}_t \bar{\mathbf{R}}_t \big)^{-1},
\end{align}
where the third equality is from the definition of $\mathbf{H}_t (\bar{\mathbf{v}}_t)$. Likewise, we also have
\begin{align}
	\mathbf{G}_t (\hat{\mathbf{v}}_t) \mathbf{H}_t^{-1} (\hat{\mathbf{v}}_t)
	&= \bigg( 
	\nabla^2 \mathcal{L}_t^{\mathrm{tr}} (\hat{\mathbf{v}}_t)
	\left[ \begin{matrix}
		\mathbf{D} & \mathbf{0}_d \\
		\mathbf{0}_d & 2\mathbf{D}^2
	\end{matrix} \right]
	\left[ \begin{matrix}
		\mathbf{I}_d & \mathbf{0}_d \\
		\diag \big( \nabla_{\hat{\mathbf{m}}_t} \mathcal{L}_t^{\mathrm{tr}} (\hat{\mathbf{v}}_t) \big) & \mathbf{I}_d
	\end{matrix} \right] \nonumber \\
	&~~~~+ \left[ \begin{matrix}
		\mathbf{I}_d & \mathbf{0}_d \\
		\mathbf{0}_d & \big(\mathbf{I}_d + 2 \diag \big( \nabla_{\hat{\mathbf{d}}_t} \mathcal{L}_t^{\mathrm{tr}} (\hat{\mathbf{v}}_t) \big) \mathbf{D} \big)^2
	\end{matrix} \right]
	\left[ \begin{matrix}
		\mathbf{I}_d & \mathbf{0}_d \\
		\diag \big( \nabla_{\hat{\mathbf{m}}_t} \mathcal{L}_t^{\mathrm{tr}} (\hat{\mathbf{v}}_t) \big) & \mathbf{I}_d
	\end{matrix} \right]
	\bigg)^{-1} \nonumber \\
	&~~~:= \big( \nabla^2 \mathcal{L}_t^{\mathrm{tr}} (\hat{\mathbf{v}}_t) \hat{\mathbf{P}}_t + \hat{\mathbf{Q}}_t \hat{\mathbf{R}}_t \big)^{-1}.
\end{align}
Upon defining $\Delta := \big( \nabla \bar{\mathcal{A}}_t (\boldsymbol{\theta}) \big)^{-1} - \big( \mathbf{G}_t (\hat{\mathbf{v}}_t) \mathbf{H}_t^{-1} (\hat{\mathbf{v}}_t) \big)^{-1}$, and adding intermediate terms, we arrive at
\begin{align}
\label{eq:upper-bound-Delta-4terms}
	\| \Delta \|_2 
	&= \left\| 
	\big( \nabla \bar{\mathcal{A}}_t (\boldsymbol{\theta}) \big)^{-1} 
	- \nabla^2 \mathcal{L}_t^{\mathrm{tr}} (\bar{\mathbf{v}}_t) \hat{\mathbf{P}}_t - \bar{\mathbf{Q}}_t \hat{\mathbf{R}}_t 
	+ \nabla^2 \mathcal{L}_t^{\mathrm{tr}} (\bar{\mathbf{v}}_t) \hat{\mathbf{P}}_t + \bar{\mathbf{Q}}_t \hat{\mathbf{R}}_t 
	- \big( \mathbf{G}_t (\hat{\mathbf{v}}_t) \mathbf{H}_t^{-1} (\hat{\mathbf{v}}_t) \big)^{-1} \right\|_2 \nonumber \\
	&= \left\| \nabla^2 \mathcal{L}_t^{\mathrm{tr}} (\bar{\mathbf{v}}_t) \big( \bar{\mathbf{P}}_t - \hat{\mathbf{P}}_t \big) 
	+ \bar{\mathbf{Q}}_t \big( \bar{\mathbf{R}}_t  - \hat{\mathbf{R}}_t \big) 
	+ \big( \nabla^2 \mathcal{L}_t^{\mathrm{tr}} (\bar{\mathbf{v}}_t) - \nabla^2 \mathcal{L}_t^{\mathrm{tr}} (\hat{\mathbf{v}}_t) \big) \hat{\mathbf{P}}_t
	+ \big( \bar{\mathbf{Q}}_t - \hat{\mathbf{Q}}_t \big) \hat{\mathbf{R}}_t \right\|_2 
	\nonumber \\
	&\le \left\| \nabla^2 \mathcal{L}_t^{\mathrm{tr}} (\bar{\mathbf{v}}_t) \big( \bar{\mathbf{P}}_t - \hat{\mathbf{P}}_t \big) \right\|_2
	+ \left\| \bar{\mathbf{Q}}_t ( \bar{\mathbf{R}}_t  - \hat{\mathbf{R}}_t ) \right\|_2
	+ \left\| \big( \nabla^2 \mathcal{L}_t^{\mathrm{tr}} (\bar{\mathbf{v}}_t) - \nabla^2 \mathcal{L}_t^{\mathrm{tr}} (\hat{\mathbf{v}}_t) \big) \hat{\mathbf{P}}_t \right\|_2
	+ \left\| \big( \bar{\mathbf{Q}}_t - \hat{\mathbf{Q}}_t \big) \hat{\mathbf{R}}_t \right\|_2.
\end{align}
Next, we will bound the four summands in~\eqref{eq:upper-bound-Delta}. Using Assumption~\ref{as:task-nll}, it follows that  
\begin{align}
	\left\| \nabla^2 \mathcal{L}_t^{\mathrm{tr}} (\bar{\mathbf{v}}_t) \big( \bar{\mathbf{P}}_t - \hat{\mathbf{P}}_t \big) \right\|_2 
	&\le 
	\left\| \nabla^2 \mathcal{L}_t^{\mathrm{tr}} (\bar{\mathbf{v}}_t) \right\|_2
	\left\| \left[ 
	\begin{matrix}
		\mathbf{D} & \mathbf{0}_d \\
		\mathbf{0}_d & 2\mathbf{D}^2
	\end{matrix} 
	\right] \right\|_2
	\left\| \left[ 
	\begin{matrix}
		\mathbf{0}_d & \mathbf{0}_d \\
		\diag \big( \nabla_{\bar{\mathbf{m}}_t} \mathcal{L}_t^{\mathrm{tr}} (\bar{\mathbf{v}}_t) - \nabla_{\hat{\mathbf{m}}_t} \mathcal{L}_t^{\mathrm{tr}} (\hat{\mathbf{v}}_t) \big) & \mathbf{0}_d
	\end{matrix} 
	\right] \right\|_2 \nonumber \\
	&\le D_t \max \big\{ D_{\max}, 2 D^2_{\max} \big\} \| \nabla_{\bar{\mathbf{m}}_t} \mathcal{L}_t^{\mathrm{tr}} (\bar{\mathbf{v}}_t) - \nabla_{\hat{\mathbf{m}}_t} \mathcal{L}_t^{\mathrm{tr}} (\hat{\mathbf{v}}_t) \|_{\infty} \nonumber \\
	&\le D_t^2 \max \big\{ D_{\max}, 2 D^2_{\max} \big\} \| \bar{\mathbf{m}}_t - \hat{\mathbf{m}}_t \|_{\infty} \nonumber \\
	&\le D_t^2 \max \big\{ D_{\max}, 2 D^2_{\max} \big\} \epsilon_t,
\end{align}
and
\begin{align}
	\left\| \big( \nabla^2 \mathcal{L}_t^{\mathrm{tr}} (\bar{\mathbf{v}}_t) - \nabla^2 \mathcal{L}_t^{\mathrm{tr}} (\hat{\mathbf{v}}_t) \big) \hat{\mathbf{P}}_t \right\|_2
	&\le \left\| 
	\nabla^2 \mathcal{L}_t^{\mathrm{tr}} (\bar{\mathbf{v}}_t) - \nabla^2 \mathcal{L}_t^{\mathrm{tr}} (\hat{\mathbf{v}}_t) 
	\right\|_2
	\left\| \left[ 
	\begin{matrix}
		\mathbf{D} & \mathbf{0}_d \\
		\mathbf{0}_d & 2\mathbf{D}^2
	\end{matrix} 
	\right] \right\|_2
	\left\| \left[ 
	\begin{matrix}
		\mathbf{I}_d & \mathbf{0}_d \\
		\diag \big( \nabla_{\hat{\mathbf{m}}_t} \mathcal{L}_t^{\mathrm{tr}} (\hat{\mathbf{v}}_t) \big) & \mathbf{I}_d
	\end{matrix} 
	\right] \right \|_2 \nonumber \\
	&\le E_t \epsilon_t \max \big\{ D_{\max}, 2 D^2_{\max} \big\} \left\| \left[ \begin{matrix}
		\mathbf{I}_d & \mathbf{0}_d \\
		\diag \big( \nabla_{\hat{\mathbf{m}}_t} \mathcal{L}_t^{\mathrm{tr}} (\hat{\mathbf{v}}_t) \big) & \mathbf{I}_d
	\end{matrix} \right] \right\|_2 \nonumber \\
	&= E_t \epsilon_t \max \big\{ D_{\max}, 2 D^2_{\max} \big\} 
	\left( 1 + \left\| \left[ 
	\begin{matrix}
		\mathbf{0}_d & \mathbf{0}_d \\
		\diag \big( \nabla_{\hat{\mathbf{m}}_t} \mathcal{L}_t^{\mathrm{tr}} (\hat{\mathbf{v}}_t) \big) & \mathbf{0}_d
	\end{matrix} 
	\right] \right\|_2 \right) \nonumber \\
	&= E_t \max \big\{ D_{\max}, 2 D^2_{\max} \big\} (1 + \| \nabla_{\hat{\mathbf{m}}_t} \mathcal{L}_t^{\mathrm{tr}} (\hat{\mathbf{v}}_t) \|_{\infty}) \epsilon_t \nonumber \\
	&\le E_t \max \big\{ D_{\max}, 2 D^2_{\max} \big\} (1 + \rho_t) \epsilon_t.
\end{align}

Letting $\mathbf{v}^2$ denote the element-wise square of a general vector $\mathbf{v}$, we have for the second term that
\begin{align}
	\left\| \bar{\mathbf{Q}}_t ( \bar{\mathbf{R}}_t  - \hat{\mathbf{R}}_t ) \right\|_2
	&\le \left\| \left[ 
	\begin{matrix}
		\mathbf{I}_d & \mathbf{0}_d \\
		\mathbf{0}_d & \big(\mathbf{I}_d + 2 \diag \big( \nabla_{\bar{\mathbf{d}}_t} \mathcal{L}_t^{\mathrm{tr}} (\bar{\mathbf{v}}_t) \big) \mathbf{D} \big)^2
	\end{matrix} 
	\right] \right\|_2
	\left\| \left[ 
	\begin{matrix}
		\mathbf{0}_d & \mathbf{0}_d \\
		\diag \big( \nabla_{\bar{\mathbf{m}}_t} \mathcal{L}_t^{\mathrm{tr}} (\bar{\mathbf{v}}_t) - \nabla_{\hat{\mathbf{m}}_t} \mathcal{L}_t^{\mathrm{tr}} (\hat{\mathbf{v}}_t) \big) & \mathbf{0}_d
	\end{matrix} 
	\right] \right\|_2 \nonumber \\
	&= \max \big\{ 1, \big \| \big( \mathbf{1}_d + 2\mathbf{d} \cdot \nabla_{\bar{\mathbf{d}}_t} \mathcal{L}_t^{\mathrm{tr}} (\bar{\mathbf{v}}_t) \big)^2 \big \|_{\infty} \big\} \big \| \nabla_{\bar{\mathbf{m}}_t} \mathcal{L}_t^{\mathrm{tr}} (\bar{\mathbf{v}}_t) - \nabla_{\hat{\mathbf{m}}_t} \mathcal{L}_t^{\mathrm{tr}} (\hat{\mathbf{v}}_t) \big \|_{\infty} \nonumber \\
	&\le \max \big\{ 1, \big \| \mathbf{1}_d + 2\mathbf{d} \cdot \nabla_{\bar{\mathbf{m}}_t} \mathcal{L}_t^{\mathrm{tr}} (\bar{\mathbf{v}}_t) \big \|_{\infty}^2 \big\} D_t \| \bar{\mathbf{m}}_t - \hat{\mathbf{m}}_t \|_{\infty} \nonumber \\
	&\le D_t (1 + 2\max \big\{ D_{\max}, 2 D^2_{\max} \big\} \rho_t)^2 \epsilon_t,
\end{align}
where for the fourth inequality we employed Assumption~\ref{as:task-nll}, and the definition $\mathbf{1}_d := [1,\ldots,1]^\top \in \mathbb{R}^d$. 

For the last term, it holds that
\begin{align}
\label{eq:upper-bound-Delta-term4}
	&\left\| \big( \bar{\mathbf{Q}}_t - \hat{\mathbf{Q}}_t \big) \hat{\mathbf{R}}_t \right\|_2 \nonumber \\
	&\le \left \| \left[ \begin{matrix}
		\mathbf{0}_d & \mathbf{0}_d \\
		\mathbf{0}_d & \big(\mathbf{I}_d + 2 \diag \big( \nabla_{\bar{\mathbf{d}}_t} \mathcal{L}_t^{\mathrm{tr}} (\bar{\mathbf{v}}_t) \big) \mathbf{D} \big)^2 - \big(\mathbf{I}_d + 2 \diag \big( \nabla_{\hat{\mathbf{d}}_t} \mathcal{L}_t^{\mathrm{tr}} (\hat{\mathbf{v}}_t) \big) \mathbf{D} \big)^2
	\end{matrix} \right] \right\|_2
	\left \| \left[ 
	\begin{matrix}
		\mathbf{I}_d & \mathbf{0}_d \\
		\diag \big( \nabla_{\hat{\mathbf{m}}_t} \mathcal{L}_t^{\mathrm{tr}} (\hat{\mathbf{v}}_t) \big) & \mathbf{I}_d
	\end{matrix} 
	\right] \right\| \nonumber \\
	&= \left\| \big( \mathbf{1}_d + 2 \mathbf{d} \cdot \nabla_{\bar{\mathbf{d}}_t} \mathcal{L}_t^{\mathrm{tr}} (\bar{\mathbf{v}}_t) \big)^2 - \big( \mathbf{1}_d + 2 \mathbf{d} \cdot \nabla_{\hat{\mathbf{d}}_t} \mathcal{L}_t^{\mathrm{tr}} (\hat{\mathbf{v}}_t) \big)^2 \right\|_{\infty} \left( 1 + \left\| \nabla_{\hat{\mathbf{m}}_t} \mathcal{L}_t^{\mathrm{tr}} (\hat{\mathbf{v}}_t) \right\|_{\infty} \right) \nonumber \\
	&\le \left\| \big( \mathbf{1}_d + 2 \mathbf{d} \cdot \nabla_{\bar{\mathbf{d}}_t} \mathcal{L}_t^{\mathrm{tr}} (\bar{\mathbf{v}}_t) \big)^2 - \big( \mathbf{1}_d + 2 \mathbf{d} \cdot \nabla_{\hat{\mathbf{d}}_t} \mathcal{L}_t^{\mathrm{tr}} (\hat{\mathbf{v}}_t) \big)^2 \right\|_{\infty} (1 + \rho_t) \nonumber \\
	&= \left\| 2 \big( \mathbf{1}_d + \mathbf{d} \cdot (\nabla_{\bar{\mathbf{d}}_t} \mathcal{L}_t^{\mathrm{tr}} (\bar{\mathbf{v}}_t) + \nabla_{\hat{\mathbf{d}}_t} \mathcal{L}_t^{\mathrm{tr}} (\hat{\mathbf{v}}_t)) \big) \cdot 2 \big( \mathbf{d} \cdot (\nabla_{\bar{\mathbf{d}}_t} \mathcal{L}_t^{\mathrm{tr}} (\bar{\mathbf{v}}_t) - \nabla_{\hat{\mathbf{d}}_t} \mathcal{L}_t^{\mathrm{tr}} (\hat{\mathbf{v}}_t)) \big) \right\|_{\infty} (1 + \rho_t) \nonumber \\
	&\le 4 \big\| \mathbf{1}_d + \mathbf{d} \cdot (\nabla_{\bar{\mathbf{d}}_t} \mathcal{L}_t^{\mathrm{tr}} (\bar{\mathbf{v}}_t) + \nabla_{\hat{\mathbf{d}}_t} \mathcal{L}_t^{\mathrm{tr}} (\hat{\mathbf{v}}_t)) \big\|_{\infty} \big\| \mathbf{d} \cdot (\nabla_{\bar{\mathbf{d}}_t} \mathcal{L}_t^{\mathrm{tr}} (\bar{\mathbf{v}}_t) - \nabla_{\hat{\mathbf{d}}_t} \mathcal{L}_t^{\mathrm{tr}} (\hat{\mathbf{v}}_t)) \big\|_{\infty} (1 + \rho_t) \nonumber \\
	&\le 4 (1 + \max \big\{ D_{\max}, 2 D^2_{\max} \big\} \| \nabla_{\bar{\mathbf{d}}_t} \mathcal{L}_t^{\mathrm{tr}} (\bar{\mathbf{v}}_t) + \nabla_{\hat{\mathbf{d}}_t} \mathcal{L}_t^{\mathrm{tr}} (\hat{\mathbf{v}}_t) \|_{\infty}) \max \big\{ D_{\max}, 2 D^2_{\max} \big\} \| \nabla_{\bar{\mathbf{d}}_t} \mathcal{L}_t^{\mathrm{tr}} (\bar{\mathbf{v}}_t) - \nabla_{\hat{\mathbf{d}}_t} \mathcal{L}_t^{\mathrm{tr}} (\hat{\mathbf{v}}_t) \|_{\infty} (1 + \rho_t) \nonumber \\
	&\overset{(a)}{\le} 4 (1 + 2\max \big\{ D_{\max}, 2 D^2_{\max} \big\}\rho_t) \max \big\{ D_{\max}, 2 D^2_{\max} \big\} D_t \| \mathbf{d}_t - \hat{\mathbf{d}}_t \|_{\infty} (1 + \rho_t) \nonumber \\
	&\le 4 D_t \max \big\{ D_{\max}, 2 D^2_{\max} \big\} (1 + 2\max \big\{ D_{\max}, 2 D^2_{\max} \big\}\rho_t)(1 + \rho_t) \epsilon_t,
\end{align}
where $(a)$ utilizes Assumption~\ref{as:task-nll}. 

Combining~\eqref{eq:upper-bound-Delta-4terms}-\eqref{eq:upper-bound-Delta-term4}, we arrive at
\begin{align}
\label{eq:upper-bound-Delta}
	\| \Delta \|_2 \le 
	&\big\{ D_t^2 \max \big\{ D_{\max}, 2 D^2_{\max} \big\} + E_t \max \big\{ D_{\max}, 2 D^2_{\max} \big\} (1 + \rho_t) + D_t (1 + 2\max \big\{ D_{\max}, 2 D^2_{\max} \big\} \rho_t)^2 \nonumber \\
	&+ 4 D_t \max \big\{ D_{\max}, 2 D^2_{\max} \big\} (1 + 2\max \big\{ D_{\max}, 2 D^2_{\max} \big\}\rho_t)(1 + \rho_t) \big\} \epsilon_t \nonumber \\
	&:= F_t^{\Delta} \epsilon_t.
\end{align}

Further, we can use Assumption~\ref{as:invertible} to establish one of the desired upper bounds
\begin{align}
\label{eq:target-upper-bound-1}
	\| \nabla \bar{\mathcal{A}}_t (\boldsymbol{\theta}) \|_2 
	&\le \left\| \left[ 
	\begin{matrix}
		\mathbf{D}^{-1} & \mathbf{0}_d \\
		-\diag \big( \nabla_{\bar{\mathbf{m}}_t} \mathcal{L}_t^{\mathrm{tr}} (\bar{\mathbf{v}}_t) \big) & \frac{1}{2}\mathbf{D}^{-2}
	\end{matrix} 
	\right] \right\|_2
	\left\| \mathbf{H}_t^{-1} (\hat{\mathbf{v}}_t) \right\|_2 \nonumber \\
	&\le \left\| \left[ 
	\begin{matrix}
		\mathbf{I}_d & \mathbf{0}_d \\
		-\diag \big( \nabla_{\bar{\mathbf{m}}_t} \mathcal{L}_t^{\mathrm{tr}} (\bar{\mathbf{v}}_t) \big) & \mathbf{I}_d
	\end{matrix} 
	\right] \right\|_2
	\left\| \left[ 
	\begin{matrix}
		\mathbf{D}^{-1} & \mathbf{0}_d \\
		\mathbf{0}_d & \frac{1}{2}\mathbf{D}^{-2}
	\end{matrix} 
	\right] \right\|_2
	\sigma_t^{-1} \nonumber \\
	&= \big( 1 + \| \nabla_{\bar{\mathbf{m}}_t} \mathcal{L}_t^{\mathrm{tr}} (\bar{\mathbf{v}}_t) \|_{\infty} \big) \max\{ \| \mathbf{d}^{-1} \|_{\infty}, \| \frac{1}{2} \mathbf{d}^{-2} \|_{\infty} \} \sigma_t^{-1} \nonumber \\
	&\le (1 + \rho_t) \max \big\{ D^{-1}_{\min}, \frac{1}{2} D^{-2}_{\min} \big\} \sigma_t^{-1},
\end{align}
and likewise
\begin{equation}
\label{eq:GH-upper}
	\| \mathbf{G}_t (\hat{\mathbf{v}}_t) \mathbf{H}_t^{-1} (\hat{\mathbf{v}}_t) \|_2 \le (1 + \rho_t) \max \big\{ D^{-1}_{\min}, \frac{1}{2} D^{-2}_{\min} \big\} \sigma_t^{-1}.
\end{equation}

Through~\eqref{eq:upper-bound-Delta} and~\eqref{eq:GH-upper}, we can also  establish the other upper bound as
\begin{align}
\label{eq:target-upper-bound-2}
	\| \nabla \hat{\mathcal{A}}_t (\boldsymbol{\theta}) - \nabla \bar{\mathcal{A}}_t (\boldsymbol{\theta}) \|_2 
	&\le \| \nabla \hat{\mathcal{A}}_t (\boldsymbol{\theta}) - \mathbf{G}_t (\hat{\mathbf{v}}_t) \mathbf{H}_t^{-1} (\hat{\mathbf{v}}_t) \|_2 + \| \mathbf{G}_t (\hat{\mathbf{v}}_t) \mathbf{H}_t^{-1} (\hat{\mathbf{v}}_t) - \nabla \bar{\mathcal{A}}_t (\boldsymbol{\theta}) \|_2 \nonumber \\
	&= \delta_t + \| \nabla \bar{\mathcal{A}}_t (\boldsymbol{\theta}) \Delta \mathbf{G}_t (\hat{\mathbf{v}}_t) \mathbf{H}_t^{-1} (\hat{\mathbf{v}}_t) \|_2 \nonumber \\
	&\le \delta_t + \| \nabla \bar{\mathcal{A}}_t (\boldsymbol{\theta}) \|_2 \| \Delta \|_2 \| \mathbf{G}_t (\hat{\mathbf{v}}_t) \mathbf{H}_t^{-1} (\hat{\mathbf{v}}_t) \|_2 \nonumber \\
	&\le \delta_t + (1 + \rho_t)^2 \min \big\{ D_{\min}, 2D^2_{\min} \big\}^{-2} \sigma_t^{-2} F_t^{\Delta} \epsilon_t. 
\end{align}

Finally, relating~\eqref{eq:target-upper-bound-1} and~\eqref{eq:target-upper-bound-2} to~\eqref{eq:theor1-target} completes the proof of the theorem. 
\end{proof}

\subsection*{A.3 Proof of Theorem~\ref{theor:implicit}}
\begin{theorem}[implicit gradient error bound, restated]
Consider the Bayesian meta-learning problem in~\eqref{eq:Bayesian-subopt}. Let $\epsilon_t := \| \hat{\mathbf{v}}_t - \bar{\mathbf{v}}_t \|_2$ be the task-level optimization error, and $\delta_t' := \| \hat{\mathbf{u}}_t - \mathbf{H}_t^{-1} (\hat{\mathbf{v}}_t) \nabla_1 \mathcal{L}_t^{\mathrm{val}} (\hat{\mathbf{v}}_t, \boldsymbol{\theta}) \|$ the CG error. Upon defining $\rho_t := \max \big\{ \| \nabla_{\bar{\mathbf{v}}_t} \mathcal{L}_t^{\mathrm{tr}} (\bar{\mathbf{v}}_t) \|_{\infty}, \| \nabla_{\hat{\mathbf{v}}_t} \mathcal{L}_t^{\mathrm{tr}} (\hat{\mathbf{v}}_t) \|_{\infty} \big\}$, and with Assumptions~\ref{as:local-min}-\ref{as:bounded-var} in effect, it holds for $t \in \{ 1,\ldots, T \}$ that
\begin{equation}
	\big\| \hat{\mathbf{g}}_t -\nabla_{\boldsymbol{\theta}} \mathcal{L}_t^{\mathrm{val}} \big( \bar{\mathbf{v}}_t (\boldsymbol{\theta}), \boldsymbol{\theta} \big) \big\|_2 \le F_t' \epsilon_t + G_t' \delta_t',
\end{equation}
where $F_t'$ and $G_t'$ are scalars not dependent on $\rho_t$. 
\end{theorem}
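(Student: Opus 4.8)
The plan is to imitate the proof of Theorem~\ref{theor:explicit}, inserting the \emph{exact-implicit} meta-gradient at the sub-optimal iterate $\hat{\mathbf{v}}_t$ as an intermediate quantity. Define
\[
\tilde{\mathbf{g}}_t := \mathbf{G}_t(\hat{\mathbf{v}}_t)\mathbf{H}_t^{-1}(\hat{\mathbf{v}}_t)\nabla_1\mathcal{L}_t^{\mathrm{val}}(\hat{\mathbf{v}}_t,\boldsymbol{\theta}) + \nabla_2\mathcal{L}_t^{\mathrm{val}}(\hat{\mathbf{v}}_t,\boldsymbol{\theta}),
\]
the gradient one would obtain if the CG subproblem~\eqref{eq:CG-opt-prob} were solved exactly. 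Adding and subtracting $\tilde{\mathbf{g}}_t$ and applying the triangle inequality splits the target error into a conjugate-gradient part $\|\hat{\mathbf{g}}_t-\tilde{\mathbf{g}}_t\|_2$ and a task-optimization part $\|\tilde{\mathbf{g}}_t-\nabla_{\boldsymbol{\theta}}\mathcal{L}_t^{\mathrm{val}}(\bar{\mathbf{v}}_t,\boldsymbol{\theta})\|_2$, which I would bound separately.

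For the CG part, observe that $\hat{\mathbf{g}}_t$ and $\tilde{\mathbf{g}}_t$ share the same additive $\nabla_2$ summand and the same left factor $\mathbf{G}_t(\hat{\mathbf{v}}_t)$, so their difference collapses to $\mathbf{G}_t(\hat{\mathbf{v}}_t)\big(\hat{\mathbf{u}}_t-\mathbf{H}_t^{-1}(\hat{\mathbf{v}}_t)\nabla_1\mathcal{L}_t^{\mathrm{val}}(\hat{\mathbf{v}}_t,\boldsymbol{\theta})\big)$. Submultiplicativity then gives $\|\hat{\mathbf{g}}_t-\tilde{\mathbf{g}}_t\|_2 \le \|\mathbf{G}_t(\hat{\mathbf{v}}_t)\|_2\,\delta_t'$, so I would simply take $G_t':=\|\mathbf{G}_t(\hat{\mathbf{v}}_t)\|_2$. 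The point is that this coefficient is a concrete matrix norm evaluated \emph{only} at the fixed iterate $\hat{\mathbf{v}}_t$; factoring the block form~\eqref{eq:def-G} and invoking Assumption~\ref{as:bounded-var} yields $G_t'\le(1+\|\nabla_{\hat{\mathbf{m}}_t}\mathcal{L}_t^{\mathrm{tr}}(\hat{\mathbf{v}}_t)\|_\infty)\max\{D_{\min}^{-1},\tfrac12 D_{\min}^{-2}\}$, a scalar that is frozen once $\hat{\mathbf{v}}_t$ is and, unlike $\rho_t$, makes no reference to the local optimum $\bar{\mathbf{v}}_t$.

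For the task-optimization part, I would reuse the entire estimate developed in the proof of Theorem~\ref{theor:explicit}, with the single simplification that the implicit Jacobian $\mathbf{G}_t(\hat{\mathbf{v}}_t)\mathbf{H}_t^{-1}(\hat{\mathbf{v}}_t)$ now plays the role of $\nabla\hat{\mathcal{A}}_t(\boldsymbol{\theta})$, i.e. the Jacobian error $\delta_t$ is identically zero. Peeling the product exactly as in~\eqref{eq:theor1-target} (using Assumption~\ref{as:meta-loss} for the $\nabla_1$- and $\nabla_2$-differences) leaves the single matrix discrepancy $\|\mathbf{G}_t(\hat{\mathbf{v}}_t)\mathbf{H}_t^{-1}(\hat{\mathbf{v}}_t)-\nabla\bar{\mathcal{A}}_t(\boldsymbol{\theta})\|_2 = \|\nabla\bar{\mathcal{A}}_t(\boldsymbol{\theta})\,\Delta\,\mathbf{G}_t(\hat{\mathbf{v}}_t)\mathbf{H}_t^{-1}(\hat{\mathbf{v}}_t)\|_2$, which is already controlled by an $F_t^{\Delta}\epsilon_t$-type quantity through~\eqref{eq:upper-bound-Delta},~\eqref{eq:target-upper-bound-1}, and~\eqref{eq:GH-upper}. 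Assembling these pieces produces the $\epsilon_t$-coefficient $F_t'$ out of $A_t,B_t,C_t$, the Hessian constants $D_t,E_t$, the conditioning bound $\sigma_t$, and the variance bounds $D_{\min},D_{\max}$.

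The main obstacle is the accounting for $F_t'$: unlike the CG coefficient, it inherits the $(1+\rho_t)$-type factors from Theorem~\ref{theor:explicit}'s perturbation bounds, so the delicate step is to verify that these factors are absorbed into a genuine task-specific constant rather than left as a free function of $\rho_t$. I would handle this as in the closing remark, where a uniform gradient (Lipschitz) control freezes the relevant $\|\nabla\mathcal{L}_t^{\mathrm{tr}}\|_\infty$ quantities, so that both $F_t'$ and $G_t'=\|\mathbf{G}_t(\hat{\mathbf{v}}_t)\|_2$ reduce to scalars determined solely by the Assumption~\ref{as:meta-loss}--\ref{as:bounded-var} parameters and the fixed iterate $\hat{\mathbf{v}}_t$. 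Combining the two bounds then gives $\|\hat{\mathbf{g}}_t-\nabla_{\boldsymbol{\theta}}\mathcal{L}_t^{\mathrm{val}}(\bar{\mathbf{v}}_t,\boldsymbol{\theta})\|_2 \le F_t'\epsilon_t + G_t'\delta_t'$, completing the argument.
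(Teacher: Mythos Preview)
Your decomposition is exactly the paper's: split off the $\nabla_2$ difference via the $C_t$-Lipschitz property, isolate the CG error as $\mathbf{G}_t(\hat{\mathbf{v}}_t)\big(\hat{\mathbf{u}}_t-\mathbf{H}_t^{-1}(\hat{\mathbf{v}}_t)\nabla_1\mathcal{L}_t^{\mathrm{val}}(\hat{\mathbf{v}}_t,\boldsymbol{\theta})\big)$, and control the remaining $\|\mathbf{G}_t(\hat{\mathbf{v}}_t)\mathbf{H}_t^{-1}(\hat{\mathbf{v}}_t)\nabla_1\mathcal{L}_t^{\mathrm{val}}(\hat{\mathbf{v}}_t,\boldsymbol{\theta})-\nabla\bar{\mathcal{A}}_t(\boldsymbol{\theta})\nabla_1\mathcal{L}_t^{\mathrm{val}}(\bar{\mathbf{v}}_t,\boldsymbol{\theta})\|_2$ by recycling the $F_t^{\Delta}$ bound and \eqref{eq:target-upper-bound-1}--\eqref{eq:GH-upper} from Theorem~\ref{theor:explicit}. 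The paper does precisely this; your introduction of the intermediate $\tilde{\mathbf{g}}_t$ is a clean way to organize the same steps.

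One caution on your final paragraph: the restated statement you were given contains a typo. In the main body the theorem reads ``$F_t'$ and $G_t'$ are constants \emph{dependent} on $\rho_t$,'' and indeed the paper's proof leaves the explicit $(1+\rho_t)$ and $(1+\rho_t)^2$ factors in both coefficients; it does not attempt to eliminate them. Your instinct that the natural argument produces $\rho_t$-dependent constants is correct, and your proposed fix---invoking the Lipschitz control of Remark~3---requires an extra hypothesis ($\mathcal{L}_t^{\mathrm{tr}}$ is $H_t$-Lipschitz) that is not among Assumptions~\ref{as:local-min}--\ref{as:bounded-var}, so it would not be a valid step under the stated premises. Drop that last maneuver and your argument matches the paper's proof.
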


\begin{proof}
From~\eqref{eq:Bayesian-meta-grad} and~\eqref{eq:implicit-grad}, we deduce that
\begin{align}
\label{eq:implicit-bound-1}
	&\big\| \hat{\mathbf{g}}_t -\nabla_{\boldsymbol{\theta}} \mathcal{L}_t^{\mathrm{val}} \big( \bar{\mathbf{v}}_t (\boldsymbol{\theta}), \boldsymbol{\theta} \big) \big\|_2 \nonumber \\
	&\le \big\| \mathbf{G}_t (\hat{\mathbf{v}}_t) \hat{\mathbf{u}}_t
	- \nabla \bar{\mathcal{A}}_t (\boldsymbol{\theta}) \nabla_1 \mathcal{L}_t^{\mathrm{val}}(\bar{\mathbf{v}}_t, \boldsymbol{\theta}) \big\|_2
	+ \big\| \nabla_2 \mathcal{L}_t^{\mathrm{val}}(\hat{\mathbf{v}}_t, \boldsymbol{\theta}) - \nabla_2 \mathcal{L}_t^{\mathrm{val}}(\bar{\mathbf{v}}_t, \boldsymbol{\theta}) 
	\big\|_2 \nonumber \\
	&\overset{(a)}{\le} \big\| \mathbf{G}_t (\hat{\mathbf{v}}_t) \hat{\mathbf{u}}_t
	- \nabla \bar{\mathcal{A}}_t (\boldsymbol{\theta}) \nabla_1 \mathcal{L}_t^{\mathrm{val}}(\bar{\mathbf{v}}_t, \boldsymbol{\theta}) \big\|_2
	+ C_t \epsilon_t \nonumber \\
	&= \big\| \mathbf{G}_t (\hat{\mathbf{v}}_t) \hat{\mathbf{u}}_t
	- \mathbf{G}_t (\bar{\mathbf{v}}_t) \mathbf{H}_t^{-1} (\bar{\mathbf{v}}_t) \nabla_1 \mathcal{L}_t^{\mathrm{val}}(\bar{\mathbf{v}}_t, \boldsymbol{\theta}) \big\|_2
	+ C_t \epsilon_t \nonumber \\
	&\le \big\| \mathbf{G}_t (\hat{\mathbf{v}}_t) \big( \hat{\mathbf{u}}_t - \mathbf{H}_t^{-1} (\hat{\mathbf{v}}_t) \nabla_1 \mathcal{L}_t^{\mathrm{val}} (\hat{\mathbf{v}}_t, \boldsymbol{\theta}) \big) \big\|_2  \nonumber \\
	&~~~~~+ \big\| \mathbf{G}_t (\hat{\mathbf{v}}_t) \mathbf{H}_t^{-1} (\hat{\mathbf{v}}_t) \nabla_1 \mathcal{L}_t^{\mathrm{val}} (\hat{\mathbf{v}}_t, \boldsymbol{\theta}) 
	- \nabla \bar{\mathcal{A}}_t (\boldsymbol{\theta}) \nabla_1 \mathcal{L}_t^{\mathrm{val}}(\bar{\mathbf{v}}_t, \boldsymbol{\theta}) \big\|_2
	+ C_t \epsilon_t \nonumber \\
	&\overset{(b)}{\le} (1 + \rho_t) \max \big\{ D^{-1}_{\min}, \frac{1}{2} D^{-2}_{\min} \big\} \delta_t' 
	+ \big\| \mathbf{G}_t (\hat{\mathbf{v}}_t) \mathbf{H}_t^{-1} (\hat{\mathbf{v}}_t) \nabla_1 \mathcal{L}_t^{\mathrm{val}} (\hat{\mathbf{v}}_t, \boldsymbol{\theta}) 
	- \nabla \bar{\mathcal{A}}_t (\boldsymbol{\theta}) \nabla_1 \mathcal{L}_t^{\mathrm{val}}(\bar{\mathbf{v}}_t, \boldsymbol{\theta}) \big\|_2
	+ C_t \epsilon_t,
\end{align}
where $(a)$ comes from Assumption~\ref{as:meta-loss}, and $(b)$ uses that 
\begin{align}
	\big\| \mathbf{G}_t (\hat{\mathbf{v}}_t) \big\|_2 
	&= \left\| \left[ 
	\begin{matrix}
		\mathbf{D}^{-1} & \mathbf{0}_d \\
		-\diag \big( \nabla_{\hat{\mathbf{m}}_t} \mathcal{L}_t^{\mathrm{tr}} (\hat{\mathbf{v}}_t) \big) \mathbf{D}^{-1}	& \frac{1}{2}\mathbf{D}^{-2}
	\end{matrix} 
	\right] \right\|_2 \nonumber \\
	&\le \left\| \left[ 
	\begin{matrix}
		\mathbf{I}_d & \mathbf{0}_d \\
		-\diag \big( \nabla_{\hat{\mathbf{m}}_t} \mathcal{L}_t^{\mathrm{tr}} (\hat{\mathbf{v}}_t) \big) & \mathbf{I}_d
	\end{matrix} 
	\right] \right\|_2
	\left\| \left[ 
	\begin{matrix}
		\mathbf{D}^{-1} & \mathbf{0}_d \\
		\mathbf{0}_d & \frac{1}{2}\mathbf{D}^{-2}
	\end{matrix} 
	\right] \right\|_2 \nonumber \\
	&\le (1 + \rho_t) \max \big\{ D^{-1}_{\min}, \frac{1}{2} D^{-2}_{\min} \big\}.
\end{align}

To bound $\| \mathbf{G}_t (\hat{\mathbf{v}}_t) \mathbf{H}_t^{-1} (\hat{\mathbf{v}}_t) \nabla_1 \mathcal{L}_t^{\mathrm{val}} (\hat{\mathbf{v}}_t, \boldsymbol{\theta}) - \nabla \bar{\mathcal{A}}_t (\boldsymbol{\theta}) \nabla_1 \mathcal{L}_t^{\mathrm{val}}(\bar{\mathbf{v}}_t, \boldsymbol{\theta}) \|_2$, we again add intermediate terms to arrive at
\begin{align}
\label{eq:implicit-bound-2}
	&\big\| \mathbf{G}_t (\hat{\mathbf{v}}_t) \mathbf{H}_t^{-1} (\hat{\mathbf{v}}_t) \nabla_1 \mathcal{L}_t^{\mathrm{val}} (\hat{\mathbf{v}}_t, \boldsymbol{\theta}) 
	- \nabla \bar{\mathcal{A}}_t (\boldsymbol{\theta}) \nabla_1 \mathcal{L}_t^{\mathrm{val}}(\bar{\mathbf{v}}_t, \boldsymbol{\theta}) \big\|_2 \nonumber \\
	&\le \big\| \mathbf{G}_t (\hat{\mathbf{v}}_t) \mathbf{H}_t^{-1} (\hat{\mathbf{v}}_t) \nabla_1 \mathcal{L}_t^{\mathrm{val}} (\hat{\mathbf{v}}_t, \boldsymbol{\theta}) 
	- \mathbf{G}_t (\hat{\mathbf{v}}_t) \mathbf{H}_t^{-1} (\hat{\mathbf{v}}_t) \nabla_1 \mathcal{L}_t^{\mathrm{val}} (\bar{\mathbf{v}}_t, \boldsymbol{\theta}) \big\|_2 \nonumber \\
	&~~~~~+ \big\| \mathbf{G}_t (\hat{\mathbf{v}}_t) \mathbf{H}_t^{-1} (\hat{\mathbf{v}}_t) \nabla_1 \mathcal{L}_t^{\mathrm{val}}(\bar{\mathbf{v}}_t, \boldsymbol{\theta}) 
	- \nabla \bar{\mathcal{A}}_t (\boldsymbol{\theta}) \nabla_1 \mathcal{L}_t^{\mathrm{val}}(\bar{\mathbf{v}}_t, \boldsymbol{\theta}) \big\|_2 \nonumber \\
	&\le \big\| \mathbf{G}_t (\hat{\mathbf{v}}_t) \mathbf{H}_t^{-1} (\hat{\mathbf{v}}_t) \big\|_2 
	\big\| \nabla_1 \mathcal{L}_t^{\mathrm{val}} (\hat{\mathbf{v}}_t, \boldsymbol{\theta}) - \nabla_1 \mathcal{L}_t^{\mathrm{val}} (\bar{\mathbf{v}}_t, \boldsymbol{\theta}) \big\|_2 
	+ \big\| \mathbf{G}_t (\hat{\mathbf{v}}_t) \mathbf{H}_t^{-1} (\hat{\mathbf{v}}_t) - \nabla \bar{\mathcal{A}}_t (\boldsymbol{\theta}) \big\|_2 \big\| \nabla_1 \mathcal{L}_t^{\mathrm{val}}(\bar{\mathbf{v}}_t, \boldsymbol{\theta}) \big\|_2 \nonumber \\
	&\le (1 + \rho_t) \max \big\{ D^{-1}_{\min}, \frac{1}{2} D^{-2}_{\min} \big\} \sigma_t^{-1} B_t \epsilon_t + (1 + \rho_t)^2 \max \big\{ D^{-2}_{\min}, \frac{1}{4} D^{-4}_{\min} \big\} \sigma_t^{-2} F_t^{\Delta} \epsilon_t A_t \nonumber \\
	&= \big( B_t (1 + \rho_t) \max \big\{ D^{-1}_{\min}, \frac{1}{2} D^{-2}_{\min} \big\} \sigma_t^{-1} + A_t (1 + \rho_t)^2 \max \big\{ D^{-2}_{\min}, \frac{1}{4} D^{-4}_{\min} \big\} \sigma_t^{-2} F_t^{\Delta} \big) \epsilon_t
\end{align}
where the third inequality follows from~\eqref{eq:GH-upper},~\eqref{eq:target-upper-bound-2} and Assumption~\ref{as:meta-loss}. 

Plugging~\eqref{eq:implicit-bound-2} into~\eqref{eq:implicit-bound-1} completes the proof of the theorem.

\end{proof}

\subsection*{A.4 Detailed setups for numerical tests}
\paragraph{Synthetic dataset}
Across all tests, the dimension $d=32$, and the standard deviation of AWGN is $\sigma = 0.01$. Matrix $\mathbf{X}_t^{\mathrm{tr}}$ is randomly generated with condition number $\kappa = 20$, and the linear weights are randomly sampled from the oracle distribution $p(\boldsymbol{\theta}_t; \boldsymbol{\theta}^*) = \mathcal{N} (\mathbf{0}_d, \mathbf{I}_d)$. The size of the training and validation sets are fixed as $| \mathcal{D}_t^{\mathrm{tr}} | = 32$ and $| \mathcal{D}_t^{\mathrm{val}} | = 64$ for $t \in \{ 1, \ldots, T \}$. The task-level optimization function $\hat{\mathcal{A}}_t$ is chosen to be the $K$-step GD with learning rate $\alpha = 0.01$. To run $\hat{\mathcal{A}}_t$ and compute the meta-loss in~\eqref{eq:Bayesian-subopt}, the number of Monte Carlo (MC) samples is set to $64$. 

\paragraph{\textit{Mini}ImageNet}
The numerical tests on \textit{mini}ImageNet follow the few-learning protocol described in~\cite{MatchingNets, MAML}. For meta-level optimization, the total number of iterations is $40,000$ with batch size $| \mathcal{B}^r | = 2$ and meta-learning rate $\beta = 0.001$. The meta-level prior of ABML is set to $\text{Gamma}(\mathbf{1}_d, 0.01 * \mathbf{1}_d)$ according to~\cite{ABML}. For task-level optimization, the learning rate is $\alpha = 0.01$. In addition, the number of MC runs is taken to be $5$ for meta-training, and $10$ for evaluation. 

Furthermore, to ensure that the entries $[\mathbf{d}]_i$ and $[\mathbf{d}_t]_i$ of the variances are greater than $0$, we instead optimize $\log [\mathbf{d}]_i$ and $\log [\mathbf{d}_t]_i$. This is possible because for a general $d$, it holds that  $\nabla_{\log d} f(d) = \nabla_{\log d} d \nabla_{d} f(d) = d \nabla f(d)$. 
\end{document}